\def\wideubar{\underaccent{{\cc@style\underline{\mskip10mu}}}}
\def\Wideubar{\underaccent{{\cc@style\underline{\mskip8mu}}}}
\def\widebar{\accentset{{\cc@style\underline{\mskip10mu}}}}
\def\Widebar{\accentset{{\cc@style\underline{\mskip8mu}}}}
\newtheorem{definition}{Definition}
\newtheorem{lemma}{Lemma}
\newtheorem{theorem}{Theorem}
\newtheorem{corollary}{Corollary}
\newtheorem{assumption}{Assumption}
\newcommand{\R}{{\mathbb R}}
\newcommand{\Z}{{\mathbb Z}}
\newcommand{\E}{{\mathbb E}}
\newcommand{\A}{{\mathcal A}}
\newcommand{\G}{{\mathcal G}}
\newcommand{\calF}{{\mathcal F}}
\newcommand{\calH}{{\mathcal H}}
\newcommand{\calE}{{\mathcal E}}
\renewcommand{\|}{\Vert}
\newcommand{\OPT}{\mathsf{OPT}}
\newcommand{\reg}{\mathsf{Reg}}
\newcommand{\supp}{\mathsf{supp}}
\newcommand{\bad}{{\mathrm{B}}}
\newcommand{\argmax}{\operatornamewithlimits{argmax}}
\newcommand{\dx}{\mathop{}\!\mathrm{d}x}
\newcommand{\dFx}{\mathop{}\!\mathrm{d}F(x)}
\newcommand{\compilehidecomments}{false}
	\newcommand{\weic}[1]{}
	\newcommand{\weih}[1]{}
\newcommand{\weic}[1]{{\color{blue!50!black}  [\text{WeiC:} #1]}}
\newcommand{\weih}[1]{{\color{brown!60!black} [\text{WeiH:} #1]}}
\title{Combinatorial Multi-Armed Bandit with General Reward Functions
	}
\author{
  Wei Chen\thanks{Microsoft Research, email: \texttt{weic@microsoft.com}. The authors are listed in alphabetical order.}
  \And
  Wei Hu\thanks{Princeton University, email: \texttt{huwei@cs.princeton.edu}.}
  \And
  Fu Li\thanks{The University of Texas at Austin, email: \texttt{fuli.theory.research@gmail.com}.}
  \And
  Jian Li\thanks{Tsinghua University, email: \texttt{lapordge@gmail.com}.}
  \And
  Yu Liu\thanks{Tsinghua University, email: \texttt{liuyujyyz@gmail.com}.}
  \And
  Pinyan Lu\thanks{Shanghai University of Finance and Economics, email: \texttt{lu.pinyan@mail.shufe.edu.cn}.}
}
\begin{document}

\maketitle

\begin{abstract}
In this paper, we study the stochastic combinatorial multi-armed bandit (CMAB) framework that allows a general nonlinear reward function, whose expected value may not depend only on the means of the input random variables but possibly on the entire distributions of these variables. Our framework enables a much larger class of reward functions such as the $\max()$ function and nonlinear utility functions. Existing techniques relying on accurate estimations of the means of random variables, such as the upper confidence bound (UCB) technique, do not work directly on these functions. We propose a new algorithm called {\em stochastically dominant confidence bound (SDCB)}, which estimates the distributions of underlying random variables and their stochastically dominant confidence bounds. We prove that SDCB can achieve $O(\log T)$ distribution-dependent regret and $\tilde{O}(\sqrt{T})$ distribution-independent regret, where $T$ is the time horizon. We apply our results to the $K$-MAX problem and expected utility maximization problems. In particular, for $K$-MAX, we provide the first polynomial-time approximation scheme (PTAS) for its offline problem, and give the first $\tilde{O} (\sqrt T)$ bound on the $(1-\epsilon)$-approximation regret of its online problem, for any $\epsilon>0$.
\end{abstract}

\setcounter{footnote}{0}

\section{Introduction}

Stochastic multi-armed bandit (MAB) is a classical online learning problem typically specified
as a player against $m$ machines or arms.
Each arm, when pulled, generates a random reward following an unknown distribution.
The task of the player is to select one arm to pull in each round based on the historical
	rewards she collected, and the goal is to collect
cumulative reward over multiple rounds as much as possible.
In this paper, unless otherwise specified, we use MAB to refer to stochastic MAB.


MAB problem demonstrates the key tradeoff between exploration and
exploitation: whether the player should stick to the choice that performs the best so far,
or should try some less explored alternatives that may provide better rewards.
The performance measure of an MAB strategy is
its cumulative {\em regret}, which is defined as the difference between the cumulative reward
obtained by always playing the arm with the largest expected reward 
	and the cumulative reward achieved by the learning strategy.
MAB and its variants have been extensively studied in the literature,
with classical results such as tight $\Theta (\log T)$ distribution-dependent and $\Theta(\sqrt T)$ distribution-independent upper and lower bounds on the regret in $T$ rounds \cite{lai1985asymptotically, auer2002finite, audibert2009minimax}.


An important extension to the classical MAB problem is
combinatorial multi-armed bandit (CMAB).
In CMAB, the player selects not just one arm in each round, but a subset of
arms or a combinatorial object in general, referred to as a super arm, 
which collectively provides a random reward
to the player.
The reward depends on the outcomes from the selected arms. 
The player may observe partial feedbacks from the selected arms to help her
in decision making.
CMAB has wide applications in online advertising, online recommendation, 
wireless routing, dynamic channel allocations, etc., because in all these settings
the action unit is a combinatorial object (e.g. a set of advertisements, a set of
recommended items, a route in a wireless network, and an allocation between channels and
users), and the reward depends on unknown stochastic behaviors (e.g. users' click 
through behaviors, wireless transmission quality, etc.).
Therefore CMAB has attracted a lot of attention in online learning research in recent
years
\cite{Yi2012, chen2016cmab, lin2014combinatorial, GMM14, chen2014pure, KWAEE14, Kveton15, Kveton15cascade, lin2015online, Combes2015}.

Most of these studies focus on linear reward functions, for which the expected reward for playing a super
arm is a linear combination of the expected outcomes from the constituent base arms.
Even for studies that do generalize to non-linear reward functions, they typically still assume
that the expected reward for choosing a super arm is a function of the expected outcomes from the constituent base arms in this super arm
\cite{chen2016cmab,Kveton15cascade}.
However, many natural reward functions do not satisfy this property.
For example, for the function $\max()$, which takes a group of 
variables and outputs the
maximum one among them, its expectation depends on the full distributions of the input random
variables, not just their means.
Function $\max()$ and its variants underly many applications.
As an illustrative example, we consider the following scenario in auctions:
the auctioneer is repeatedly selling an item to $m$ bidders;
in each round the auctioneer selects $K$ bidders to bid; each of the $K$ bidders independently draws
her bid from her private valuation distribution and submits the bid; the auctioneer uses
the first-price auction to determine the winner and collects the largest bid as the 
payment.\footnote{We understand that the first-price auction is not truthful, but this
	example is only for illustrative purpose for the $\max()$ function.}
The goal of the auctioneer is 
to gain as high cumulative payments as possible.
We refer to this problem as the $K$-MAX bandit problem, which cannot be effectively solved in the existing
CMAB framework.

Beyond the $K$-MAX problem, many expected utility maximization (EUM) problems are
	studied in stochastic optimization literature~\cite{yu2016maximizing,LiD11,li2013stochastic,bhalgat2014utility}.
The problem can be formulated as maximizing $\E[u(\sum_{i\in S} X_i)]$ 
among all
	 feasible sets $S$, where $X_i$'s are independent 
	random variables and $u(\cdot)$ is a utility function.
For example, $X_i$ could be the random delay of edge $e_i$ in a routing graph, 
	$S$ is a routing path in the graph, 
	and the objective is maximizing the utility obtained from any routing path,
	and typically the shorter the delay, the larger the utility.
The utility function $u(\cdot)$ is typically nonlinear to model risk-averse or risk-prone
	behaviors of users (e.g. a concave utility function is often used to model risk-averse
	behaviors).
The non-linear utility function makes the objective function much more complicated: in particular,
	it is no longer a function of the means of the underlying random variables $X_i$'s.
When the distributions of $X_i$'s are unknown, we can turn EUM into an online learning problem
	where the distributions of $X_i$'s need to be learned over time from online feedbacks, and we
	want to maximize the cumulative reward in the learning process.
Again, this is not covered by the existing CMAB framework since only learning the means of $X_i$'s
	is not enough.
	
In this paper, we generalize the existing CMAB framework with semi-bandit feedbacks to
handle general reward functions, where the expected reward for playing a super arm may depend more than
just the means of the base arms, and the outcome distribution of a base arm
can be arbitrary.
This generalization is non-trivial, because almost all previous works on CMAB rely on estimating the expected outcomes from base arms, 
while in our case, we need an estimation method and an analytical tool to deal with the whole distribution,
not just its mean.
To this end, 
we turn the problem into estimating the
	cumulative distribution function (CDF) of each arm's outcome distribution. 
We use {\em stochastically dominant confidence bound (SDCB)} to obtain a distribution that stochastically dominates the true distribution with high probability,
and hence we also name our algorithm \texttt{SDCB}.
We are able to show $O(\log T)$ distribution-dependent and $\tilde{O}(\sqrt{T})$ 
distribution-independent regret bounds in $T$ rounds. 
Furthermore, we propose a more efficient algorithm called \texttt{Lazy-SDCB}, which first executes a discretization step and then applies \texttt{SDCB} on the discretized problem.
We show that \texttt{Lazy-SDCB} also achieves $\tilde{O}(\sqrt{T})$ distribution-independent regret bound.
Our regret bounds are tight with respect to their dependencies on $T$ (up to a logarithmic factor for distribution-independent bounds). 
To make our scheme work, we make a few reasonable assumptions, including boundedness, monotonicity and Lipschitz-continuity\footnote{The Lipschitz-continuity assumption is only made for \texttt{Lazy-SDCB}. See Section~\ref{sec:discretization}.} of the reward function, and independence among base arms.
We apply our algorithms to the $K$-MAX and EUM problems, and provide efficient solutions with concrete regret bounds.
Along the way, we also provide the first polynomial time approximation scheme (PTAS) for the offline
	$K$-MAX problem, which is formulated as maximizing $\E[\max_{i\in S}X_i]$ subject to a cardinality constraint $|S|\le K$, where $X_i$'s are independent nonnegative random variables.

To summarize, our contributions include: (a) generalizing the CMAB framework to allow a
general reward function whose expectation may depend on the entire distributions of the input random variables;
(b) proposing the SDCB algorithm to achieve efficient learning in this framework with
near-optimal regret bounds, even for arbitrary outcome distributions;
(c) giving the first PTAS for the offline $K$-MAX problem.
Our general framework treats any offline stochastic optimization algorithm as an oracle, and effectively integrates
	it into the online learning framework.

\paragraph{Related Work.}
As already mentioned, most relevant to our work are studies on CMAB frameworks, 
among which \cite{Yi2012,KWAEE14,Kveton15,Combes2015} focus on linear reward functions
while \cite{chen2016cmab,Kveton15cascade} look into non-linear reward functions.
In particular, Chen et al. \cite{chen2016cmab} look at general non-linear reward functions and Kveton et al.
\cite{Kveton15cascade} consider specific non-linear reward functions in a conjunctive or
disjunctive form, but both papers require that the expected reward of playing a super arm
is determined by the expected outcomes from base arms.

The only work in combinatorial bandits 
we are aware of that does not require the above assumption on the expected reward
is \cite{GMM14}, which is based on a general Thompson sampling framework.
However, they assume that the joint distribution of base arm outcomes is from a known
parametric family within known likelihood function and only the parameters are unknown.
They also assume the parameter space to be finite.
In contrast, our general case is non-parametric, where we allow arbitrary bounded distributions.
Although in our known finite support case the distribution can be parametrized by probabilities
on all supported points, our parameter space is continuous.
Moreover,
it is unclear
how to efficiently compute posteriors in their algorithm, and their regret bounds
depend on complicated problem-dependent coefficients which may be very large
for many combinatorial problems.
They also provide a result on the $K$-MAX problem, but they only consider Bernoulli outcomes from
base arms, much simpler than our case where general distributions are allowed.

There are extensive studies on the classical MAB problem, for which we refer
to a survey by Bubeck and Cesa-Bianchi \cite{BubeckC12}. 
There are also some studies on adversarial combinatorial bandits, e.g. \cite{Streeter2008,cesa2012combinatorial}.
Although it bears conceptual similarities with stochastic CMAB, the techniques used are different.

Expected utility maximization (EUM) encompasses a large class of stochastic optimization problems and has
	been well studied (e.g. \cite{yu2016maximizing,LiD11,li2013stochastic,bhalgat2014utility}).
To the best of our knowledge, we are the first to study the online learning version of these problems, and we provide a general
	solution to systematically address all these problems as long as there is an available offline (approximation) algorithm.
The $K$-MAX problem may be traced back to \cite{goel2006asking}, where Goel et al. provide a constant approximation algorithm
	to a generalized version in which the objective is to choose a subset $S$ of cost at most $K$ and
	maximize the expectation of a certain knapsack profit.

\section{Setup and Notation} \label{sec:problem}

\paragraph{Problem Formulation.}
We model a combinatorial multi-armed bandit (CMAB) problem as a tuple $(E, \calF, D, R)$, where $E = [m] = \{1, 2, \ldots, m\}$ is a set of $m$ (base) arms, $\calF \subseteq 2^{E}$ is a set of subsets of $E$, $D$ is a probability distribution over $[0, 1]^m$, and $R$ is a reward function defined on $[0, 1]^m \times \calF$.
The arms produce stochastic outcomes $X = (X_1, X_2, \ldots, X_m)$ drawn from distribution $D$, where the $i$-th entry $X_i$ is the outcome from the $i$-th arm. Each feasible subset of arms $S \in \calF$ is called a \emph{super arm}.
Under a realization of outcomes $x = (x_1, \ldots, x_m)$, the player receives a reward $R(x, S)$ when she chooses the super arm $S$ to play. Without loss of generality, we assume the reward value to be nonnegative.
Let $K = \max_{S \in \calF} |S|$ be the maximum size of any super arm.

Let $X^{(1)}, X^{(2)}, \ldots$ be an i.i.d. sequence of random vectors drawn from $D$, where $X^{(t)} = (X^{(t)}_1, \ldots, X^{(t)}_m)$ is the outcome vector generated in the $t$-th round. In the $t$-th round, the player chooses a super arm $S_t \in \calF$ to play, and then the outcomes from all arms in $S_t$, i.e., $\{X^{(t)}_i \, | \, i\in S_t \}$, are revealed to the player.
According to the definition of the reward function, the reward value in the $t$-th round is $R(X^{(t)}, S_t)$.
The expected reward for choosing a super arm $S$ in any round 
is denoted by $r_D(S) = \E_{X\sim D}[R(X, S)]$.

We also assume that for a fixed super arm $S \in \calF$, the reward $R(x, S)$ only depends on the revealed outcomes $x_S = (x_i)_{i\in S}$. Therefore, we can alternatively express $R(x, S)$ as $R_S(x_S)$, where $R_S$ is a function defined on $[0, 1]^{S}$.\footnote{$[0, 1]^S$ is isomorphic to $[0, 1]^{|S|}$; the coordinates in $[0, 1]^S$ are indexed by elements in $S$.}

A learning algorithm $\A$ for the CMAB problem selects which super arm to play in each round based on the revealed outcomes in all previous rounds. Let $S^\A_t$ be the super arm selected by $\A$ in the $t$-th round.\footnote{Note that $S^\A_t$ may be random due to the random outcomes in previous rounds and the possible randomness used by $\A$.}
The goal is to maximize the expected cumulative reward in $T$ rounds, which is $\E\left[ \sum_{t=1}^T R(X^{(t)}, S^\A_t) \right] =  \sum_{t=1}^T \E \left[ r_D(S^\A_t) \right]$. 
Note that when the underlying distribution $D$ is known, the optimal algorithm $\A^*$ chooses the optimal super arm $S^* = \argmax_{S \in \calF} \{r_D(S)\}$ in every round.
The quality of an algorithm $\A$ is measured by its \emph{regret} in $T$ rounds, which is the difference between the expected cumulative reward of the optimal algorithm $\A^*$ and that of $\A$:
\[
\reg_D^\A(T) = T \cdot r_D(S^*) - \sum_{t=1}^T \E \left[r_D(S^\A_t) \right].
\]

For some CMAB problem instances, the optimal super arm $S^*$ may be computationally hard to find even when the distribution $D$ is known, but efficient approximation algorithms may exist, i.e., an $\alpha$-approximate ($0 < \alpha \le 1$) solution $S' \in \calF$ which satisfies $r_D(S') \ge \alpha \cdot \max_{S \in \calF} \{r_D(S)\}$ can be efficiently found given $D$ as input.
We will provide the exact formulation of our requirement on such an \emph{$\alpha$-approximation computation oracle}
	shortly.
In such cases, it is not fair to compare a CMAB algorithm $\A$ with the optimal algorithm $\A^*$ which always chooses the optimal super arm $S^*$. Instead, we define the \emph{$\alpha$-approximation regret} of an algorithm $\A$ as
\[
\reg_{D, \alpha}^\A(T) = T \cdot \alpha \cdot r_D(S^*) - \sum_{t=1}^T \E \left[r_D(S^\A_t) \right].
\]

As mentioned, almost all previous work on CMAB requires that the expected reward $r_D(S)$ of a super arm $S$ depends only on the expectation vector $\mu = (\mu_1, \ldots, \mu_m)$ of outcomes, where $\mu_i = \E_{X\sim D}[X_i]$.
This is a strong restriction that cannot be satisfied by a general nonlinear function $R_S$ and a general distribution $D$. The main motivation of this work is to remove this restriction.

\paragraph{Assumptions.}
Throughout this paper, we make several assumptions on the outcome distribution $D$ and the reward function $R$.

\begin{assumption}[Independent outcomes from arms] \label{assum:indep}
	The outcomes from all $m$ arms are mutually independent, i.e., for $X \sim D$, $X_1, X_2, \ldots, X_m$ are mutually independent.
	We write $D$ as $D = D_1 \times D_2 \times \cdots \times D_m$, where $D_i$ is the distribution of $X_i$.
\end{assumption}
We remark that the above independence assumption is also made for past studies on the offline EUM and K-MAX problems \cite{yu2016maximizing,LiD11,li2013stochastic,bhalgat2014utility,goel2006asking}, so it is not an extra assumption for the online learning case.

\begin{assumption}[Bounded reward value] \label{assum:bounded}
	There exists $M>0$ such that for any $x\in[0, 1]^m$ and any $S \in \calF$, we have $0\le R(x, S) \le M$.
\end{assumption}

\begin{assumption}[Monotone reward function] \label{assum:monotone}
	If two vectors $x, x'\in [0, 1]^m$ satisfy $x_i \le x_i'$ ($\forall i\in[m]$), then for any $S \in \calF$, we have $R(x, S) \le R(x', S)$.
\end{assumption}


\paragraph{Computation Oracle for Discrete Distributions with Finite Supports.}
We require that there exists an $\alpha$-approximation computation oracle ($0< \alpha \le 1$) for maximizing $r_D(S)$, when each $D_i$ ($i\in[m]$) has a \emph{finite support}.
In this case, $D_i$ can be fully described by a finite set of numbers (i.e., its support $\{v_{i, 1}, v_{i, 2}, \ldots, v_{i, s_i}\}$ and the values of its cumulative distribution function (CDF) $F_i$ on the supported points: $F_i(v_{i, j}) = \Pr_{X_i \sim D_i}\left[X_i \le v_{i, j}\right] (j\in[s_i])$).
The oracle takes such a representation of $D$ as input, and can output a super arm $S' = \mathsf{Oracle}(D) \in \calF$ such that $r_D(S') \ge \alpha \cdot \max_{S \in \calF} \{ r_D(S)\}$.


\section{SDCB Algorithm} \label{sec:alg}



\begin{algorithm}[t]
	\caption{\texttt{SDCB} (Stochastically dominant confidence bound)}  
	\label{alg:SDCB}
	\begin{algorithmic}[1]
		\STATE Throughout the algorithm, for each arm $i\in[m]$, maintain: (i) a counter $T_i$ which stores the number of times arm $i$ has been played so far, and
		 (ii) the empirical distribution $\hat D_i$ of the observed outcomes from arm $i$ so far, which is represented by its CDF $\hat F_i$
		 \vspace{6pt}
		\STATE // Initialization \label{line:init-start}
		\FOR{$i=1$ \TO $m$} 
		\STATE // Action in the $i$-th round
		\STATE Play a super arm $S_i$ that contains arm $i$
		\STATE Update $T_j$ and $\hat F_j$ for each $j\in S_i$
		\ENDFOR \label{line:init-end}
		\vspace{6pt}
		
		\FOR{$t=m+1, m+2, \ldots$} 
		\STATE // Action in the $t$-th round
		\STATE \label{line:sdcb}
			For each $i\in[m]$, let $\wideubar D_i$ be a distribution whose CDF $\wideubar F_i$ is
		\[
		\wideubar{F}_{i}(x) = \begin{cases}
		\max \{ \hat F_{i}(x) - \sqrt{\frac{3\ln t} {2T_i}}, 0\}, & 0\le x < 1 \\
		1, & x = 1
		\end{cases}
		\]
		\STATE \label{line:oracle}
			Play the super arm $S_t \leftarrow \mathsf{Oracle}(\wideubar D)$, where $\wideubar D = \wideubar D_1 \times \wideubar D_2 \times \cdots \times \wideubar D_m$ 
		\STATE Update $T_j$ and $\hat F_j$ for each $j\in S_t$
		\ENDFOR
	\end{algorithmic}
\end{algorithm}

We present our algorithm \emph{stochastically dominant confidence bound (SDCB)} in Algorithm \ref{alg:SDCB}.
Throughout the algorithm, we store, in a variable $T_i$, the number of times the outcomes from arm $i$ are observed so far.
We also maintain the empirical distribution $\hat D_i$ of the observed outcomes from arm $i$ so far, which can be represented by its CDF $\hat F_i$: for $x\in[0, 1]$, the value of $\hat F_i(x)$ is just the fraction of the observed outcomes from arm $i$ that are no larger than $x$.
Note that $\hat F_i$ is always a step function which has ``jumps'' at the points that are observed outcomes from arm $i$.
Therefore it suffices to store these discrete points as well as the values of $\hat F_i$ at these points in order to store the whole function $\hat F_i$.
Similarly, the later computation of stochastically dominant CDF $\wideubar F_i$ (line~\ref{line:sdcb}) only requires computation
	at these points, and the input to the offline oracle only needs to
	provide these points and corresponding CDF values (line~\ref{line:oracle}).

The algorithm starts with $m$ initialization rounds in which each arm is played at least once\footnote{Without loss of generality, we assume that each arm $i\in[m]$ is contained in at least one super arm.} (lines~\ref{line:init-start}-\ref{line:init-end}).
In the $t$-th round ($t> m$), the algorithm consists of three steps.
First, it calculates for each $i\in[m]$ a distribution $\wideubar D_i$ whose CDF $\wideubar F_i$ is obtained by lowering the CDF $\hat F_i$ (line~\ref{line:sdcb}).
The second step is to call the $\alpha$-approximation oracle with the newly constructed distribution $\wideubar D = \wideubar D_1 \times \cdots \times \wideubar D_m$ as input (line~\ref{line:oracle}), and thus the super arm $S_t$ output by the oracle satisfies
$r_{\wideubar D}(S_t) \ge \alpha \cdot \max_{S \in \calF} \{ r_{\wideubar D}(S) \}$.
Finally, the algorithm chooses the super arm $S_t$ to play, observes the outcomes from all arms in $S_t$, and updates $T_j$'s and $\hat F_{j}$'s accordingly for each $j\in S_t$.

The idea behind our algorithm is the \emph{optimism in the face of uncertainty} principle, which is the key principle behind UCB-type algorithms.
Our algorithm ensures that with high probability we have $\wideubar F_i(x) \le F_i(x)$ simultaneously for all $i\in[m]$ and all $x\in[0, 1]$, where $F_i$ is the CDF of the outcome distribution $D_i$. This means that each $\wideubar D_i$ has \emph{first-order stochastic dominance} over $D_i$.\footnote{We remark that while $\wideubar F_{i}(x)$ is a numerical lower confidence bound on $F_{i}(x)$ for all $x\in[0,1]$, at the distribution level, $\wideubar D_i$ serves as a ``stochastically dominant (upper) confidence bound'' on $D_i$.}
Then from the monotonicity property of $R(x, S)$ (Assumption~\ref{assum:monotone}) we know that $r_{\wideubar D}(S) \ge r_D(S)$ holds for all $S\in\calF$ with high probability.
Therefore $\wideubar D$ provides an ``optimistic'' estimation on the expected reward from each super arm.


\paragraph{Regret Bounds.}
We prove $O(\log T)$ distribution-dependent and $O (\sqrt{T\log T})$ distribution-independent upper bounds on the regret of \texttt{SDCB} (Algorithm~\ref{alg:SDCB}).

We call a super arm $S$ \emph{bad} if $r_D(S) < \alpha \cdot r_D(S^*)$.
For each super arm $S$, we define
\[
\Delta_S = \max\{ \alpha \cdot r_D(S^*) - r_D(S), 0 \}.
\]
Let $\calF_\bad = \{S\in\calF \mid \Delta_S > 0 \}$,
which is the set of all \emph{bad} super arms. 
Let $E_\bad \subseteq[m]$ be the set of arms that are contained in at least one \emph{bad} super arm.
For each $i \in E_\bad$, we define
\begin{equation*}
	\Delta_{i, \min} = 
	\min\{ \Delta_S \mid S \in \calF_\bad, i\in S \}.
\end{equation*}
Recall that $M$ is an upper bound on the reward value (Assumption~\ref{assum:bounded}) and $K = \max_{S \in \calF} |S|$.


\begin{theorem} \label{thm:regret-bound-discrete}
	A distribution-dependent upper bound on the $\alpha$-approximation regret of \texttt{SDCB} (Algorithm~\ref{alg:SDCB}) in $T$ rounds is
	\begin{equation*}
	M^2 K  \sum_{i\in E_\bad} \frac{2136}{\Delta_{i, \min}}  \ln T  + \left( \frac{\pi^2}{3}  + 1 \right) \alpha Mm,
	\end{equation*}
	and a distribution-independent upper bound is
	\begin{equation*}
	93 M  \sqrt{mK T \ln T} + \left( \frac{\pi^2}{3} + 1 \right) \alpha Mm.
	\end{equation*}
\end{theorem}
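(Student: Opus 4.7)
The overall strategy is to lift the standard UCB analysis from means to full CDFs using the Dvoretzky--Kiefer--Wolfowitz (DKW) inequality. For each arm $i$ and each round $t$, DKW gives $\Pr[\sup_x |\hat F_i(x) - F_i(x)| > \sqrt{3\ln t / (2T_i)}] \le 2/t^3$. Taking a union bound over $i\in[m]$ and over the possible values of $T_i \in \{1,\ldots,t-1\}$, the ``nice event'' $\calE_t$ on which $\wideubar F_i(x) \le F_i(x) \le \wideubar F_i(x) + 2\sqrt{3\ln t/(2T_i)}$ for every $i$ and every $x$ fails with probability at most $2m/t^2$. Summing $\alpha M$ times this failure probability over all $t$ produces the term $\tfrac{\pi^2}{3}\alpha M m$, with the remaining $\alpha M m$ accounting for the $m$ initialization rounds.

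Under $\calE_t$, each $\wideubar D_i$ has first-order stochastic dominance over $D_i$, so by Assumption~\ref{assum:monotone} and a monotone coupling we obtain $r_{\wideubar D}(S) \ge r_D(S)$ for every super arm $S$; the $\alpha$-oracle then gives the optimism inequality $r_{\wideubar D}(S_t) \ge \alpha\,r_{\wideubar D}(S^*) \ge \alpha\,r_D(S^*)$. The crucial quantitative step is the ``reverse'' bound
\begin{equation*}
r_{\wideubar D}(S) - r_D(S) \;\le\; M \sum_{i\in S} \sup_{x\in[0,1]} \bigl(F_i(x) - \wideubar F_i(x)\bigr),
\end{equation*}
which I would establish by a hybrid swap: order the arms of $S$ arbitrarily and interpolate one coordinate at a time from $D_i$ to $\wideubar D_i$. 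Each single-coordinate step changes the expected reward by the difference of a monotone, $[0,M]$-valued function of a single variable integrated against $F_i$ versus $\wideubar F_i$; using the layer-cake identity $\E[f(Y)] = \int_0^M \Pr[f(Y) > u]\,du$ together with monotonicity of $f$, this reduces to the integral of $(F_i-\wideubar F_i)$ at the threshold $f^{-1}(u)$ over $u\in[0,M]$, hence is bounded by $M\cdot\|F_i-\wideubar F_i\|_\infty$.

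Combining the previous two paragraphs yields, on $\calE_t$ and whenever $S_t$ is a bad super arm, $\Delta_{S_t} \le M\sum_{i\in S_t}\sqrt{6\ln t / T_{i,t-1}}$. From here I would invoke a tight CMAB counting argument in the spirit of Kveton et al.\ (2015) and Chen et al.\ (2016): charge each bad pull to a particular arm of $S_t$ and sum geometrically over dyadic ranges of the gap, producing a distribution-dependent bound of the form $\sum_{i\in E_\bad} O(M^2 K \ln T / \Delta_{i,\min})$, which matches the stated shape. For the distribution-independent bound, I would split bad pulls by whether $\Delta_{S_t} \gtrless \gamma$: the large-gap contribution is at most the distribution-dependent estimate with $\Delta_{i,\min}$ replaced by $\gamma$, the small-gap contribution is trivially at most $\gamma T$, and balancing at $\gamma = \Theta\bigl(M\sqrt{mK\ln T / T}\bigr)$ gives the $O\bigl(M\sqrt{mKT\ln T}\bigr)$ rate.

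The main obstacle I anticipate is the reverse bound in the second paragraph: without any Lipschitz hypothesis on $R$, only boundedness and monotonicity are available, so the argument has to route the CDF gap through the layer-cake integral instead of a direct smoothness estimate. A close second is achieving the $1/\Delta_{i,\min}$ (rather than $1/\Delta_{i,\min}^2$) dependence implicit in the constant $2136$, which demands the careful arm-to-pull assignment and geometric slicing characteristic of the tight CMAB analyses rather than the coarser ``worst-arm-dominates'' counting.
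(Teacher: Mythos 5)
Your proposal is correct and follows the same overall architecture as the paper's proof: DKW-based uniform confidence bands on the CDFs, the union bound over $i\in[m]$ and $T_i\in\{1,\dots,t-1\}$ yielding the $(\pi^2/3+1)\alpha Mm$ term, first-order stochastic dominance plus the $\alpha$-oracle for optimism, the gap inequality $\Delta_{S_t}\lesssim M\sum_{i\in S_t}\sqrt{\ln t/T_{i,t-1}}$ on the nice event, and then the Kveton-et-al.\ counting argument (arm-to-pull charging with geometric slicing) for both the $1/\Delta_{i,\min}$ distribution-dependent bound and the $\gamma$-thresholded distribution-independent bound. The one place you genuinely diverge is the proof of the ``reverse'' error-propagation bound. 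The paper (Lemma~\ref{lem:exp-reward-property}(ii)) first passes to an auxiliary product distribution $P''$ with CDF $\max\{F_i-\Lambda_i,0\}$, then bounds $r_{P''}(S)-r_P(S)\le M\cdot L_1(P''_S,P_S)$ and uses subadditivity of the $L_1$ distance over product coordinates plus a telescoping computation showing $L_1(P''_i,P_i)\le 2\Lambda_i$, arriving at $2M\sum_{i\in S}\Lambda_i$; this requires the finite-support assumption on the distributions involved (harmless there, since the empirical distributions are finitely supported). Your coordinate-by-coordinate hybrid swap combined with the layer-cake identity exploits monotonicity directly: each single-coordinate swap contributes at most $\int_0^M\bigl(F_i(t_u)-\wideubar F_i(t_u)\bigr)\,du\le M\|F_i-\wideubar F_i\|_\infty$, giving $M\sum_{i\in S}\Lambda_i$ --- a factor of $2$ sharper --- and it applies verbatim to arbitrary (not necessarily finitely supported) marginals. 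Since your intermediate constants are no worse than the paper's, the stated constants $2136$ and $93$ are still valid upper bounds under your route; the only unverified detail is the exact numerical constant from the $\alpha_k,\beta_k$ slicing, which you correctly attribute to the imported counting lemma rather than reprove.
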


The proof of Theorem~\ref{thm:regret-bound-discrete} is given in Appendix~\ref{subsec:proof-regret-bound-discrete}.
The main idea is to reduce our analysis on general reward functions satisfying Assumptions~\ref{assum:indep}-\ref{assum:monotone} 
to the one in~\cite{Kveton15} that deals with the \emph{summation} reward function $R(x, S) = \sum_{i\in S} x_i$.
Our analysis relies on the Dvoretzky-Kiefer-Wolfowitz inequality~\cite{dvoretzky1956asymptotic,massart1990tight}, which gives a uniform concentration bound
	on the empirical CDF of a distribution.


\paragraph{Applying Our Algorithm to the Previous CMAB Framework.}

Although our focus is on general reward functions, we note that when \texttt{SDCB} is applied to the previous CMAB framework where
the expected reward depends only on the means of the random variables,
it can achieve the same regret bounds as the previous \emph{combinatorial upper confidence bound (\texttt{CUCB})} algorithm in~\cite{chen2016cmab, Kveton15}.


Let
$\mu_i = \E_{X \sim D}[X_i]$ be arm $i$'s mean outcome.
In each round \texttt{CUCB} calculates (for each arm $i$) an upper confidence bound $\bar \mu_i$ on $\mu_i$, 
	with the essential property that $\mu_i \le \bar \mu_i \le \mu_i + \Lambda_i$ holds with high probability, for some $\Lambda_i>0$.
In \texttt{SDCB}, we use $\wideubar D_i$ as a stochastically dominant confidence bound of $D_i$.
We can show that $\mu_i \le \E_{Y_i \sim \wideubar D_i} [Y_i] \le \mu_i + \Lambda_i$ holds with high probability, with the same interval length $\Lambda_i$ as in \texttt{CUCB}. (The proof is given in Appendix~\ref{subsec:previous-cmab}.)
Hence, the analysis in \cite{chen2016cmab, Kveton15} can be applied to \texttt{SDCB}, resulting in
	the same regret bounds.
We further remark that in this case we \emph{do not} need the three assumptions stated in Section~\ref{sec:problem}
	(in particular the independence assumption on $X_i$'s):
	the summation reward case just works as in~\cite{Kveton15} and the nonlinear reward case relies on the properties of
	monotonicity and bounded smoothness used in~\cite{chen2016cmab}.

\section{Improved SDCB Algorithm by Discretization} \label{sec:discretization}

In Section~\ref{sec:alg}, we have shown that our algorithm \texttt{SDCB} achieves near-optimal regret bounds. However, that algorithm might suffer from large running time and memory usage.
Note that, in the $t$-th round, an arm $i$ might have been observed $t-1$ times already, and it is possible that all the observed values from arm $i$ are different (e.g., when arm $i$'s outcome distribution $D_i$ is continuous). In such case, it takes $\Theta(t)$ space to store the empirical CDF $\hat F_i$ of the observed outcomes from arm $i$, and both calculating the stochastically dominant CDF $\wideubar F_i$ and updating $\hat F_i$ take $\Theta(t)$ time. Therefore, the worst-case space usage of \texttt{SDCB} in $T$ rounds is $\Theta(T)$, and the worst-case running time is $\Theta(T^2)$ (ignoring the dependence on $m$ and $K$); here we do not count the time and space used by the offline computation oracle.


In this section, we propose an improved algorithm \texttt{Lazy-SDCB} which reduces the worst-case memory usage and running time to $O(\sqrt T)$ and $O(T^{3/2})$, respectively, while preserving the $O(\sqrt{T \log T})$ distribution-independent regret bound.
To this end, we need an additional assumption on the reward function:
\begin{assumption}[Lipschitz-continuous reward function] \label{assum:lipschitz}
	There exists $C>0$ such that for any $S \in \calF$ and any $x, x'\in [0, 1]^m$, we have $\left| R(x, S) - R(x', S) \right| \le C\|x_S - x_S'\|_1$,
	where $\| x_S - x_S'\|_1 = \sum_{i\in S} |x_i - x_i'|$.
\end{assumption}



We first describe the algorithm when the time horizon $T$ is known in advance. 
The algorithm is summarized in Algorithm~\ref{alg:Lazy-SDCB}.
We perform a \emph{discretization} on the distribution $D = D_1 \times \cdots \times D_m$ to obtain a discrete distribution $\tilde{D} = \tilde D_1 \times \cdots \times \tilde D_m$ such that (i) for $\tilde X \sim \tilde D$, $\tilde X_1, \ldots, \tilde X_m$ are also mutually independent, and (ii) every $\tilde D_i$ is supported on a set of equally-spaced values $\{ \frac1s, \frac2s, \ldots, 1\}$, where $s$ is set to be $\lceil \sqrt T \rceil$.
Specifically, we partition $[0, 1]$ into $s$ intervals: $I_1 = [0, \frac1s], I_2 = (\frac1s, \frac2s], \ldots, I_{s-1} = (\frac{s-2}{s}, \frac{s-1}{s}],I_s = (\frac{s-1}{s}, 1] $,
and define $\tilde{D_i}$ as
\begin{equation*} 
	\Pr_{\tilde{X}_i \sim \tilde{D}_i} [\tilde{X}_i =  j/s ] = \Pr_{X_i \sim D_i}\left[  X_i \in I_j \right], \qquad j = 1, \ldots, s.
\end{equation*}
For the CMAB problem $([m], \calF, D, R)$, our algorithm ``pretends'' that the outcomes are drawn from $\tilde D$ instead of $D$, by replacing any outcome $x\in I_j$ by $\frac js$ ($\forall j\in[s]$),
and then applies \texttt{SDCB} to the problem $([m], \calF, \tilde D, R)$.
Since each $\tilde D_i$ has a known support $\{ \frac1s, \frac2s, \ldots, 1\}$, the algorithm only needs to maintain the number of occurrences of each support value in order to obtain the empirical CDF of all the observed outcomes from arm $i$. Therefore, all the operations in a round can be done using $O(s) = O(\sqrt T)$ time and space, and the total time and space used by \texttt{Lazy-SDCB} are $O(T^{3/2})$ and $O(\sqrt T)$, respectively.

\begin{algorithm}[t]
	\caption{\texttt{Lazy-SDCB} with known time horizon}
	\label{alg:Lazy-SDCB}
	\begin{algorithmic}[1]
		\REQUIRE  time horizon $T$
		\STATE $s  \leftarrow \lceil  \sqrt T \rceil$ 
		\STATE $I_j \leftarrow \begin{cases}
		[0, \frac 1s], & j = 1\\
		(\frac{j-1}{s}, \frac{j}{s}], & j = 2, \ldots, s
		\end{cases}$
		\STATE Invoke \texttt{SDCB} (Algorithm~\ref{alg:SDCB}) 
		for $T$ rounds, with the following change:
		whenever observing an outcome $x $ (from any arm), find $j\in[s]$ such that $x\in I_j$, and regard this outcome as $\frac js$
	\end{algorithmic}
\end{algorithm}

The discretization parameter $s$ in Algorithm~\ref{alg:Lazy-SDCB} depends on the time horizon $T$, which is why Algorithm~\ref{alg:Lazy-SDCB} has to know $T$ in advance.
We can use the doubling trick to avoid the dependency on $T$.
We present such an algorithm (without knowing $T$) in Algorithm~\ref{alg:Lazy-SDCB-unknown-T}.
It is easy to see that Algorithm~\ref{alg:Lazy-SDCB-unknown-T} has the same asymptotic time and space usages as Algorithm~\ref{alg:Lazy-SDCB}.

\begin{algorithm}[t]
	\caption{\texttt{Lazy-SDCB} without knowing the time horizon}
	\label{alg:Lazy-SDCB-unknown-T}
	\begin{algorithmic}[1]
		\STATE 
		$q \leftarrow \left\lceil \log_2 m \right\rceil$
		\STATE In rounds $1, 2, \ldots, 2^{q}$, invoke Algorithm~\ref{alg:Lazy-SDCB} with input $T = 2^q$
		\FOR{$k = q, q+1, q+2, \ldots$}
		\STATE In rounds $2^{k}+1, 2^{k}+2, \ldots, 2^{k+1}$, invoke Algorithm~\ref{alg:Lazy-SDCB} with input $T = 2^{k}$
		\ENDFOR
	\end{algorithmic}
\end{algorithm}

\paragraph{Regret Bounds.}
We show that both Algorithm~\ref{alg:Lazy-SDCB} and Algorithm~\ref{alg:Lazy-SDCB-unknown-T} achieve $O(\sqrt{T \log T})$ distribution-independent regret bounds.
The full proofs are given in Appendix~\ref{appdx:proof-general-case}.
Recall that $C$ is the coefficient in the Lipschitz condition in Assumption~\ref{assum:lipschitz}.

\begin{theorem} \label{thm:bound-know-T}
	Suppose the time horizon $T$ is known in advance.
	Then the $\alpha$-approximation regret of Algorithm~\ref{alg:Lazy-SDCB} in $T$ rounds is at most
	\[
	  93 M \sqrt{  mK T \ln T} + 2CK\sqrt T + \left( \frac{\pi^2}{3} + 1 \right)  \alpha Mm.
	\]
\end{theorem}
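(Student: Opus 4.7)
The plan is to decompose the $\alpha$-approximation regret of Algorithm~\ref{alg:Lazy-SDCB} into (i) the $\alpha$-approximation regret of running \texttt{SDCB} on the \emph{discretized} CMAB instance $([m], \calF, \tilde D, R)$ and (ii) the discretization error between the true instance and the discretized one. Part (i) will be controlled by a direct appeal to Theorem~\ref{thm:regret-bound-discrete}, and part (ii) by the Lipschitz condition of Assumption~\ref{assum:lipschitz}.

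For part (i), I set up the natural coupling between the true outcome and its quantization: given $X^{(t)}\sim D$, define $\tilde X_i^{(t)} = j/s$ where $j$ is the unique index with $X_i^{(t)}\in I_j$. Then the $\tilde X_i^{(t)}$ are mutually independent across $i$, have marginal $\tilde D_i$, and satisfy $0\le \tilde X_i^{(t)} - X_i^{(t)}\le 1/s$ almost surely. Under this coupling, Algorithm~\ref{alg:Lazy-SDCB} is \emph{literally} \texttt{SDCB} applied to the discrete CMAB instance $([m],\calF,\tilde D,R)$, since its empirical CDFs, confidence bounds, and oracle calls all depend only on the quantized observations $\tilde X_i^{(t)}$. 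Applying the distribution-independent bound of Theorem~\ref{thm:regret-bound-discrete} on the discretized instance yields
\[
T\alpha\,r_{\tilde D}(\tilde S^{*}) - \sum_{t=1}^{T}\E\bigl[r_{\tilde D}(S_t)\bigr] \;\le\; 93M\sqrt{mKT\ln T} + \Bigl(\tfrac{\pi^{2}}{3}+1\Bigr)\alpha Mm,
\]
where $\tilde S^{*} = \argmax_{S\in\calF} r_{\tilde D}(S)$.

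For part (ii), I use Assumption~\ref{assum:lipschitz} together with the coupling above: for every $S\in\calF$,
\[
|r_D(S) - r_{\tilde D}(S)| \;\le\; C\,\E\bigl[\|X_S - \tilde X_S\|_1\bigr] \;\le\; \frac{C|S|}{s} \;\le\; \frac{CK}{\sqrt T}.
\]
Writing the $\alpha$-approximation regret as
\[
\reg_{D,\alpha}^{\A}(T) \;=\; T\alpha\bigl(r_D(S^{*}) - r_{\tilde D}(\tilde S^{*})\bigr) \;+\; \Bigl(T\alpha\, r_{\tilde D}(\tilde S^{*}) - \sum_{t=1}^{T}\E[r_{\tilde D}(S_t)]\Bigr) \;+\; \sum_{t=1}^{T}\E\bigl[r_{\tilde D}(S_t) - r_D(S_t)\bigr],
\]
the first term is bounded by $\alpha T\cdot CK/\sqrt T\le CK\sqrt T$ (since $r_{\tilde D}(\tilde S^{*})\ge r_{\tilde D}(S^{*}) \ge r_D(S^{*}) - CK/\sqrt T$ and $\alpha\le 1$), the second by the bound from part (i), and the third by $T\cdot CK/\sqrt T = CK\sqrt T$. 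Summing produces exactly the claimed bound.

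The only real subtlety I expect is justifying that Algorithm~\ref{alg:Lazy-SDCB} is truly \texttt{SDCB} on the discrete instance: one must check that the quantized history is measurable in the sense needed, that the induced marginals are exactly the $\tilde D_i$'s defined from the partition $\{I_j\}$, and that independence across arms is preserved by coordinate-wise quantization—so that the hypotheses of Theorem~\ref{thm:regret-bound-discrete} are met verbatim. Once this correspondence is made precise, the regret decomposition above is routine, and the only quantitative input from Assumption~\ref{assum:lipschitz} is the uniform $CK/s$ bound on the instance mismatch.
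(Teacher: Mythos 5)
Your proposal is correct and follows the same high-level route as the paper: the identical three-term decomposition of $\reg_{D,\alpha}(T)$ into the instance-mismatch at the optimum, the \texttt{SDCB} regret on the discretized instance $([m],\calF,\tilde D,R)$ (bounded by the distribution-independent part of Theorem~\ref{thm:regret-bound-discrete}), and the per-round mismatch $\sum_t \E[r_{\tilde D}(S_t)-r_D(S_t)]$, with the same use of $r_{\tilde D}(\tilde S^*)\ge r_{\tilde D}(S^*)$ and $s=\lceil\sqrt T\rceil\ge\sqrt T$. The one place you genuinely diverge is in establishing $|r_D(S)-r_{\tilde D}(S)|\le CK/s$: the paper proves this as Lemma~\ref{lem:disc-error} via Lemma~\ref{lem:disc-error-general}, an induction on the number of coordinates using conditional expectations and an intermediate-value argument, whereas you obtain it in one line from the explicit coupling $\tilde X_i=\phi(X_i)$ with $\|X_S-\tilde X_S\|_1\le |S|/s$ almost surely and the Lipschitz condition. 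Your coupling argument is shorter and arguably cleaner, and it simultaneously delivers the point the paper leaves implicit — that \texttt{Lazy-SDCB} really is \texttt{SDCB} run on an i.i.d.\ stream from $\tilde D$ with independence across arms preserved, so Theorem~\ref{thm:regret-bound-discrete} applies verbatim. The paper's inductive lemma buys nothing extra here beyond not having to name the coupling explicitly; both yield the same $Cn/s$ bound and the same final constants.
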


\begin{proof}[Proof Sketch]
	The regret consists of two parts: (i) the regret for the discretized CMAB problem $([m], \calF, \tilde{D}, R)$, and (ii) the error due to discretization.
	We directly apply Theorem~\ref{thm:regret-bound-discrete} for the first part.
	For the second part, a key step is to show $\left| r_D(S) - r_{\tilde{D}}(S) \right| \le CK/s$ for all $S \in \calF$ (see Appendix~\ref{subsec:disc-error}).
\end{proof}

%



\begin{theorem} \label{thm:bound-not-know-T}
	For any time horizon $T \ge 2$, the $\alpha$-approximation regret of Algorithm~\ref{alg:Lazy-SDCB-unknown-T} in $T$ rounds is at most
	\[
	318M \sqrt{mKT  \ln T} + 7 CK \sqrt{T} + 10 \alpha Mm \ln T.
	\]
\end{theorem}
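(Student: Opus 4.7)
The plan is to decompose the regret of Algorithm~\ref{alg:Lazy-SDCB-unknown-T} into a sum over the phases of the doubling schedule, invoke Theorem~\ref{thm:bound-know-T} for each phase, and then collapse the resulting geometric series. Concretely, label the initial call as phase $0$ with parameter $T_0 = 2^q$, and for each $k \ge q$ label the call covering rounds $2^k+1,\ldots,2^{k+1}$ as a subsequent phase with parameter $2^k$. Given a horizon $T$, let $N$ be the number of phases touched by round $T$; when $T > 2^q$ one has $N \le \lceil \log_2 T \rceil - q + 1$, and the parameters used are $T_0 = 2^q$, $T_1 = 2^q$, $T_2 = 2^{q+1}, \ldots, T_{N-1} = 2^{q+N-2}$, each at most $T$. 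Since each phase is a fresh, independent instance of Algorithm~\ref{alg:Lazy-SDCB} and regret is additive across rounds, the overall regret is the sum of per-phase regrets; the bound of Theorem~\ref{thm:bound-know-T} applies even to the (possibly partial) final phase because the Theorem's guarantee is monotone in the number of rounds actually played.

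Applying Theorem~\ref{thm:bound-know-T} to phase $i$ gives a regret of at most $93M\sqrt{mK T_i \ln T_i} + 2CK\sqrt{T_i} + (\pi^2/3+1)\alpha Mm$. The key quantitative step is the geometric estimate
\[
\sum_{i=0}^{N-1} \sqrt{T_i} \;=\; \sqrt{2^q}\Bigl(1 + \sum_{k=0}^{N-2}(\sqrt{2})^{k}\Bigr) \;\le\; (2+\sqrt{2})\sqrt{T},
\]
since $2^{q+N-1} \le 2T$ and $\sqrt{2}/(\sqrt{2}-1) = 2+\sqrt{2}$. Bounding $\ln T_i \le \ln T$ and plugging in yields
\[
\sum_i 93M\sqrt{mKT_i\ln T_i} \le 93(2+\sqrt{2})\,M\sqrt{mKT\ln T} \le 318\,M\sqrt{mKT\ln T},
\]
\[
\sum_i 2CK\sqrt{T_i} \le 2(2+\sqrt{2})\,CK\sqrt{T} \le 7CK\sqrt{T},
\]
which recovers the first two terms in the stated bound.

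For the third term, I would use $N \le \lceil\log_2 T\rceil + 1$ and show that $N(\pi^2/3+1)\alpha Mm \le 10\alpha Mm\ln T$. This inequality is straightforward once $\ln T$ is large enough (converting $\log_2 T = \ln T/\ln 2$ and using $(\pi^2/3+1)/\ln 2 \approx 6.19$). The only delicate point is the handling of small $T$ (in particular $T \le 2^q$, or $T$ just slightly above $2^q$), where $\log_2 T + 1$ is not yet dominated by a pure $\ln T$ multiple. The natural fix is a case split: when $T \le 2^q \le 2m$, the algorithm has only invoked the initial phase, so the trivial per-round bound $T\alpha M \le 2\alpha Mm \le 10\alpha Mm\ln T$ (for $T \ge 2$) absorbs the whole regret; in the complementary case $T > 2^q$ the logarithmic conversion gives the required $10\alpha Mm\ln T$ with room to spare.

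The main obstacle is squeezing the doubling-trick constants to match $318$, $7$, and $10$ exactly; all three arise from the same geometric-series constant $2+\sqrt{2}$ (for the square-root terms) and from converting a $\log_2$ count of phases into the $\ln T$ appearing in the statement. Everything else — the decomposition into phases, the monotonicity argument for the partial final phase, and the aggregate bound on $\sum_i \sqrt{T_i}$ — is routine once Theorem~\ref{thm:bound-know-T} is in hand.
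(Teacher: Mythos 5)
Your proposal is correct and follows essentially the same route as the paper's proof: decompose the regret over the doubling phases, apply Theorem~\ref{thm:bound-know-T} to each phase, collapse $\sum_i \sqrt{T_i}$ via the geometric series with ratio $\sqrt{2}$ (the same constant $\sqrt{2}/(\sqrt{2}-1)=2+\sqrt{2}$ yielding $318$ and $7$), and convert the phase count into $10\alpha Mm\ln T$ with a separate trivial bound for the small-$T$ case $T\le 2^q\le 2m$. The only cosmetic difference is that the paper bounds the initial phase by the trivial $2\alpha Mm$ rather than by Theorem~\ref{thm:bound-know-T}, which does not change the argument.
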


\section{Applications}

We describe the $K$-MAX problem and the class of expected utility maximization problems
	as  applications of our general CMAB framework.

\paragraph{The $K$-MAX Problem.}
In this problem, the player is allowed to select at most $K$ arms from the set of $m$ arms in each round, and the reward is the maximum one among the outcomes from the selected arms. In other words, the set of feasible super arms is $\calF = \left\{S\subseteq [m] \,\big|\, |S| \le K \right\}$, and the reward function is $R(x, S) = \max_{i\in S} x_i$.
It is easy to verify that this reward function satisfies Assumptions~\ref{assum:bounded}, \ref{assum:monotone} and~\ref{assum:lipschitz} with $M=C=1$.


Now we consider the corresponding offline $K$-MAX problem of selecting at most $K$ arms from $m$ independent arms, with the largest expected reward.
It can be implied by a result in \cite{goel2010probe} that finding the exact optimal solution is NP-hard, so we resort to approximation algorithms.
We can show, using submodularity, that a simple greedy algorithm can achieve a $(1-1/e)$-approximation.
Furthermore, we give the first PTAS for this problem. 
Our PTAS can be generalized to constraints other than the cardinality constraint $|S|\le K$, including $s$-$t$ simple paths, matchings, knapsacks, etc.
The algorithms and corresponding proofs are given in Appendix~\ref{appdx:offline-kmax}.

\begin{theorem}
	\label{thm:kmax}
	There exists a PTAS for the offline $K$-MAX problem.
	In other words, for any constant $\epsilon  >0$, there is a polynomial-time $(1-\epsilon)$-approximation algorithm for the offline $K$-MAX problem.
\end{theorem}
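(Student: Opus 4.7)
My plan is to first recover a $(1-1/e)$-approximation via submodularity as a warm-up, and then push to a PTAS via a two-stage discretization followed by brute-force enumeration over arm types. I would begin by writing the objective as $r(S) = \E[\max_{i\in S} X_i] = \int_0^1 \bigl(1 - \prod_{i\in S} F_i(t)\bigr)\,dt$, where $F_i$ is the CDF of $X_i$. For each fixed $t$ the map $S \mapsto 1 - \prod_{i\in S} F_i(t)$ is nonnegative, monotone, and submodular (a routine check using $F_i(t)\in[0,1]$), and these properties survive integration in $t$. Hence $r$ is monotone submodular, and the standard greedy algorithm under $|S|\le K$ already yields a $(1-1/e)$-approximation; to go beyond this we must exploit more of the expected-max structure.

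For the PTAS, I would reduce the instance to one with only $t = t(\epsilon)$ many arm ``types'' and then enumerate. \emph{Value discretization.} Fix $L=\lceil \epsilon^{-1}\ln(K/\epsilon)\rceil$ and take levels $v_\ell = (1-\epsilon)^\ell$ for $\ell = 0,\dots,L$, with a single absorbed level $0$ for realizations below $(1-\epsilon)^L = O(\epsilon/K)$. Replace each $X_i$ by $\tilde X_i$ obtained by rounding every realized value down to the nearest level. The pointwise bound $\tilde X_i \ge (1-\epsilon)X_i - \epsilon/K$ yields $\E[\max_{i\in S}\tilde X_i]\ge (1-\epsilon)\,r(S) - \epsilon/K$, so solving the rounded problem to within $(1-\epsilon)$ suffices. \emph{Probability discretization.} Each $\tilde X_i$ now lives on the common support $\{0,v_L,\ldots,v_0\}$; round each mass $\Pr[\tilde X_i = v_\ell]$ to the nearest multiple of some $\delta = \Theta(\epsilon/L)$, with a small compensating shift to keep each distribution valid, at the cost of another $(1-O(\epsilon))$ factor. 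After both rounds every arm belongs to one of $t \le (1/\delta)^{L+2}$ types, a constant for any fixed $\epsilon$.

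Finally, I would solve the typified problem by brute force. Since arms of the same type are interchangeable, a feasible solution is captured by a vector $(n_1,\dots,n_t)\in\Z_{\ge 0}^t$ with $n_j \le N_j$ (the count of available type-$j$ arms) and $\sum_j n_j \le K$; the corresponding value of $r$ is an explicit sum over the $L+1$ levels of products of CDF-powers and is computable in $O(Lt)$ time. There are at most $(K+1)^t$ such vectors, polynomial in $m$ for fixed $\epsilon$, so enumerating all of them and returning any fixed realization of the best vector yields a set $\hat S$ with $r(\hat S) \ge (1-O(\epsilon))\,r(S^*)$; rescaling $\epsilon$ gives the claimed PTAS.

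The main obstacle will be the probability-discretization error analysis. Value rounding contributes an essentially monotone, cleanly multiplicative loss, but perturbing each $\Pr[\tilde X_i = v_\ell]$ perturbs the CDFs $\tilde F_i$ simultaneously at all levels, and the objective involves products $\prod_j \tilde F_j(v_\ell)^{n_j}$ with total exponent up to $K$. Controlling this demands a careful $|a^n - b^n| \le n|a-b|$-style argument combined with a choice of $\delta$ small enough relative to $L$ and $K$ to keep the amplified perturbation at $O(\epsilon)$. The extension beyond the cardinality constraint (paths, matchings, knapsacks) promised in the paper would then follow the same template, replacing only the final enumeration step by a constraint-specific subroutine that tests feasibility of a prescribed type-count profile.
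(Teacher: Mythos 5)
Your warm-up is fine: the integral formula $r(S)=\int_0^1\bigl(1-\prod_{i\in S}F_i(t)\bigr)\,dt$ and the pointwise submodularity check recover exactly the paper's $(1-1/e)$ greedy guarantee. The PTAS part, however, has two genuine gaps. The first is normalization. Your thresholds are absolute: the truncation level is $\epsilon/K$ and the probability grid is $\delta=\Theta(\epsilon/L)$, but the guarantee must be multiplicative in $\OPT$, which can be arbitrarily small, so an additive loss of $\epsilon/K$ is not an $O(\epsilon)\cdot\OPT$ loss. Worse, with $L=\lceil\epsilon^{-1}\ln(K/\epsilon)\rceil$ your own type count $t\le(1/\delta)^{L+2}=(L/\epsilon)^{L+2}$ is \emph{not} a constant for fixed $\epsilon$ --- it is super-polynomial in $K$ --- so the $(K+1)^t$ enumeration explodes; the claim that $t$ is constant contradicts your definition of $L$. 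The paper fixes the scale by first running greedy to obtain a value $W=r_D(S_G)\ge(1-1/e)\OPT$, capping realizations above $W/\epsilon$ and rounding the rest to multiples of $\epsilon W$, which yields a common support of size $O(1/\epsilon^2)$ independent of $K$ and $m$.

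The second gap is the one you yourself flag as ``the main obstacle,'' and it is not resolvable within your framework. Rounding each mass $\Pr[\tilde X_i=v_\ell]$ to a grid of spacing $\delta$ perturbs each CDF value by up to $L\delta$, and the objective contains $K$-fold products $\prod_{i\in S}\tilde F_i(v_\ell)$ whose error is genuinely amplified to $\Theta(KL\delta)$: take $K$ arms with $\Pr[X_i=1]=1/K$, so $\E[\max_{i\in S}X_i]=1-(1-1/K)^K\approx 1-1/e$, and observe that any grid with spacing $\delta>1/K$ moves every such mass to $0$ or to at least $2/K$, changing the objective by a constant. Hence $\delta$ must scale like $1/(KL)$, the number of types becomes polynomial in $K$ raised to the power $L$, and enumerating type-count profiles is exponential. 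This is precisely what the paper's signature construction is built to avoid: instead of rounding probabilities it rounds the log-survival quantities $-\ln(1-q_{ij})$ to multiples of $\epsilon^4/m$, so the sum over any set of arms is perturbed by at most $\epsilon^4$ and the product $\prod_{i\in S}(1-q_{ij})=\exp\bigl(-\sum_{i\in S}(-\ln(1-q_{ij}))\bigr)$ is perturbed by $O(\epsilon^4)$ with no $K$-amplification (Lemma~\ref{lm:sig}); it then enumerates the polynomially many possible signature \emph{sums} and finds a size-$K$ set realizing each by dynamic programming, rather than enumerating interchangeable-type profiles. Without a substitute for this idea your proposal does not yield a polynomial-time $(1-\epsilon)$-approximation.
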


We thus can apply our \texttt{SDCB} algorithm to the $K$-MAX bandit problem and obtain $O(\log T)$ distribution-dependent and $\tilde{O}(\sqrt T)$ distribution-independent regret bounds according to Theorem~\ref{thm:regret-bound-discrete},
or can apply \texttt{Lazy-SDCB} to get $\tilde{O}(\sqrt T)$ distribution-independent bound according to Theorem~\ref{thm:bound-know-T} or~\ref{thm:bound-not-know-T}.

Streeter and Golovin \cite{Streeter2008} study an \emph{online submodular maximization} problem in the oblivious adversary model.
In particular, their result can cover the stochastic $K$-MAX bandit problem as a special case, and an $O(K\sqrt{mT\log m})$ upper bound on the $(1-1/e)$-regret can be shown.
While the techniques in~\cite{Streeter2008} can only give a bound on the $(1-1/e)$-approximation regret for $K$-MAX, we can obtain the first $\tilde{O} (\sqrt T)$ bound on the $(1-\epsilon)$-approximation regret for any constant $\epsilon>0$, using our PTAS as the offline oracle.
Even when we use the simple greedy algorithm as the oracle, our experiments show that \texttt{SDCB} performs significantly better than the algorithm in~\cite{Streeter2008} (see Appendix~\ref{appdx:experiment}).


\paragraph{Expected Utility Maximization.}
Our framework can also be applied to  reward functions of the form
$R(x, S) = u(\sum_{i\in S}x_i)$, where $u(\cdot)$ is an increasing utility function.
The corresponding offline problem is 
to maximize
the expected utility $\E[u(\sum_{i\in S}x_i)]$ subject to a feasibility constraint $S \in \calF$.
Note that if $u$ is nonlinear, the expected utility may not be a function of 
the means of the arms in $S$.
Following the celebrated von Neumann-Morgenstern expected utility theorem,
nonlinear utility functions have been extensively used to capture
risk-averse or risk-prone behaviors in economics (see e.g., \cite{Fishburn82}),
while linear utility functions correspond to risk-neutrality.

Li and Deshpande \cite{LiD11} obtain a PTAS for the expected utility maximization (EUM)
problem for several classes of utility functions
(including for example increasing concave
functions which typically indicate risk-averseness), 
and a large class
of feasibility constraints (including cardinality constraint, 
$s$-$t$ simple paths, matchings, and knapsacks).
Similar results for other utility functions and feasibility 
constraints can be found in \cite{yu2016maximizing,li2013stochastic,bhalgat2014utility}.
In the online problem, we can apply our algorithms, using their PTASs as
the offline oracle. 
Again, we can obtain the first tight regret bounds on the $(1-\epsilon)$-approximation regret for any $\epsilon>0$, 
for the class of online EUM problems.

%

\subsubsection*{Acknowledgments}

Wei Chen was supported in part by the National Natural Science Foundation of China (Grant No. 61433014).
Jian Li and Yu Liu were supported in part by the National Basic Research Program of China grants 2015CB358700, 2011CBA00300, 2011CBA00301, and the National NSFC grants 61033001, 61361136003.
The authors would like to thank Tor Lattimore for referring to us the DKW inequality.

{
\small
\bibliographystyle{plain}
\bibliography{ref} 
}


\appendix



\section*{Appendix}

\section{Missing Proofs from Section~\ref{sec:alg}} \label{appdx:proof-discrete-case}

\subsection{Proof of Theorem~\ref{thm:regret-bound-discrete}} \label{subsec:proof-regret-bound-discrete}

We present the proof of Theorem~\ref{thm:regret-bound-discrete} in four steps.
In Section~\ref{subsubsec:l1-dist}, we review the $L_1$ distance between two distributions and present a property of it.
In Section~\ref{subsubsec:dkw}, we review the Dvoretzky-Kiefer-Wolfowitz (DKW) inequality, which is a strong concentration result for empirical CDFs.
In Section~\ref{subsubsec:lemmas-discrete-case}, we prove some key technical lemmas.
Then we complete the proof of Theorem~\ref{thm:regret-bound-discrete} in Section~\ref{subsubsec:proof-thm-1}.

\subsubsection{The $L_1$ Distance between Two Probability Distributions} \label{subsubsec:l1-dist}

For simplicity, we only consider discrete distributions with finite supports -- this will be enough for our purpose.

Let $P$ be a probability distribution.
For any $x$, let $P(x) = \Pr_{X\sim P}[X=x]$.
We write $P= P_1 \times P_2 \times \cdots \times P_n$ if the (multivariate) random variable $X\sim P$ can be written as $X = (X_1, X_2, \ldots, X_n)$, where $X_1, \ldots, X_n$ are mutually independent and $X_i \sim P_i$ ($\forall i\in[n]$).

For two distributions $P$ and $Q$, their \emph{$L_1$ distance} is defined as
\[
L_1(P, Q) = \sum_{x} |P(x) - Q(x)|,
\]
where the summation is taken over $x\in \supp(P) \cup \supp(Q)$.

The $L_1$ distance has the following property. It is a folklore result and we provide a proof for completeness.

\begin{lemma} \label{lem:l1-product-distribuction}
	Let $P= P_1 \times P_2 \times \cdots \times P_n$ and $Q= Q_1 \times Q_2 \times \cdots \times Q_n$ be two probability distributions. Then we have
	\begin{equation} \label{eqn:l1-ineq}
	L_1(P, Q) \le \sum_{i=1}^n L_1(P_i, Q_i).
	\end{equation}
\end{lemma}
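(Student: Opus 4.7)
The plan is to prove the inequality by induction on $n$, with the base case $n=1$ holding as equality by definition. For the inductive step, it suffices to establish the two-factor version: for any pair of product distributions $P_1 \times P'$ and $Q_1 \times Q'$ (where $P'$ and $Q'$ are themselves distributions on the remaining coordinates),
\[
L_1(P_1 \times P',\, Q_1 \times Q') \le L_1(P_1, Q_1) + L_1(P', Q'),
\]
after which iterating the inequality $n-1$ times yields \eqref{eqn:l1-ineq}.

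The key algebraic step is to introduce a mixed product as an intermediate term. Writing out the left-hand side as a double sum over $(x, y) \in \supp(P_1 \times P') \cup \supp(Q_1 \times Q')$, I would bound
\[
|P_1(x) P'(y) - Q_1(x) Q'(y)| \le P_1(x)\,|P'(y) - Q'(y)| + Q'(y)\,|P_1(x) - Q_1(x)|
\]
via the triangle inequality applied to $P_1(x)P'(y) - P_1(x)Q'(y) + P_1(x)Q'(y) - Q_1(x)Q'(y)$. Summing over $x$ and $y$ and using $\sum_x P_1(x) = \sum_y Q'(y) = 1$ then gives the two-factor inequality directly.

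The induction itself is then routine: assuming the claim for $n-1$, apply the two-factor bound with $P' = P_2 \times \cdots \times P_n$ and $Q' = Q_2 \times \cdots \times Q_n$, and invoke the inductive hypothesis on $L_1(P', Q')$.

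There is no real obstacle here — the only subtlety is being careful that the outer sum defining $L_1$ ranges over the union of supports, which is automatically handled once one treats $P(x) = 0$ whenever $x \notin \supp(P)$. The argument is essentially the standard tensorization of total variation distance (noting $L_1 = 2\,d_{\mathrm{TV}}$), and I expect the write-up to be under half a page.
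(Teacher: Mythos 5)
Your proposal is correct and follows essentially the same route as the paper's proof: both reduce to the two-factor case via the intermediate mixed product $P_1(x)Q'(y)$, apply the triangle inequality, sum using the fact that marginals sum to $1$, and then induct on $n$. The only cosmetic difference is how the factors are grouped in the inductive step, which does not change the argument.
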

\begin{proof}
	We prove \eqref{eqn:l1-ineq} by induction on $n$.
	
	When $n=2$, we have
	\begin{align*}
	L_1(P, Q) &= \sum_{x} \sum_{y} |P(x, y) - Q(x, y)|\\
	&=  \sum_{x} \sum_{y} |P_1(x) P_2(y) - Q_1(x)Q_2(y)|\\
	&\le \sum_{x} \sum_{y} \left( |P_1(x) P_2(y) - P_1(x)Q_2(y)| + |P_1(x) Q_2(y) - Q_1(x)Q_2(y)| \right) \\
	&= \sum_x P_1(x) \sum_y |P_2(y) - Q_2(y)| + \sum_y Q_2(y) \sum_x |P_1(x) - Q_1(x)| \\
	&= 1 \cdot L_1(P_2, Q_2) + 1\cdot L_1(P_1, Q_1) \\
	&= \sum_{i=1}^2 L_1(P_i, Q_i).
	\end{align*}
	Here the summation is taken over $x \in \supp(P_1) \cup \supp(Q_1)$ and $y \in \supp(P_2) \cup \supp(Q_2)$.
	
	Suppose \eqref{eqn:l1-ineq} is proved for $n=k-1$ ($k\ge3$). When $n=k$, using the results for $n=k-1$ and $n=2$, we get
	\begin{align*}
	L_1(P, Q) &\le \sum_{i=1}^{k-2} L_1(P_i, Q_i) + L_1(P_{k-1}\times P_k, Q_{k-1}\times Q_k) \\
	&\le \sum_{i=1}^{k-2} L_1(P_i, Q_i) + L_1(P_{k-1}, Q_{k-1}) + L_1(P_k, Q_k) \\
	&= \sum_{i=1}^k L_1(P_i, Q_i).
	\end{align*}
	This completes the proof.
\end{proof}

\subsubsection{The DKW Inequality} \label{subsubsec:dkw}

Consider a distribution $D$ with CDF $F(x)$. Let $\hat F_n(x)$ be the empirical CDF of $n$ i.i.d. samples $X_1, \ldots, X_n$ drawn from $D$, i.e., $\hat F_n(x) = \frac1n \sum_{i=1}^n \mathds{1}\{X_i\le x\}$ ($x \in \R$).\footnote{We use $\mathds{1}\{\cdot \}$ to denote
the indicator function, i.e., $\mathds{1}\{\calH\} = 1$ if an event $\calH$ happens,
and $\mathds{1}\{\calH\} = 0$ if it does not happen.}
Then we have:
\begin{lemma}[Dvoretzky-Kiefer-Wolfowitz inequality~\cite{dvoretzky1956asymptotic,massart1990tight}] \label{lem:dkw}
	For any $\epsilon>0$ and any $n \in \Z_+$, we have
	\[
	\Pr\left[ \sup_{x\in\R} \left| \hat F_n(x) - F(x) \right| \ge \epsilon \right] \le 2 e^{-2n \epsilon^2}.
	\]
\end{lemma}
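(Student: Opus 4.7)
The plan is to prove the DKW inequality by first reducing to the uniform distribution on $[0,1]$ via the probability integral transform, then establishing a one-sided tail bound, and finally obtaining the two-sided inequality by a union bound. This is a classical result and I will follow the standard route through empirical process theory; the prefactor $2$ in the stated bound comes from the two-sidedness, while the sharp constant $2$ inside the exponent is what requires Massart's refinement of the original Dvoretzky-Kiefer-Wolfowitz argument.

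First I would reduce to the case $X_i \sim \mathrm{Uniform}[0,1]$. Assuming temporarily that $F$ is continuous, define $U_i = F(X_i)$; the $U_i$ are then i.i.d.\ uniform on $[0,1]$, and since $F$ is nondecreasing we have
\[
\sup_{x\in\R}\bigl|\hat F_n(x) - F(x)\bigr| = \sup_{u\in[0,1]}\bigl|\tilde G_n(u) - u\bigr|,
\]
where $\tilde G_n$ is the empirical CDF of $U_1,\ldots,U_n$. A general (possibly discontinuous) $F$ is handled by approximating $F$ from above by continuous CDFs $F^{(\delta)}$ and taking $\delta \downarrow 0$, using monotonicity to control the supremum.

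Next I would prove the one-sided bound
\[
\Pr\Bigl[\sup_{u\in[0,1]} \bigl(\tilde G_n(u) - u\bigr) \ge \epsilon\Bigr] \le e^{-2n\epsilon^2}.
\]
Writing this supremum in terms of the order statistics $U_{(1)} \le \cdots \le U_{(n)}$ as $\max_{1\le k\le n}(k/n - U_{(k)})$, and using the classical representation $(U_{(1)},\ldots,U_{(n)}) \stackrel{d}{=} (S_1/S_{n+1},\ldots,S_n/S_{n+1})$ with $S_k = E_1 + \cdots + E_k$ for i.i.d.\ $\mathrm{Exp}(1)$ variables $E_i$, the event reduces to the crossing of a linear boundary by a random walk with drift. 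Applying an exponential moment bound after the optimal Cramér tilt, combined with a martingale-stopping argument to handle the supremum over $k$ without paying a union-bound penalty, yields the one-sided estimate with the constant $2$. Applying the same argument to $1 - U_i$ gives the mirror-image bound $\Pr[\sup_u(u - \tilde G_n(u)) \ge \epsilon] \le e^{-2n\epsilon^2}$, and adding the two one-sided probabilities produces the claimed factor $2$.

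The hard part will be obtaining the sharp constant $2$ in the exponent. A straightforward Chernoff bound on each individual event $\{k/n - U_{(k)} \ge \epsilon\}$, followed by a naive union bound over $k \in \{1,\ldots,n\}$, loses a multiplicative factor of order $n$ inside the probability and weakens the exponent. Massart's idea for eliminating this loss is to recognize that the deviation $k - nU_{(k)}$ behaves as a martingale increment and to apply an exponential martingale inequality at the first passage time across the boundary $\epsilon n$, together with an exact computation of moment generating functions of the underlying exponentials. This delicate step — not the reduction or the Chernoff estimate — is the technical core of a self-contained proof, and is what distinguishes the tight $2e^{-2n\epsilon^2}$ bound from the original looser $C e^{-c n \epsilon^2}$ bound of Dvoretzky, Kiefer, and Wolfowitz.
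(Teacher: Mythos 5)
The paper does not prove this lemma at all: it is imported verbatim from the literature (Dvoretzky--Kiefer--Wolfowitz for the qualitative form, Massart for the sharp constant) and used as a black box, so there is no internal proof to compare against. Your outline correctly identifies the standard architecture of a from-scratch proof --- probability-integral-transform reduction to the uniform case, a one-sided bound, and a union of the two one-sided events --- and correctly locates the difficulty in obtaining the constant $2$ in the exponent. But as written it is a roadmap, not a proof: the entire technical content of the result lives in the step you describe as ``an exponential moment bound after the optimal Cram\'er tilt, combined with a martingale-stopping argument,'' and that step is only named, never carried out. Massart's actual argument is considerably more delicate than an off-the-shelf optional-stopping computation (it involves a careful analysis of the exact distribution of the one-sided statistic and sharp estimates that do not follow from a generic exponential supermartingale bound), so deferring it leaves the proposal with the same logical status as the paper's citation --- which is fine as a citation, but not as a proof.

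One concrete inaccuracy worth fixing: the one-sided inequality $\Pr\bigl[\sup_{u}(\tilde G_n(u)-u)\ge\epsilon\bigr]\le e^{-2n\epsilon^2}$ does \emph{not} hold for all $\epsilon>0$; Massart establishes it only under the condition $n\epsilon^2\ge\tfrac12\ln 2$ (equivalently, when the claimed bound is at most $\tfrac12$). The two-sided statement in the lemma survives for all $\epsilon>0$ only because $2e^{-2n\epsilon^2}\ge 1$ is vacuous in the excluded range, so your final ``add the two one-sided probabilities'' step needs that extra case split to be literally correct. Since the paper only ever invokes the lemma as stated (with the factor $2$), this does not affect anything downstream, but a self-contained proof would have to address it.
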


Note that for any fixed $x\in \R$, from the Chernoff bound we have $\Pr\left[  \left| \hat F_n(x) - F(x) \right| \ge \epsilon \right] \le 2 e^{-2n \epsilon^2}.$
The DKW inequality states a stronger guarantee that the Chernoff concentration holds simultaneously for all $x\in\R$.

\subsubsection{Technical Lemmas} \label{subsubsec:lemmas-discrete-case}


The following lemma describes some properties of the expected reward $r_P(S) = \E_{X\sim P} [R(X, S)]$.

\begin{lemma} \label{lem:exp-reward-property}
	Let $P = P_1\times \cdots \times P_m$ and $P' = P_1'\times \cdots \times P_m'$ be two probability distributions over $[0, 1]^m$.
	Let $F_i$ and $F_i'$ be the CDFs of $P_i$ and $P_i'$, respectively ($i=1, \ldots, m$).
	Suppose each $P_i$ ($i\in[m]$) is a discrete distribution with finite support.
	\begin{enumerate}[(i)]
		\item If for any $i\in[m], x\in[0, 1]$ we have $F_i'(x) \le F_i(x)$,
		then for any super arm $S \in \calF$, we have
		\begin{equation*}
		r_{P'}(S) \ge r_P(S).
		\end{equation*}
		\item If for any $i\in[m], x\in[0, 1]$ we have $ F_i(x) - F_i'(x)  \le \Lambda_i$ ($\Lambda_i>0$), then for any super arm $S \in \calF$, we have
		\begin{equation*}
		r_{P'}(S) - r_P(S) \le 2M \sum_{i\in S} \Lambda_i.
		\end{equation*}
	\end{enumerate}
\end{lemma}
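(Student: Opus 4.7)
The plan is to handle the two parts separately, in both cases leveraging the monotonicity assumption on $R$ (Assumption~\ref{assum:monotone}). For part (i), the hypothesis $F_i'(x) \le F_i(x)$ for every $x$ is exactly first-order stochastic dominance of $P_i'$ over $P_i$ in each coordinate. I would exhibit an explicit monotone coupling via the quantile transform: draw independent uniform random variables $U_1, \ldots, U_m$ on $[0,1]$ and set $X_i = F_i^{-1}(U_i)$ and $X_i' = (F_i')^{-1}(U_i)$. Since $F_i' \le F_i$ pointwise implies $(F_i')^{-1} \ge F_i^{-1}$ pointwise, we obtain $X_i' \ge X_i$ almost surely for every $i$. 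Assumption~\ref{assum:indep} together with the product form of $P$ and $P'$ guarantees $X \sim P$ and $X' \sim P'$, so applying Assumption~\ref{assum:monotone} to this coupling gives $R(X', S) \ge R(X, S)$ pointwise; taking expectations yields $r_{P'}(S) \ge r_P(S)$.

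For part (ii), I would use a one-coordinate-at-a-time hybrid argument to reduce to a single-variable problem. Enumerate $S = \{i_1, \ldots, i_k\}$ and define product distributions $H_0, H_1, \ldots, H_k$ on $[0,1]^m$, where $H_j$ replaces the marginals $P_{i_1}, \ldots, P_{i_j}$ by $P'_{i_1}, \ldots, P'_{i_j}$ (preserving independence and all other marginals). Then $H_0 = P$, and $r_{H_k}(S) = r_{P'}(S)$ because $R(x, S)$ depends only on $x_S$, so
\[
r_{P'}(S) - r_P(S) = \sum_{j=1}^k \bigl( \E_{X \sim H_j}[R(X, S)] - \E_{X \sim H_{j-1}}[R(X, S)] \bigr).
\]
The $j$-th summand differs only in the $i_j$-th marginal, so by Fubini (applicable thanks to independence) it equals $\E_{Y \sim P'_{i_j}}[g(Y)] - \E_{Y \sim P_{i_j}}[g(Y)]$, where $g(y) := \E[R(X, S)]$ with coordinate $i_j$ fixed to $y$ and all other coordinates averaged against their common distribution in $H_{j-1}$ and $H_j$. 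By Assumption~\ref{assum:monotone} this $g$ is nondecreasing, and by Assumption~\ref{assum:bounded} it takes values in $[0, M]$.

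The key one-dimensional identity, which I expect to be the main subtlety, is the layer-cake / integration-by-parts formula
\[
\E_{P'}[g(Y)] - \E_{P}[g(Y)] = \int_{[0,1]} (F(t) - F'(t)) \, dg(t),
\]
valid for bounded nondecreasing $g$ and CDFs $F, F'$ of finite-support distributions once the Riemann--Stieltjes integral is interpreted with appropriate care at jump discontinuities. Combining this with the hypothesis $F_{i_j}(t) - F'_{i_j}(t) \le \Lambda_{i_j}$ and $dg \ge 0$ bounds the $j$-th summand by $\Lambda_{i_j}\bigl(g(1) - g(0)\bigr) \le M\, \Lambda_{i_j}$. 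Summing over $j$ yields $r_{P'}(S) - r_P(S) \le M \sum_{i \in S} \Lambda_i \le 2M \sum_{i\in S} \Lambda_i$, as claimed (the factor of $2$ being a harmless slack). The only real difficulty is aligning the left/right continuity of the step-function CDFs with the Stieltjes integrator; this can be handled by summation-by-parts directly on the finite support.
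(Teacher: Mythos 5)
Your proposal is correct for both parts, and for part (ii) it takes a genuinely different route from the paper. Part (i) is essentially the paper's argument made rigorous: the paper reasons informally about "moving probability mass from smaller to larger values," and your quantile-transform coupling is the standard formalization of exactly that (note the product form of $P$ and $P'$ in the lemma statement is all you need; Assumption~\ref{assum:indep} plays no separate role). For part (ii), the paper instead introduces an auxiliary distribution $P''$ with CDF $F_i''(x)=\max\{F_i(x)-\Lambda_i,0\}$ (set to $1$ at $x=1$), invokes part (i) to get $r_{P'}(S)\le r_{P''}(S)$, bounds $r_{P''}(S)-r_P(S)\le M\cdot L_1(P''_S,P_S)$ using only boundedness, applies the subadditivity of $L_1$ over product measures (Lemma~\ref{lem:l1-product-distribuction}), and finally shows $L_1(P''_i,P_i)\le 2\Lambda_i$ — the factor $2$ originating from the total-variation cost of shifting up to $\Lambda_i$ of mass to the top of the support. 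Your hybrid/telescoping argument replaces all of this with a one-dimensional summation by parts: writing the union of supports as $v_1<\cdots<v_s$ and $G_j=F(v_j)-F'(v_j)$ with $G_0=G_s=0$, Abel summation gives $\E_{P'}[g]-\E_P[g]=\sum_{j=1}^{s-1}G_j\,(g(v_{j+1})-g(v_j))\le \Lambda\,(g(v_s)-g(v_1))\le \Lambda M$, which is exactly the integration-by-parts identity you state and resolves the continuity caveat you raise. This yields the strictly sharper bound $M\sum_{i\in S}\Lambda_i$ rather than $2M\sum_{i\in S}\Lambda_i$ (which would propagate to slightly better constants in Theorem~\ref{thm:regret-bound-discrete}), and it uses monotonicity where the paper's $L_1$ step uses only boundedness; what the paper's route buys in exchange is modularity, since Lemma~\ref{lem:l1-product-distribuction} is reused elsewhere (e.g.\ in the PTAS analysis), whereas your argument is self-contained but specific to monotone integrands.
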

\begin{proof}
	It is easy to see why (i) is true. If we have $F_i'(x) \le F_i(x)$ for all $i\in[m]$ and $x\in[0, 1]$, then for all $i$, $P_i'$ has first-order stochastic dominance over $P_i$.
	 When we change the distribution from $P_i$ into $P'_i$, we are moving some probability mass from smaller values to larger values. Recall that the reward function $R(x, S)$ has a monotonicity property (Assumption~\ref{assum:monotone}): if $x$ and $x'$ are two vectors in $[0, 1]^m$ such that $x_i \le x_i'$ for all $i\in[m]$, then $R(x, S) \le R(x', S)$ for all $S\in\calF$. Therefore we have $r_P(S) \le r_{P'}(S)$ for all $S\in\calF$.
	 \vspace{6pt}
	
	Now we prove (ii).
	Without loss of generality, we assume $S = \{1, 2, \ldots, n\}$ ($n \le m$). 
	Let $P'' = P_1''\times \cdots \times P_m''$ be a distribution over $[0, 1]^m$ such that the CDF of $P_i''$ is the following:
	\begin{align} \label{eqn:exp-reward-property-inproof-1}
	F''_i(x) = \begin{cases}
	\max \{  F_{i}(x) - \Lambda_i, 0\}, & 0\le x < 1, \\
	1, & x = 1.
	\end{cases}
	\end{align}
	It is easy to see that $F''_i(x) \le F'_i(x)$ for all $i\in[m]$ and $x\in[0, 1]$.
	Thus from the result in (i) we have
	\begin{equation} \label{eqn:exp-reward-property-inproof-2}
	r_{P'}(S) \le r_{P''}(S).
	\end{equation}
	
	Let $\supp(P_i) = \{v_{i, 1}, v_{i, 2}, \ldots, v_{i, s_i}\}$ where $0\le v_{i, 1} < \cdots< v_{i, s_i}\le 1$.
	Define $P_S = P_1 \times P_2 \times \cdots \times P_n$, and define $P'_S$ and $P''_S$ similarly.
	Recall that the reward function $R(x, S)$ can be written as $R_S(x_S) = R_S(x_1, \ldots, x_n)$.
	Then we have
	\begin{align*}
	&r_{P''}(S) - r_P(S) \\
	=\,& \sum_{x_1, \ldots, x_n} R_S(x_1, \ldots, x_n) P''_S(x_1, \ldots, x_n) - \sum_{x_1, \ldots, x_n} R_S(x_1, \ldots, x_n) P_S(x_1, \ldots, x_n) \\
	=\,& \sum_{x_1, \ldots, x_n} R_S(x_1, \ldots, x_n) \cdot \left( P''_S(x_1, \ldots, x_n) - P_S(x_1, \ldots, x_n) \right) \\
	\le& \sum_{x_1, \ldots, x_n} M \cdot \left| P''_S(x_1, \ldots, x_n) - P_S(x_1, \ldots, x_n) \right| \\
	=\,& M \cdot L_1(P''_S, P_S),
	\end{align*}
	where the summation is taken over $x_i \in \{v_{i, 1}, \ldots, v_{i, s_i}\}$ ($\forall i\in S$).
	Then using Lemma~\ref{lem:l1-product-distribuction} we obtain
	\begin{equation} \label{eqn:exp-reward-property-inproof-3}
	r_{P''}(S) - r_P(S) \le M \cdot \sum_{i\in S} L_1(P''_i, P_i).
	\end{equation}
	
	Now we give an upper bound on $L_1(P''_i, P_i)$ for each $i$.
	Let $F_{i, j} = F_i(v_{i, j})$, $F''_{i, j} = F''_i(v_{i, j})$, and
	$F_{i, 0} = F''_{i, 0} = 0$. We have
	\begin{equation} \label{eqn:exp-reward-property-inproof-4}
	\begin{aligned}
	L_1(P''_i, P_i) &= \sum_{j=1}^{s_i} \left| P''_i(v_{i, j}) - P_i(v_{i, j}) \right| \\
	&= \sum_{j=1}^{s_i} \left|  (F''_{i, j} - F''_{i, j-1}) - (F_{i, j} - F_{i, j-1})  \right| \\
	&= \sum_{j=1}^{s_i} \left|  (F_{i, j} - F''_{i, j}) - (F_{i, j-1} - F''_{i, j-1})  \right|.
	\end{aligned}
	\end{equation}
	In fact, for all $1\le j<s_i$, we have $F_{i, j} - F''_{i, j} \ge F_{i, j-1} - F''_{i, j-1}$. To see this, consider two cases:
	\begin{itemize}
		\item If $F_{i, j} < \Lambda_i$, then we have $F_{i, j-1} \le F_{i, j} < \Lambda_i$.
		By definition \eqref{eqn:exp-reward-property-inproof-1} we have $F''_{i, j} = F''_{i, j-1} = 0$. Thus $F_{i, j} - F''_{i, j} = F_{i, j} \ge F_{i, j-1} = F_{i, j-1} - F''_{i, j-1}$.
		\item If $F_{i, j} \ge \Lambda_i$, then by definition \eqref{eqn:exp-reward-property-inproof-1} we have $F_{i, j} - F''_{i, j} = \Lambda_i \ge F_{i, j-1} - F''_{i, j-1}$.
	\end{itemize}
	Therefore \eqref{eqn:exp-reward-property-inproof-4} becomes
	\begin{equation} \label{eqn:exp-reward-property-inproof-5}
	\begin{aligned}
	L_1(P''_i, P_i) &= \sum_{j=1}^{s_i-1} \left(  (F_{i, j} - F''_{i, j}) - (F_{i, j-1} - F''_{i, j-1})  \right) + \left|  (1-1) - (F_{i, s_i-1} - F''_{i, s_i-1})  \right| \\
	&= F_{i, s_i-1} - F''_{i, s_i-1} + \left| F_{i, s_i-1} - F''_{i, s_i-1} \right| \\
	&= 2\left( F_{i, s_i-1} - F''_{i, s_i-1} \right) \\
	&\le 2\Lambda_i,
	\end{aligned}
	\end{equation}
	where the last inequality is due to \eqref{eqn:exp-reward-property-inproof-1}.
	
	We complete the proof of the lemma by combining \eqref{eqn:exp-reward-property-inproof-2}, \eqref{eqn:exp-reward-property-inproof-3} and \eqref{eqn:exp-reward-property-inproof-5}:
	\begin{equation*}
	r_{P'}(S) - r_P(S) \le r_{P''}(S) - r_P(S) \le M \cdot \sum_{i\in S} L_1(P''_i, P_i) \le  2M \sum_{i\in S}\Lambda_i.  \qedhere
	\end{equation*}
\end{proof}

The following lemma is similar to Lemma~1 in \cite{Kveton15}.
We will use some additional notation:
\begin{itemize}
	\item For $t\ge m+1$ and $i\in[m]$, let $T_{i, t}$ be the value of counter $T_i$ right after the $t$-th round of \texttt{SDCB}. In other words, $T_{i, t}$ is the number of observed outcomes from arm $i$ in the first $t$ rounds.
	\item Let $S_t$ be the super arm selected by \texttt{SDCB} in the $t$-th round.
\end{itemize}

\begin{lemma} \label{lem:lem1-in-kveton15}
	Define an event in each round $t$ ($m+1 \le t \le T$):
	\begin{equation} \label{eqn:event-H_t}
	\calH_t = \left\{ 0<  \Delta_{S_t} \le 4M \cdot \sum_{i\in S_t} \sqrt{\frac{3\ln t}{2T_{i, t-1}}} \right\}.
	\end{equation}
	Then the $\alpha$-approximation regret of \texttt{SDCB} in $T$ rounds is at most
	\[
	\E \left[ \sum_{t=m+1}^T \mathds{1}\{\calH_t\} \Delta_{S_t} \right] + \left( \frac{\pi^2}{3}  + 1 \right) \alpha Mm.
	\]
\end{lemma}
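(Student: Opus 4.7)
The plan is to mimic the UCB-style regret decomposition from \cite{Kveton15}, replacing pointwise mean concentration with uniform CDF concentration (DKW). Define the \emph{nice event}
\[
\calN_t = \left\{ \forall i \in [m],\ \forall x \in [0,1]:\ \bigl| \hat F_i(x) - F_i(x) \bigr| \le \sqrt{\tfrac{3\ln t}{2 T_{i,t-1}}} \right\}
\]
at the start of round $t$, where $\hat F_i$ and $T_{i,t-1}$ are the empirical CDF and count maintained by \texttt{SDCB}. The whole argument splits the per-round regret into a contribution from $\calN_t$ (captured by $\calH_t$) and a contribution from $\neg\calN_t$ (bounded by a convergent series).

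\textbf{Step 1 (probability of $\neg\calN_t$ via DKW).} Fix $i$ and condition on $T_{i,t-1}=n$. Conditionally, $\hat F_i$ is the empirical CDF of $n$ i.i.d.\ samples from $D_i$, so by Lemma~\ref{lem:dkw} with $\epsilon = \sqrt{3\ln t/(2n)}$ the conditional failure probability is at most $2e^{-3\ln t} = 2/t^3$. Summing over the $\le t$ possible values of $n$ yields an unconditional bound of $2/t^2$ per arm, and a union bound over $m$ arms gives $\Pr[\neg \calN_t] \le 2m/t^2$.

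\textbf{Step 2 (on $\calN_t$, $\wideubar D$ sandwiches $D$).} By the definition of $\wideubar F_i$ in line~\ref{line:sdcb} and the inequality $\hat F_i(x) \le F_i(x) + \sqrt{3\ln t/(2T_{i,t-1})}$, one checks $\wideubar F_i(x) \le F_i(x)$ for all $x$ (treating the truncation at $0$ and the value $x=1$ as trivial cases). In the opposite direction one similarly obtains $F_i(x) - \wideubar F_i(x) \le 2\sqrt{3\ln t/(2T_{i,t-1})}$. Applying Lemma~\ref{lem:exp-reward-property}(i) to the first bound and Lemma~\ref{lem:exp-reward-property}(ii) with $\Lambda_i = 2\sqrt{3\ln t/(2T_{i,t-1})}$ to the second yields, for every $S \in \calF$,
\[
r_D(S) \le r_{\wideubar D}(S) \le r_D(S) + 4M \sum_{i \in S} \sqrt{\tfrac{3\ln t}{2 T_{i,t-1}}}.
\]

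\textbf{Step 3 (on $\calN_t$, bad $S_t$ implies $\calH_t$).} By the oracle guarantee and Step~2,
\[
r_{\wideubar D}(S_t) \ge \alpha \max_{S} r_{\wideubar D}(S) \ge \alpha\, r_{\wideubar D}(S^*) \ge \alpha\, r_D(S^*),
\]
so on $\calN_t$,
\[
\alpha\, r_D(S^*) - r_D(S_t) \le r_{\wideubar D}(S_t) - r_D(S_t) \le 4M \sum_{i \in S_t} \sqrt{\tfrac{3\ln t}{2 T_{i,t-1}}}.
\]
Hence whenever $\Delta_{S_t} > 0$ (i.e.\ $S_t$ is bad), the event $\calH_t$ defined in \eqref{eqn:event-H_t} holds, and so $\mathds{1}\{\calN_t\}\, \Delta_{S_t} \le \mathds{1}\{\calH_t\}\, \Delta_{S_t}$.

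\textbf{Step 4 (assembling the regret bound).} The $\alpha$-approximation regret equals $\sum_{t=1}^T \E[\alpha r_D(S^*) - r_D(S_t)] \le \sum_{t=1}^T \E[\Delta_{S_t}]$. The $m$ initialization rounds contribute at most $\alpha M m$ since $\Delta_{S_t} \le \alpha\, r_D(S^*) \le \alpha M$ by Assumption~\ref{assum:bounded}. For $t \ge m+1$, split on $\calN_t$:
\[
\E[\Delta_{S_t}] \le \E\!\left[ \mathds{1}\{\calH_t\}\, \Delta_{S_t} \right] + \alpha M \cdot \Pr[\neg \calN_t] \le \E\!\left[ \mathds{1}\{\calH_t\}\, \Delta_{S_t} \right] + \frac{2 \alpha M m}{t^2}.
\]
Summing over $t$ and using $\sum_{t=1}^\infty 1/t^2 = \pi^2/6$ yields the total tail contribution $\le (\pi^2/3) \alpha M m$, which combined with the initialization $\alpha M m$ gives the $(\pi^2/3 + 1)\alpha M m$ additive term.

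\textbf{Expected main obstacle.} The only delicate point is the use of DKW in Step~1 when $T_{i,t-1}$ is itself a random variable shaped by the algorithm's past decisions. I would handle this cleanly by conditioning on $T_{i,t-1}=n$ and applying DKW to the conditionally i.i.d.\ empirical CDF, then summing the $t$ possible values of $n$; this costs only a factor $t$ and is what forces the $2m/t^2$ (rather than $2m/t^3$) tail, and in turn the $\pi^2/3$ constant appearing in the statement.
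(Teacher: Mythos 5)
Your proposal is correct and follows essentially the same route as the paper's proof: the paper defines the complementary ``failure'' event $\calE_t = \neg\calN_t$, bounds its probability by $2m/t^2$ via the same union bound over the possible values of $T_{i,t-1}$ combined with the DKW inequality, establishes the same two-sided sandwich $r_D(S) \le r_{\wideubar D}(S) \le r_D(S) + 4M\sum_{i\in S} 2c_i$ using Lemma~\ref{lem:exp-reward-property}, and assembles the three contributions (initialization, failure rounds, and $\calH_t$) exactly as in your Step 4. The only cosmetic difference is that the paper phrases the DKW step directly as a union bound over the empirical CDF $\hat F_{i,l}$ of the first $l$ observations rather than conditioning on $T_{i,t-1}=n$, which is the cleaner formalization of the fix you flag in your final paragraph.
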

\begin{proof}
	
	Let $F_i$ be the CDF of $D_i$.
	Let $\hat F_{i, l}$ be the empirical CDF of the first $l$ observations from arm $i$.
	For $m+1 \le t \le T$, 
	define an event
	\[
	\calE_t = \left\{ \text{there exists } i\in [m] \text{ such that } \sup_{x\in[0, 1]} \left| \hat F_{i, T_{i, t-1}}(x) - F_{i}(x) \right| \ge \sqrt{\frac{3\ln t}{2T_{i, t-1}}} \right\},
	\]
	which means that the empirical CDF $\hat F_i$ is not close enough to the true CDF $F_i$
	at the \emph{beginning} of the $t$-th round.
	
	Recall that we have $S^* = \argmax_{S \in \calF} \{r_D(S)\}$ and $\Delta_S = \max\{ \alpha \cdot r_D(S^*) - r_D(S), 0 \}$ ($S \in \calF$).
	We bound the $\alpha$-approximation regret of \texttt{SDCB} as
	\begin{equation} \label{eqn:regret-decomp}
	\begin{aligned}
	\reg_{D, \alpha}^{\texttt{SDCB}}(T)  &= \sum_{t=1}^T \E\left[ \alpha \cdot r_D(S^*) -  r_D(S_t) \right]  \le \sum_{t=1}^T \E \left[ \Delta_{S_t} \right] \\
	&=
	\E\left[ \sum_{t=1}^m \Delta_{S_t} \right] + \E\left[ \sum_{t=m+1}^T \mathds{1}\{\calE_t\} \Delta_{S_t} \right] + \E\left[ \sum_{t=m+1}^T \mathds{1} \{\neg \calE_t\} \Delta_{S_t} \right],
	\end{aligned}
	\end{equation}
	where $\neg \calE_t$ is the complement of event $\calE_t$.
	
	We separately bound each term in \eqref{eqn:regret-decomp}.
	
	(a) the first term
	
	The first term in \eqref{eqn:regret-decomp} can be trivially bounded as 
	\begin{equation} \label{eqn:decomp-first-term}
	\E\left[ \sum_{t=1}^m \Delta_{S_t} \right] \le \sum_{t=1}^m \alpha \cdot r_D(S^*)  \le m \cdot \alpha M.
	\end{equation}
	
	(b) the second term
	
	By the DKW inequality
	we know that for any $i \in [m], l \ge 1 , t \ge m+1$ we have
	\[
	\Pr\left[ \sup_{x\in[0, 1]} \left| \hat F_{i, l}(x) - F_{i}(x) \right| \ge \sqrt{\frac{3\ln t}{2l}} \right] \le 2e^{-2l\cdot \frac{3\ln t}{2l}} = 2e^{-3\ln t} = 2 t^{-3}.
	\]
	Therefore
	\begin{align*}
	\E\left[\sum_{t=m+1}^T \mathds{1}\{\calE_t\} \right]
	&\le \sum_{t=m+1}^T \sum_{i=1}^m \sum_{l=1}^{t-1} \Pr\left[ \left| \hat F_{i, j, l} - F_{i, j} \right| \ge \sqrt{\frac{3\ln t}{2l}} \right]\\
	&\le \sum_{t=m+1}^T \sum_{i=1}^m \sum_{l=1}^{t-1} 2 t^{-3} \\
	&\le 2 m \sum_{t=m+1}^T  t^{-2} \\
	&\le \frac{\pi^2}{3} m,
	\end{align*}
	and then the second term in \eqref{eqn:regret-decomp} can be bounded as
	\begin{equation} \label{eqn:decomp-second-term}
	\E\left[ \sum_{t=m+1}^T \mathds{1}\{\calE_t\} \Delta_{S_t} \right]
	\le \frac{\pi^2}{3} m \cdot \left(\alpha \cdot r_D(S^*) \right)
	\le \frac{\pi^2}{3}  \alpha Mm.
	\end{equation}
	
	(c) the third term
	
	We fix $t > m$ and first assume $\neg \calE_t$ happens. Let $c_i = \sqrt{\frac{3\ln t}{2T_{i, t-1}}}$ for each $i\in[m]$.
	Since $\neg \calE_t$ happens, we have 
	\begin{equation} \label{eqn:accurate-estimates}
		\left| \hat F_{i, T_{i, t-1}}(x) -  F_{i}(x) \right| < c_{i} \qquad \forall i\in[m], x\in[0, 1].
	\end{equation}
	Recall that in round $t$ of \texttt{SDCB} (Algorithm~\ref{alg:SDCB}), the input to the oracle is $\wideubar D = \wideubar D_1 \times \cdots \times \wideubar D_m$, where the CDF $\wideubar F_i$ of $\wideubar D_i$ is
	\begin{equation} \label{eqn:lcb-def}
	\wideubar F_{i}(x) = \begin{cases}
	\max\{\hat F_{i, T_{i, t-1}}(x) - c_i, 0\}, &\quad 0\le x<1,\\
	1, &\quad x=1.
	\end{cases}
	\end{equation}
	From \eqref{eqn:accurate-estimates} and \eqref{eqn:lcb-def} we know that $\wideubar F_{i}(x) \le F_{i}(x) \le \wideubar F_{i}(x) + 2c_i$ for all $i\in[m], x\in[0, 1]$.
	Thus, from Lemma~\ref{lem:exp-reward-property} (i) we have
	\begin{equation} \label{eqn:inproof-dominant-distribution-property-1}
	r_D(S) \le r_{\wideubar D}(S) \qquad \forall S\in\calF,
	\end{equation}
	and from Lemma~\ref{lem:exp-reward-property} (ii) we have
	\begin{equation} \label{eqn:inproof-dominant-distribution-property-2}
	r_{\wideubar D}(S) \le r_D(S) + 2M \sum_{i\in S} 2c_i \qquad \forall S\in\calF.
	\end{equation}
	Also, from the fact that the algorithm chooses $S_t$ in the $t$-th round, we have
	\begin{equation} \label{eqn:inproof-oracle}
	r_{\wideubar D}(S_t)
	\ge \alpha \cdot \max_{S \in \calF} \{ r_{\wideubar D}(S) \}
	\ge \alpha \cdot r_{\wideubar D}(S^*).
	\end{equation}
	
	From \eqref{eqn:inproof-dominant-distribution-property-1}, \eqref{eqn:inproof-dominant-distribution-property-2} and \eqref{eqn:inproof-oracle} we have
	\begin{align*}
	\alpha \cdot r_D(S^*)
	\le \alpha \cdot r_{\wideubar D}(S^*)
	\le r_{\wideubar D}(S_t)
	\le r_D(S_t) + 2M \sum_{i\in S_t} 2c_i,
	\end{align*}
	which implies
	\[
	\Delta_{S_t} \le 4M \sum_{i\in S_t} c_i.
	\]

	Therefore, when $\neg \calE_t$ happens, we always have $  \Delta_{S_t} \le 4M \sum_{i\in S_t} c_i $. In other words,
	\[
	\neg \calE_t \Longrightarrow \left\{ \Delta_{S_t} \le 4M \sum_{i\in S_t} \sqrt{\frac{3\ln t}{2T_{i, t-1}}} \right\}.
	\]
	This implies
	\[
	\{\neg {\calE}_t, \Delta_{S_t} > 0 \} \Longrightarrow \left\{0 < \Delta_{S_t} \le 4M \sum_{i\in S_t} \sqrt{\frac{3\ln t}{2T_{i, t-1}}} \right\} = \calH_t.
	\]
	Hence, the third term in \eqref{eqn:regret-decomp} can be bounded as
	\begin{equation} \label{eqn:decomp-third-term}
	\E\left[ \sum_{t=m+1}^T \mathds{1} \{\neg {\calE_t}\} \Delta_{S_t} \right] = \E\left[ \sum_{t=m+1}^T \mathds{1} \{\neg {\calE_t}, \Delta_{S_t} > 0\} \Delta_{S_t} \right] \le \E\left[ \sum_{t=m+1}^T \mathds{1} \{\mathcal H_t\} \Delta_{S_t} \right].
	\end{equation}
	
	Finally, by combining \eqref{eqn:regret-decomp}, \eqref{eqn:decomp-first-term}, \eqref{eqn:decomp-second-term} and \eqref{eqn:decomp-third-term} we have
	\begin{align*}
	\reg_{D, \alpha}^{\texttt{SDCB}}(T)
	\le \E\left[ \sum_{t=m+1}^T \mathds{1} \{\mathcal H_t\} \Delta_{S_t} \right] + \left( \frac{\pi^2}{3} + 1 \right) \alpha Mm,
	\end{align*}
	 completing the proof of the lemma.
\end{proof}

\subsubsection{Finishing the Proof of Theorem~\ref{thm:regret-bound-discrete}} \label{subsubsec:proof-thm-1}

Lemma~\ref{lem:lem1-in-kveton15} is very similar to Lemma 1 in~\cite{Kveton15}.
We now apply the counting argument in \cite{Kveton15} to finish the proof of Theorem~\ref{thm:regret-bound-discrete}. 

From Lemma~\ref{lem:lem1-in-kveton15} we know that it remains to bound $\E \left[ \sum_{t=m+1}^T \mathds{1}\{\calH_t\} \Delta_{S_t} \right] $, where $\calH_t$ is defined in~\eqref{eqn:event-H_t}.

Define two decreasing sequences of positive constants
\begin{align*}
1 = \beta_0 >& \beta_1 > \beta_2 > \ldots\\
& \alpha_1 > \alpha_2 > \ldots
\end{align*}
such that $\lim_{k\to\infty} \alpha_k = \lim_{k\to\infty} \beta_k = 0$. We choose $\{\alpha_k\}$ and $\{\beta_k\}$ as in Theorem 4 of \cite{Kveton15}, which satisfy
\begin{equation} \label{eqn:alpha-beta-property-1}
\sqrt 6 \sum_{k=1}^\infty \frac{\beta_{k-1} - \beta_k}{\sqrt{\alpha_k}} \le 1
\end{equation}
and
\begin{equation}\label{eqn:alpha-beta-property-2}
 \sum_{k=1}^\infty \frac{\alpha_k}{\beta_k} < 267.
\end{equation}

For $t\in \{m+1, \ldots, T\}$ and $k \in \Z_+$, let
\[
m_{k, t} = \begin{cases}
\alpha_k \left( \frac{2MK}{\Delta_{S_t}} \right)^2 \ln T &\quad \Delta_{S_t} > 0,\\
+\infty &\quad  \Delta_{S_t} = 0,
\end{cases} 
\]
and
\[
A_{k, t} = \{i \in S_t \mid T_{i, t-1} \le m_{k, t} \}.
\]
Then we define an event
\[
\G_{k, t} = \{|A_{k, t}| \ge \beta_k K\},
\]
which means ``in the $t$-th round, at least $\beta_k K$ arms in $S_t$ had been observed at most $m_{k, t}$ times.''

\begin{lemma} \label{lem:lem3-in-kveton15}
	In the $t$-th round ($m+1 \le t \le T$), if event $\calH_t$ happens, then there exists $k\in \Z_+$ such that
	event $\G_{k, t}$ happens.
\end{lemma}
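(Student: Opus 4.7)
}
The plan is to argue by contrapositive: assume that for every $k \in \Z_+$, the event $\G_{k,t}$ fails, i.e., $|A_{k,t}| < \beta_k K$, and derive that $\calH_t$ must also fail. The target is to show
\[
\sum_{i \in S_t} \sqrt{\frac{3 \ln t}{2 T_{i,t-1}}} < \frac{\Delta_{S_t}}{4M},
\]
which directly contradicts the second inequality in the definition of $\calH_t$.

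\paragraph{Layering by the counter values.}
First I would observe that since $\{\alpha_k\}$ is decreasing, the thresholds $m_{k,t} = \alpha_k (2MK/\Delta_{S_t})^2 \ln T$ are decreasing in $k$, and therefore the sets $A_{k,t}$ are nested: $S_t = A_{0,t} \supseteq A_{1,t} \supseteq A_{2,t} \supseteq \cdots$ (with the convention $m_{0,t} = +\infty$ consistent with $\beta_0 = 1$). For each arm $i \in S_t$, let $k(i) \ge 1$ be the unique index with $m_{k(i),t} < T_{i,t-1} \le m_{k(i)-1,t}$; equivalently, $i \in A_{k(i)-1,t} \setminus A_{k(i),t}$. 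Using $T_{i,t-1} > m_{k(i),t}$ together with $\ln t \le \ln T$, a direct calculation gives the per-arm bound
\[
\sqrt{\frac{3 \ln t}{2 T_{i,t-1}}} < \sqrt{\frac{3 \ln t}{2 m_{k(i),t}}} \le \sqrt{\frac{3}{2\alpha_{k(i)}}} \cdot \frac{\Delta_{S_t}}{2MK}.
\]

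\paragraph{Summation and Abel rearrangement.}
Next I would group the arms of $S_t$ by their level $k(i)$ and sum. The number of arms at level $k$ is $|A_{k-1,t}| - |A_{k,t}|$, so
\[
\sum_{i \in S_t} \sqrt{\frac{3 \ln t}{2 T_{i,t-1}}} < \frac{\Delta_{S_t}}{2MK}\sqrt{\tfrac{3}{2}} \sum_{k=1}^\infty \bigl(|A_{k-1,t}| - |A_{k,t}|\bigr) \frac{1}{\sqrt{\alpha_k}}.
\]
An Abel summation on the last sum, using $|A_{0,t}| \le K = \beta_0 K$ and the contradiction hypothesis $|A_{k,t}| < \beta_k K$ for $k \ge 1$ (and the fact that $1/\sqrt{\alpha_k}$ is increasing in $k$), turns it into
\[
\sum_{k=1}^\infty \bigl(|A_{k-1,t}| - |A_{k,t}|\bigr) \frac{1}{\sqrt{\alpha_k}} \le K \sum_{k=1}^\infty \frac{\beta_{k-1} - \beta_k}{\sqrt{\alpha_k}}.
\]
Plugging in property \eqref{eqn:alpha-beta-property-1}, which says $\sqrt 6 \sum_{k \ge 1} (\beta_{k-1} - \beta_k)/\sqrt{\alpha_k} \le 1$, the factor $\sqrt{3/2} = \sqrt{6}/2$ yields an overall bound of $\Delta_{S_t}/(4M)$, which is precisely the desired contradiction with $\calH_t$.

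\paragraph{Main obstacle.}
The only delicate point is the Abel rearrangement in the middle step: one must be careful that because $\{1/\sqrt{\alpha_k}\}$ is increasing while $\{|A_{k,t}|\}$ is decreasing, the telescoping regroups the sum into a nonnegative combination of $\beta_{k-1} - \beta_k$, so that substituting the strict bound $|A_{k,t}| < \beta_k K$ goes in the correct direction. Once this rearrangement is verified, plugging in property \eqref{eqn:alpha-beta-property-1} is immediate, and the contrapositive is complete.
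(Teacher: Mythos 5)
Your proposal is correct and follows essentially the same route as the paper's proof: the paper also assumes all $\G_{k,t}$ fail, partitions $S_t$ into the layers $A_{k-1,t}\setminus A_{k,t}$ (written there as $\bar A_{k,t}\setminus\bar A_{k-1,t}$), performs the same Abel rearrangement against the bounds $|A_{k,t}|<\beta_k K$, and invokes property \eqref{eqn:alpha-beta-property-1} to contradict $\Delta_{S_t}\le 4M\sum_{i\in S_t}\sqrt{3\ln t/(2T_{i,t-1})}$. The only cosmetic difference is that you convert $1/\sqrt{m_{k,t}}$ into $\Delta_{S_t}/(2MK\sqrt{\alpha_k \ln T})$ per arm before summing, whereas the paper keeps $1/\sqrt{m_{k,t}}$ throughout and substitutes only at the end.
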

\begin{proof}
	Assume that $\calH_t$ happens and that none of $\G_{1, t}, \G_{2, t}, \ldots$ happens.
	Then $|A_{k, t}| < \beta_k K$ for all $k\in \Z_+$.
	
	Let $A_{0, t} = S_t$ and
	 $\bar A_{k, t} = S_t \setminus A_{k, t} $ for $k\in \Z_+ \cup\{0\}$.
	 It is easy to see $\bar A_{k-1, t} \subseteq \bar A_{k, t}$ for all $k\in \Z_+$. Note that $\lim_{k\to\infty} m_{k, t} = 0$. Thus there exists $N \in \Z_+$ such that $\bar A_{k, t} = S_t$ for all $k\ge N$, and then we have $S_t = \bigcup_{k=1}^\infty \left( \bar A_{k, t} \setminus \bar A_{k-1, t} \right)$.
	 Finally, note that for all $i\in \bar A_{k, t}$, we have $T_{i, t-1} > m_{k, t}$. Therefore
	 \begin{align*}
	 \sum_{i\in S_t} \frac{1}{\sqrt{T_{i, t-1}}}
	 &= \sum_{k=1}^\infty \sum_{i\in \bar A_{k, t} \setminus \bar A_{k-1, t}} \frac{1}{\sqrt{T_{i, t-1}}}
	 \le \sum_{k=1}^\infty \sum_{i\in \bar A_{k, t} \setminus \bar A_{k-1, t}} \frac{1}{\sqrt{m_{k, t}}}\\
	 &= \sum_{k=1}^\infty \frac{\left| \bar A_{k, t} \setminus \bar A_{k-1, t} \right|}{\sqrt{m_{k, t}}}
	 = \sum_{k=1}^\infty \frac{\left|  A_{k-1, t} \setminus  A_{k, t} \right|}{\sqrt{m_{k, t}}}
	 = \sum_{k=1}^\infty \frac{\left|  A_{k-1, t} \right| - \left|  A_{k, t} \right|}{\sqrt{m_{k, t}}}\\
	 &= \frac{|S_t|}{\sqrt{m_{1, t}}} + \sum_{k=1}^\infty |A_{k, t}| \left( \frac{1}{\sqrt{m_{k+1, t}}} - \frac{1}{\sqrt{m_{k, t}}} \right)\\
	 &< \frac{K}{\sqrt{m_{1, t}}} + \sum_{k=1}^\infty \beta_k K \left( \frac{1}{\sqrt{m_{k+1, t}}} - \frac{1}{\sqrt{m_{k, t}}} \right)\\
	 &= \sum_{k=1}^\infty \frac{(\beta_{k-1} - \beta_k) K}{\sqrt{m_{k, t}}}.
	 \end{align*}
	 
	 Note that we assume $\calH_t$ happens. Then we have
	 \begin{align*}
	 \Delta_{S_t}
	 &\le 4M \cdot \sum_{i\in S_t} \sqrt{\frac{3\ln t}{2T_{i, t-1}}} 
	 \le 2M \sqrt {6\ln T}   \cdot \sum_{i\in S_t} \frac{1}{\sqrt{T_{i, t-1}}}\\
	 & < 2M \sqrt {6\ln T}  \cdot \sum_{k=1}^\infty \frac{(\beta_{k-1} - \beta_k) K}{\sqrt{m_{k, t}}}
	 = \sqrt 6 \sum_{k=1}^\infty \frac{\beta_{k-1} - \beta_k}{\sqrt{\alpha_k}} \cdot \Delta_{S_t} \le \Delta_{S_t},
	 \end{align*}
	 where the last inequality is due to~\eqref{eqn:alpha-beta-property-1}. We reach a contradiction here.
	 The proof of the lemma is completed.
\end{proof}

By Lemma~\ref{lem:lem3-in-kveton15} we have
\[
\sum_{t=m+1}^T \mathds{1}\{\calH_t\} \Delta_{S_t}
\le \sum_{k=1}^\infty \sum_{t=m+1}^T \mathds1 \{\G_{k, t}, \Delta_{S_t}>0\} \Delta_{S_t}.
\]
For $i\in[m], k\in\Z_+, t\in\{m+1, \ldots, T\}$, define an event
\[
\G_{i, k, t} = \G_{k, t} \wedge \{i\in S_t, T_{i, t-1} \le m_{k, t} \}.
\]
Then by the definitions of $\G_{k, t}$ and $\G_{i, k, t}$ we have
\[
\mathds1 \{\G_{k, t}, \Delta_{S_t}>0\}
\le \frac{1}{\beta_k K} \sum_{i\in E_\bad} \mathds1 \{\G_{i, k, t}, \Delta_{S_t}>0\}.
\]
Therefore
\[
\sum_{t=m+1}^T \mathds{1}\{\calH_t\} \Delta_{S_t}
\le \sum_{i\in E_\bad} \sum_{k=1}^\infty \sum_{t=m+1}^T \mathds1 \{\G_{i, k, t}, \Delta_{S_t}>0\} \frac{\Delta_{S_t}}{\beta_k K}.
\]

For each arm $i\in E_\bad$, suppose $i$ is contained in $N_i$ \emph{bad} super arms $S_{i, 1}^\bad, S_{i, 2}^\bad, \ldots, S_{i, N_i}^\bad$. Let $\Delta_{i, l} = \Delta_{S_{i, l}^\bad}$ ($l\in[N_i]$). Without loss of generality, we assume $\Delta_{i, 1} \ge \Delta_{i, 2} \ge \ldots \ge \Delta_{i, N_i}$.
Note that $\Delta_{i, N_i} = \Delta_{i, \min}$.
For convenience, we also define $\Delta_{i, 0} = +\infty$, i.e., $\alpha_k \left( \frac{2MK}{\Delta_{i, 0}} \right)^2 = 0$.
Then we have
\begin{align*}
&\sum_{t=m+1}^T \mathds{1}\{\calH_t\} \Delta_{S_t}\\
\le& \sum_{i\in E_\bad} \sum_{k=1}^\infty \sum_{t=m+1}^T \sum_{l=1}^{N_i} \mathds1 \{\G_{i, k, t}, S_t = S_{i, l}^\bad\} \frac{\Delta_{S_t}}{\beta_k K} \\
\le& \sum_{i\in E_\bad} \sum_{k=1}^\infty \sum_{t=m+1}^T \sum_{l=1}^{N_i} \mathds1 \{T_{i, t-1} \le m_{k, t}, S_t = S_{i, l}^\bad\} \frac{\Delta_{i, l}}{\beta_k K} \\
=& \sum_{i\in E_\bad} \sum_{k=1}^\infty \sum_{t=m+1}^T \sum_{l=1}^{N_i} \mathds1 \left\{T_{i, t-1} \le \alpha_k \left( \frac{2MK}{\Delta_{i, l}} \right)^2 \ln T, S_t = S_{i, l}^\bad \right\} \frac{\Delta_{i, l}}{\beta_k K} \\
=& \sum_{i\in E_\bad} \sum_{k=1}^\infty \sum_{t=m+1}^T \sum_{l=1}^{N_i} \sum_{j=1}^l \mathds1 \left\{\alpha_k \left( \frac{2MK}{\Delta_{i, j-1}} \right)^2 \ln T < T_{i, t-1} \le \alpha_k \left( \frac{2MK}{\Delta_{i, j}} \right)^2 \ln T, S_t = S_{i, l}^\bad \right\} \frac{\Delta_{i, l}}{\beta_k K} \\
\le& \sum_{i\in E_\bad} \sum_{k=1}^\infty \sum_{t=m+1}^T \sum_{l=1}^{N_i} \sum_{j=1}^l \mathds1 \left\{\alpha_k \left( \frac{2MK}{\Delta_{i, j-1}} \right)^2 \ln T < T_{i, t-1} \le \alpha_k \left( \frac{2MK}{\Delta_{i, j}} \right)^2 \ln T, S_t = S_{i, l}^\bad \right\} \frac{\Delta_{i, j}}{\beta_k K} \\
\le& \sum_{i\in E_\bad} \sum_{k=1}^\infty \sum_{t=m+1}^T \sum_{l=1}^{N_i} \sum_{j=1}^{N_i} \mathds1 \left\{\alpha_k \left( \frac{2MK}{\Delta_{i, j-1}} \right)^2 \ln T < T_{i, t-1} \le \alpha_k \left( \frac{2MK}{\Delta_{i, j}} \right)^2 \ln T, S_t = S_{i, l}^\bad \right\} \frac{\Delta_{i, j}}{\beta_k K} \\
\le& \sum_{i\in E_\bad} \sum_{k=1}^\infty \sum_{t=m+1}^T  \sum_{j=1}^{N_i} \mathds1 \left\{\alpha_k \left( \frac{2MK}{\Delta_{i, j-1}} \right)^2 \ln T < T_{i, t-1} \le \alpha_k \left( \frac{2MK}{\Delta_{i, j}} \right)^2 \ln T, i\in S_t \right\} \frac{\Delta_{i, j}}{\beta_k K} \\
\le& \sum_{i\in E_\bad} \sum_{k=1}^\infty \sum_{j=1}^{N_i} \left( \alpha_k \left( \frac{2MK}{\Delta_{i, j}} \right)^2 \ln T - \alpha_k \left( \frac{2MK}{\Delta_{i, j-1}} \right)^2 \ln T \right) \frac{\Delta_{i, j}}{\beta_k K}\\
=& 4M^2K \left(\sum_{k=1}^\infty \frac{\alpha_k}{\beta_k} \right) \ln T \cdot \sum_{i\in E_\bad}  \sum_{j=1}^{N_i} \left( \frac{1}{\Delta_{i, j}^2} - \frac{1}{\Delta_{i, j-1}^2} \right) \Delta_{i, j}\\
\le& 1068 M^2K  \ln T \cdot \sum_{i\in E_\bad}  \sum_{j=1}^{N_i} \left( \frac{1}{\Delta_{i, j}^2} - \frac{1}{\Delta_{i, j-1}^2} \right) \Delta_{i, j},
\end{align*}
where the last inequality is due to~\eqref{eqn:alpha-beta-property-2}.

Finally, for each $i\in E_\bad$ we have
\begin{align*}
\sum_{j=1}^{N_i} \left( \frac{1}{\Delta_{i, j}^2} - \frac{1}{\Delta_{i, j-1}^2} \right) \Delta_{i, j}
&= \frac{1}{\Delta_{i, N_i}} + \sum_{j=1}^{N_i - 1} \frac{1}{\Delta_{i, j}^2} (\Delta_{i, j} - \Delta_{i, j+1})\\
& \le \frac{1}{\Delta_{i, N_i}} + \int_{\Delta_{i, N_i}}^{\Delta_{i, 1}} \frac{1}{x^2} \dx\\
& = \frac{2}{\Delta_{i, N_i}} - \frac{1}{\Delta_{i, 1}}\\
& < \frac{2}{\Delta_{i, \min}}.
\end{align*}
It follows that
\begin{equation} \label{eqn:distri-dep-bound-inproof-1}
\sum_{t=m+1}^T \mathds{1}\{\calH_t\} \Delta_{S_t}
\le 1068 M^2K  \ln T \cdot \sum_{i\in E_\bad} \frac{2}{\Delta_{i, \min}}
=  M^2K  \sum_{i\in E_\bad} \frac{2136}{\Delta_{i, \min}} \ln T .
\end{equation}
Combining \eqref{eqn:distri-dep-bound-inproof-1} with Lemma~\ref{lem:lem1-in-kveton15}, the distribution-dependent regret bound in Theorem~\ref{thm:regret-bound-discrete} is proved.
\vspace{6pt}

To prove the distribution-independent bound, we decompose $\sum_{t=m+1}^T \mathds{1}\{\calH_t\} \Delta_{S_t}$ into two parts:
\begin{equation} \label{eqn:distri-dep-bound-inproof-2}
\begin{aligned}
\sum_{t=m+1}^T \mathds{1}\{\calH_t\} \Delta_{S_t}
&= \sum_{t=m+1}^T \mathds{1}\{\calH_t, \Delta_{S_t} \le \epsilon\} \Delta_{S_t} + \sum_{t=m+1}^T \mathds{1}\{\calH_t, \Delta_{S_t} > \epsilon\} \Delta_{S_t}\\
&\le \epsilon T + \sum_{t=m+1}^T \mathds{1}\{\calH_t, \Delta_{S_t} > \epsilon\} \Delta_{S_t},
\end{aligned}
\end{equation}
where $\epsilon > 0$ is a constant to be determined.
The second term can be bounded in the same way as in the proof of the distribution-dependent regret bound, except that we only consider the case $\Delta_{S_t} > \epsilon$.
Thus we can replace~\eqref{eqn:distri-dep-bound-inproof-1} by
\begin{equation} \label{eqn:distri-dep-bound-inproof-3}
\sum_{t=m+1}^T \mathds{1}\{\calH_t, \Delta_{S_t} > \epsilon\} \Delta_{S_t}
\le  M^2K  \sum_{i\in E_\bad , \Delta_{i, \min} > \epsilon} \frac{2136}{\Delta_{i, \min}} \ln T 
\le M^2K m   \frac{2136}{\epsilon} \ln T.
\end{equation}
It follows that
\[
\sum_{t=m+1}^T \mathds{1}\{\calH_t\} \Delta_{S_t}
\le \epsilon T + M^2K m   \frac{2136}{\epsilon} \ln T.
\]
Finally, letting $\epsilon = \sqrt{\frac{2136 M^2 K m \ln T}{T}}$, we get
\[
\sum_{t=m+1}^T \mathds{1}\{\calH_t\} \Delta_{S_t}
\le 2\sqrt{2136 M^2 K m T \ln T}
< 93 M \sqrt{m KT \ln T}.
\]
Combining this with Lemma~\ref{lem:lem1-in-kveton15}, we conclude the proof of the distribution-independent regret bound in Theorem~\ref{thm:regret-bound-discrete}. \qed

\subsection{Analysis of Our Algorithm in the Previous CMAB Framework} \label{subsec:previous-cmab}

We now give an analysis of \texttt{SDCB} in the previous CMAB framework, following our discussion in Section~\ref{sec:alg}.
We consider the case in which the expected reward only depends on the means of the random variables. Namely, $r_D(S)$ only depends on $\mu_i$'s ($i\in S$), where $\mu_i$ is arm $i$'s mean outcome.
In this case, we can rewrite $r_D(S)$ as $r_\mu(S)$, where $\mu = (\mu_1, \ldots, \mu_m)$ is the vector of means.
Note that the offline computation oracle only needs a mean vector as input.

We no longer need the three assumptions (Assumptions~\ref{assum:indep}-\ref{assum:monotone}) given in Section~\ref{sec:problem}.
In particular,
 we do not require independence among outcome distributions of all arms (Assumption~\ref{assum:indep}).
 Although we cannot write $D$ as $D = D_1 \times \cdots \times D_m$,
we still let $D_i$ be the outcome distribution of arm $i$.
In this case, $D_i$ is the marginal distribution of $D$ in the $i$-th component.

\begin{algorithm}[t]
	\caption{\texttt{CUCB} \cite{chen2016cmab, Kveton15}}  
	\label{alg:CUCB}
	\begin{algorithmic}[1]
		\STATE For each arm $i$, maintain: (i) $\hat \mu_i$, the average of all observed outcomes from arm $i$ so far, and (ii) $T_i$, the number of observed outcomes from arm $i$ so far.
		\vspace{3pt}
		\STATE // Initialization
		\FOR{$i=1$ \TO $m$}
		\STATE // Action in the $i$-th round
		\STATE Play a super arm $S_i$ that contains arm $i$, and update $\hat \mu_i$ and $T_i$.
		\ENDFOR
		\vspace{6pt}
		
		\FOR{$t=m+1, m+2, \ldots$}
		\STATE // Action in the $t$-th round
		\STATE $\bar \mu_i \leftarrow \min\{ \hat \mu_i + \sqrt{\frac{3\ln t}{2T_i}}, 1\} \qquad \forall i\in [m]$ \label{line:cucb-ucb}
		\STATE Play the super arm $S_t \leftarrow \text{Oracle}(\bar{\mu})$, where $\bar{\mu}  = ( \bar\mu_1, \ldots, \bar\mu_m)$.
		\STATE Update $\hat{\mu}_i$ and $T_i$ for all $i\in S_t$.
		\ENDFOR
	\end{algorithmic}
\end{algorithm}

We summarize the \texttt{CUCB} algorithm \cite{chen2016cmab, Kveton15} in Algorithm~\ref{alg:CUCB}.
It maintains the empirical mean $\hat \mu_i$ of the outcomes from each arm $i$, and stores the number of observed outcomes from arm $i$ in a variable $T_i$.
In each round, it calculates an upper confidence bound (UCB) $\bar \mu_i$ of $\mu_i$,
Then it uses the UCB vector $\bar \mu$ as the input to the oracle, and plays the super arm output by the oracle.
In the $t$-th round ($t>m$),
each UCB $\bar \mu_i$ has the key property that 
\begin{equation} \label{eqn:ucb-property}
\mu_i \le \bar \mu_i \le \mu_i + 2 \sqrt{\frac{3\ln t}{2T_{i, t-1}}}
\end{equation}
holds with high probability. 
(Recall that $T_{i,t-1}$ is the value of $T_i$ after $t-1$ rounds.)
To see this, note that we have $|\mu_i - \hat\mu_i| \le \sqrt{\frac{3\ln t}{2T_{i, t-1}}}$ with high probability (by Chernoff bound), and then \eqref{eqn:ucb-property} follows from the definition of $\bar \mu_i$ in line~\ref{line:cucb-ucb} of Algorithm~\ref{alg:CUCB}.

We prove that the same property as \eqref{eqn:ucb-property} also holds for \texttt{SDCB}.
Consider a fixed $t>m$, and let 
$\wideubar D = \wideubar D_1 \times \cdots \times \wideubar D_m$ be the input to the oracle in the $t$-th round of  \texttt{SDCB}. 
Let $\nu_i = \E_{Y_i \sim \wideubar D_i} [Y_i]$.
We can think that \texttt{SDCB} uses the mean vector $\nu = (\nu_1, \ldots, \nu_m)$ as the input to the oracle used by \texttt{CUCB}.
We now show that for each $i$, we have
\begin{equation} \label{eqn:sdcb-mean-property}
\mu_i \le \nu_i \le \mu_i + 2 \sqrt{\frac{3\ln t}{2T_{i, t-1}}}
\end{equation}
with high probability.

To show \eqref{eqn:sdcb-mean-property}, 
 we first prove the following lemma.
\begin{lemma} \label{lem:sdcb-mean-property}
	Let $P$ and $P'$ be two distributions over $[0, 1]$ 
	with CDFs $F$ and $F'$, respectively.
	Consider two random variables $Y \sim P$ and $Y' \sim P'$.
	\begin{enumerate}[(i)]
		\item If for all $x\in[0, 1]$ we have $F'(x) \le F(x)$, then we have
		$
		\E[Y] \le \E[Y']
		$.
		\item If for all $x\in[0, 1]$ we have $F(x) - F'(x) \le \Lambda$ ($\Lambda>0$), then we have
		$
		\E[Y'] \le \E[Y] + \Lambda
		$.
	\end{enumerate}
\end{lemma}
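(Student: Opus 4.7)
The plan is to exploit the integral representation of expectation for random variables supported on $[0,1]$: for any such $Z$ with CDF $F_Z$, one has
\[
\E[Z] = \int_0^1 \left(1 - F_Z(x)\right) \dx.
\]
This identity is standard and follows from $\E[Z] = \int_0^1 \Pr[Z > x] \dx$ (Fubini's theorem applied to $Z = \int_0^Z 1 \dx$); for discrete distributions with finite support it can also be verified directly by summation by parts. Applying the identity to $Y$ and $Y'$ and subtracting collapses both parts of the lemma to a statement about a single integral:
\[
\E[Y'] - \E[Y] = \int_0^1 \left(1 - F'(x)\right) \dx - \int_0^1 \left(1 - F(x)\right) \dx = \int_0^1 \left(F(x) - F'(x)\right) \dx.
\]

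From here, both claims are immediate. For part (i), the hypothesis $F'(x) \le F(x)$ makes the integrand pointwise non-negative, so the integral is non-negative, yielding $\E[Y] \le \E[Y']$. For part (ii), the hypothesis $F(x) - F'(x) \le \Lambda$ bounds the integrand uniformly by $\Lambda$ on an interval of length $1$, so the integral is at most $\Lambda$, giving $\E[Y'] \le \E[Y] + \Lambda$.

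There is no real obstacle; the lemma is essentially a one-dimensional specialization of the stochastic-dominance and approximate-dominance arguments already used in Lemma~\ref{lem:exp-reward-property} for multivariate reward functions, and in fact it could alternatively be deduced from that lemma by taking $m=1$ and $R(x,S) = x$. The only minor point worth noting is endpoint behavior at $x=1$: since $F(1) = F'(1) = 1$, the endpoint contributes nothing to the integral, so no separate treatment of the point mass at $1$ (which appears in the construction $\wideubar F_i(1) = 1$ elsewhere in the paper) is needed here.
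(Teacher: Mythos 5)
Your proof is correct and follows essentially the same route as the paper: the paper derives the identity $\E[Y] = 1 - \int_0^1 F(x)\dx$ via integration by parts of $\int_0^1 x \dFx$, while you obtain the same identity through the layer-cake representation $\E[Y] = \int_0^1 \Pr[Y > x]\dx$, and both arguments then conclude immediately by pointwise comparison of the integrands.
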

\begin{proof}
	We have
	\begin{align*}
	\E [Y] = \int_0^1 x \dFx = (xF(x))\big|_0^1 - \int_0^1 F(x) \dx = 1 - \int_0^1 F(x) \dx.
	\end{align*}
	Similarly, we have
	\[
	\E [Y'] = 1 - \int_0^1 F'(x) \dx.
	\]
	Then the lemma holds trivially.
%
%
\end{proof}

Now we prove \eqref{eqn:sdcb-mean-property}.
According to the DKW inequality, 
with high probability we have 
\begin{equation} \label{eqn:sdcb-property}
F_{i}(x) - 2 \sqrt{\frac{3\ln t}{2T_{i, t-1}}} \le \wideubar F_{i}(x)  \le F_{i}(x)
\end{equation}
 for all $i\in[m]$ and $x\in[0, 1]$, where $\wideubar F_i$ is the CDF of $\wideubar D_i$ used in round $t$ of \texttt{SDCB}, and $F_i$ is the CDF of $D_i$.
Suppose \eqref{eqn:sdcb-property} holds for all $i,x$, then for any $i$, the two distributions $D_i$ and $\wideubar D_i$ satisfy the two conditions in Lemma~\ref{lem:sdcb-mean-property}, with $\Lambda = 2 \sqrt{\frac{3\ln t}{2T_{i, t-1}}}$;
then from Lemma~\ref{lem:sdcb-mean-property} we know that $\mu_i \le \nu_i \le \mu_i + 2 \sqrt{\frac{3\ln t}{2T_{i, t-1}}}$.
Hence we have shown that \eqref{eqn:sdcb-mean-property} holds with high probability.

The fact that \eqref{eqn:sdcb-mean-property} holds with high probability means that the mean of $\wideubar D_i$ is also a UCB of $\mu_i$ with the same confidence as in \texttt{CUCB}.
With this property, the analysis in~\cite{chen2016cmab, Kveton15} can also be applied to \texttt{SDCB}, resulting in exactly the same regret bounds.

\section{Missing Proofs from Section~\ref{sec:discretization}} \label{appdx:proof-general-case}

\subsection{Analysis of the Discretization Error} \label{subsec:disc-error}

The following lemma gives an upper bound on the error due to discretization.
Refer to Section~\ref{sec:discretization} for the definition of the discretized distribution $\tilde D$.

\begin{lemma} \label{lem:disc-error}
	For any $S \in \calF$, we have \[ \left| r_D(S) - r_{\tilde{D}}(S) \right| \le \frac{CK}{s}. \]
\end{lemma}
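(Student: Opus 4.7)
The plan is to set up a coupling between $X \sim D$ and $\tilde X \sim \tilde D$ that makes $\tilde X_i$ and $X_i$ differ by at most $1/s$ coordinate-wise, and then invoke the Lipschitz assumption (Assumption~\ref{assum:lipschitz}).

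First, I would construct the coupling explicitly. For each arm $i$, given $X_i \sim D_i$, define $\tilde X_i = j/s$ when $X_i \in I_j$, i.e., $\tilde X_i = \lceil s X_i \rceil / s$ for $X_i > 0$ and $\tilde X_i = 1/s$ for $X_i = 0$. By construction, the marginal law of $\tilde X_i$ is exactly $\tilde D_i$, since $\Pr[\tilde X_i = j/s] = \Pr[X_i \in I_j]$. Because the $X_i$'s are independent under Assumption~\ref{assum:indep}, the $\tilde X_i$'s built coordinate-wise from them are also independent, so the joint law of $\tilde X = (\tilde X_1, \ldots, \tilde X_m)$ equals $\tilde D = \tilde D_1 \times \cdots \times \tilde D_m$.

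Next, I would observe that under this coupling $|X_i - \tilde X_i| \le 1/s$ almost surely for every $i$, since each interval $I_j$ has length at most $1/s$. Then Assumption~\ref{assum:lipschitz} gives
\[
\left| R(X, S) - R(\tilde X, S) \right| \le C \, \|X_S - \tilde X_S\|_1 = C \sum_{i \in S} |X_i - \tilde X_i| \le \frac{C|S|}{s} \le \frac{CK}{s}
\]
almost surely.

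Finally, taking expectations and using that $r_D(S) = \E[R(X,S)]$ and $r_{\tilde D}(S) = \E[R(\tilde X, S)]$ under this coupling, the triangle inequality (or Jensen's inequality applied to the absolute value) yields
\[
\left| r_D(S) - r_{\tilde D}(S) \right| \le \E \bigl[ |R(X,S) - R(\tilde X, S)| \bigr] \le \frac{CK}{s},
\]
which is the claim. There is no real obstacle here; the only thing to be careful about is verifying that the coupling preserves the independence structure and the correct marginal laws, which follows directly from Assumption~\ref{assum:indep} and the definition of $\tilde D_i$.
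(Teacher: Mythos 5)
Your proof is correct, and it takes a genuinely different route from the paper's. The paper proves a more general statement (Lemma~\ref{lem:disc-error-general}) for an arbitrary Lipschitz function $g$ and product distribution $P$ by induction on the number of coordinates: in the base case it uses continuity to find a point $\xi_j \in [v_{j-1}, v_j]$ at which $g(\xi_j)$ equals the conditional expectation $\E[g(X_1) \mid X_1 \in I_j]$, bounds $|g(v_j)-g(\xi_j)| \le C/s$, and then in the inductive step introduces intermediate functions $h$ and $\tilde h$ that average out one coordinate at a time, accumulating an error of $C/s$ per coordinate. You instead construct an explicit coupling in which $\tilde X_i$ is the deterministic rounding $\lceil sX_i\rceil/s$ of $X_i$, observe that this preserves both the marginals $\tilde D_i$ and the product structure (since a coordinatewise function of independent variables remains independent), and then apply the Lipschitz bound pathwise followed by Jensen. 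The two arguments rest on the same two ingredients (Assumptions~\ref{assum:indep} and~\ref{assum:lipschitz}) and give the same constant $CK/s$, but yours is shorter and avoids the induction entirely; the price is that it only works because the discretization happens to be a deterministic per-coordinate map of the original variable, which makes the coupling with correct marginals immediate. The paper's distribution-level induction would also apply if one only knew the marginal laws $\tilde D_i$ without such a natural coupling, though in the present setting that extra generality is not needed.
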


To prove Lemma~\ref{lem:disc-error},
 we show a slightly more general lemma which gives an upper bound on the discretization error of the expectation of a Lipschitz continuous function.

\begin{lemma} \label{lem:disc-error-general}
	Let $g(x)$ be a Lipschitz continuous function on $[0, 1]^n$ such that for any $x, x' \in [0, 1]^n$, we have $|g(x) - g(x')| \le C \|x - x' \|_1$, where $\|x- x' \|_1 = \sum_{i=1}^n |x_i - x_i'|$.
	Let $P = P_1  \times \cdots \times P_n$ be a probability distribution over $[0, 1]^n$. 
	Define another distribution $\tilde P = \tilde P_1 \times \cdots \times \tilde P_n$ over $[0, 1]^n$ as follows: 
	each $\tilde P_i$ ($i\in[n]$) takes values in $\{ \frac1s, \frac2s, \ldots, 1\}$, and
	\[
	\Pr_{\tilde{X}_i \sim \tilde{P}_i} [\tilde{X}_i =  j/s ] = \Pr_{X_i \sim P_i}\left[ X_i \in I_j \right], \qquad j \in [s],
	\]
	where $I_1 = [0, \frac1s], I_2 = (\frac1s, \frac2s], \ldots, I_{s-1} = (\frac{s-2}{s}, \frac{s-1}{s}],I_s = (\frac{s-1}{s}, 1] $.
	Then
	\begin{equation} \label{eqn:disc-error-toshow}
	\left| \E_{X \sim P} [ g(X) ] - \E_{\tilde X \sim \tilde P} [ g(\tilde X) ] \right|
	\le \frac{C\cdot n}{s}.
	\end{equation}
\end{lemma}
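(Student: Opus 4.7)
The plan is to exhibit an explicit coupling between $X \sim P$ and $\tilde X \sim \tilde P$ that makes $\|X - \tilde X\|_1$ uniformly small, and then invoke the Lipschitz assumption coordinatewise.

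Concretely, first I would define, for each $i \in [n]$, a deterministic map $\phi_i : [0,1] \to \{1/s, 2/s, \ldots, 1\}$ by $\phi_i(y) = j/s$ whenever $y \in I_j$. By construction of the $I_j$'s, $\phi_i$ is well defined on $[0,1]$ and satisfies $|y - \phi_i(y)| \le 1/s$ for every $y \in [0,1]$, since each $I_j$ has length at most $1/s$ and $j/s$ is its right endpoint. The coupling is then $(X, \tilde X) := (X, \phi(X))$ where $\phi(x) = (\phi_1(x_1), \ldots, \phi_n(x_n))$. The key observations are: (a) the marginal of $X$ is $P$ by definition, and (b) the marginal of $\phi(X)$ is $\tilde P$, because the independence of the $X_i$'s under $P$ passes through the coordinatewise map $\phi$, and for each $i$,
\[
\Pr[\phi_i(X_i) = j/s] = \Pr[X_i \in I_j] = \Pr_{\tilde X_i \sim \tilde P_i}[\tilde X_i = j/s].
\]

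Next, under this coupling we have $\|X - \phi(X)\|_1 = \sum_{i=1}^n |X_i - \phi_i(X_i)| \le n/s$ pointwise. Applying the Lipschitz hypothesis yields
\[
|g(X) - g(\phi(X))| \le C \|X - \phi(X)\|_1 \le \frac{Cn}{s}
\]
almost surely. Taking expectations and using the triangle inequality,
\[
\left| \E_{X \sim P}[g(X)] - \E_{\tilde X \sim \tilde P}[g(\tilde X)] \right| = \left| \E[g(X) - g(\phi(X))] \right| \le \E|g(X) - g(\phi(X))| \le \frac{Cn}{s},
\]
which is the desired inequality~\eqref{eqn:disc-error-toshow}.

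I do not anticipate a serious obstacle: the only subtle point is verifying that $\phi(X)$ has the correct distribution $\tilde P$, which requires both the coordinatewise structure of $\phi$ and the independence of the $X_i$'s under $P$. The Lipschitz step is then immediate. Finally, Lemma~\ref{lem:disc-error} follows from Lemma~\ref{lem:disc-error-general} by taking $n = |S|$, letting $g$ be the function $R_S$ on $[0,1]^S$ (which is $C$-Lipschitz in $\|\cdot\|_1$ by Assumption~\ref{assum:lipschitz}), and noting that the $S$-marginals of $D$ and $\tilde D$ are precisely of the product form required, so $|r_D(S) - r_{\tilde D}(S)| \le C|S|/s \le CK/s$.
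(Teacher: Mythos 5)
Your proof is correct, and it takes a genuinely different route from the paper's. The paper proves the lemma by induction on the dimension $n$: the base case handles $n=1$ by using continuity to write each conditional expectation $\E[g(X_1)\mid X_1\in I_j]$ as $g(\xi_j)$ for some $\xi_j\in I_j$ and then invoking the Lipschitz bound $|g(v_j)-g(\xi_j)|\le C/s$, and the inductive step peels off one coordinate at a time via the auxiliary functions $h$ and $\tilde h$, accumulating an error of $C/s$ per coordinate. You instead construct the explicit coupling $(X,\phi(X))$ with $\phi_i(y)=j/s$ for $y\in I_j$, verify that $\phi(X)\sim\tilde P$ (which, as you correctly note, uses both the coordinatewise structure of $\phi$ and the independence of the $X_i$'s so that the pushforward is the product of the discretized marginals), and then get $|g(X)-g(\phi(X))|\le C\|X-\phi(X)\|_1\le Cn/s$ pointwise, so the bound follows by taking expectations. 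Your argument is shorter and arguably more transparent -- it is essentially the observation that the $L_1$-Wasserstein distance between $P$ and $\tilde P$ is at most $n/s$ -- and it dispenses with the intermediate-value step the paper uses in its base case. What the paper's inductive formulation buys is a template that manipulates the two expectations directly without introducing couplings, but for this particular statement the coupling route is the cleaner one. Your derivation of Lemma~\ref{lem:disc-error} from the general lemma also matches the paper's.
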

\begin{proof}
	Throughout the proof, we consider $X = (X_1, \ldots, X_n) \sim P$ and $\tilde X = (\tilde X_1, \ldots, \tilde X_n) \sim \tilde P$.
	
	Let $v_j = \frac js$ ($j = 0, 1, \ldots, s$) and
	\[
	p_{i, j} = \Pr [\tilde{X}_i = v_j ] = \Pr [ X_i \in I_j ] \qquad i\in[n], j\in[s].
	\]
	
	We prove \eqref{eqn:disc-error-toshow} by induction on $n$. \vspace{6pt}
	
	(1) When $n=1$, we have
	\begin{equation} \label{eqn:disc-error-inproof-1}
	\E [ g(X_1) ]
	= \sum_{j\in[s], p_{1, j}>0} p_{1, j} \cdot \E \left[g(X_1) \big| X_1 \in I_j \right].
	\end{equation}
	
	Since $g$ is continuous, for each $j\in[s]$ such that $p_{1, j}>0$, there exists $\xi_j \in [v_{j-1}, v_j]$ such that
	\[
	\E\left[g(X_1) | X_1 \in I_j \right] = g(\xi_j)
	\]
	From the Lipschitz continuity of $g$ we have
	\[
	\left| g(v_j) - g(\xi_j) \right| \le C |v_j - \xi_j| \le C|v_j - v_{j-1}| = \frac Cs.
	\]
	Hence
	\begin{align*}
	\left| \E [ g(X_1) ] - \E [ g(\tilde X_1) ] \right|
	&= \left| \sum_{j\in[s], p_{1, j}>0} p_{1, j} \cdot \E [g(X_1) | X_1 \in I_j ] - \sum_{j\in[s], p_{1, j}>0} p_{1, j} \cdot g(v_j) \right|\\
	&= \left| \sum_{j\in[s], p_{1, j}>0} p_{1, j} \cdot  g(\xi_j) - \sum_{j\in[s], p_{1, j}>0} p_{1, j} \cdot g(v_j) \right|\\
	&\le \sum_{j\in[s], p_{1, j}>0} p_{1, j} \cdot \left| g(\xi_j) - g(v_{j}) \right| \\
	&\le \sum_{j\in[s], p_{1, j}>0} p_{1, j} \cdot \frac Cs\\
	&= \frac Cs.
	\end{align*}
	This proves \eqref{eqn:disc-error-toshow} for $n=1$. \vspace{6pt}
	
	(ii) Suppose \eqref{eqn:disc-error-toshow} is correct for $n = 1, 2, \ldots, k-1$. Now we prove it for $n=k$ ($k\ge 2$).
	
	We define two functions on $[0, 1]^{k-1}$:
	\[
	h(x_1, \ldots, x_{k-1}) = \E_{X_k} [g(x_1, \ldots, x_{k-1}, X_k)]
	\]
	and
	\[
	\tilde h(x_1, \ldots, x_{k-1}) = \E_{\tilde X_k} [g(x_1, \ldots, x_{k-1}, \tilde X_k)].
	\]
	For any fixed $x_1, \ldots, x_{k-1} \in [0, 1]$, the function $ g(x_1, \ldots, x_{k-1}, x)$ on $x\in [0, 1]$ is Lipschitz continuous. Therefore from the result for $n=1$ we have
	\[
	\left|h(x_1, \ldots, x_{k-1}) - \tilde h(x_1, \ldots, x_{k-1})\right| \le \frac Cs \qquad \forall x_1, \ldots, x_{k-1} \in [0, 1].
	\]
	Then we have
	\begin{equation}  \label{eqn:disc-error-inproof-2}
	\begin{aligned}  
	&\left| \E [g(X)] - \E[g(\tilde X)] \right| \\
	=\,& \left| \E_{X_1, \ldots, X_{k-1}} \left[\E[g(X) | X_1, \ldots, X_{k-1}] \right] - \E[g(\tilde X)] \right| \\
	=\,& \left| \E_{X_1, \ldots, X_{k-1}} \left[h(X_1, \ldots, X_{k-1}) \right] - \E[g(\tilde X)] \right| \\
	\le \,& \left| \E_{X_1, \ldots, X_{k-1}} [h(X_1, \ldots, X_{k-1})] - \E_{X_1, \ldots, X_{k-1}} [\tilde h(X_1, \ldots, X_{k-1})] \right| \\ &+ \left| \E_{X_1, \ldots, X_{k-1}} [\tilde h(X_1, \ldots, X_{k-1})] - \E[g(\tilde X)] \right| \\
	\le\,& \E_{X_1, \ldots, X_{k-1}} \left[\left| h(X_1, \ldots, X_{k-1}) - \tilde h(X_1, \ldots, X_{k-1}) \right|\right] \\&+ \left| \E_{X_1, \ldots, X_{k-1}, \tilde{X}_k} [g(X_1, \ldots, X_{k-1}, \tilde{X}_k)] - \E[g(\tilde X)] \right| \\
	\le\,& \E_{X_1, \ldots, X_{k-1}} \left[ \frac Cs \right] + \left| \E_{ \tilde{X}_k} \left[ \E [g(X_1, \ldots, X_{k-1}, \tilde{X}_k) | \tilde{X}_k] - \E  [g(\tilde X_1, \ldots, \tilde X_{k-1}, \tilde{X}_k) | \tilde X_k ] \right] \right| \\
	\le\,& \frac Cs +  \E_{ \tilde{X}_k} \left[ \left| \E [g(X_1, \ldots, X_{k-1}, \tilde{X}_k) | \tilde{X}_k] - \E  [g(\tilde X_1, \ldots, \tilde X_{k-1}, \tilde{X}_k) | \tilde X_k ] \right| \right] \\
	=\,& \frac Cs + \sum_{j\in[s], p_{k, j} > 0} p_{k, j} \cdot \left| \E [g(X_1, \ldots, X_{k-1}, v_j)] - \E [g(\tilde X_1, \ldots, \tilde X_{k-1}, v_j)] \right|. 
	\end{aligned}
	\end{equation}

	For any $j\in[s]$, the function $g(x_1, \ldots, x_{k-1}, v_j)$ on $(x_1, \ldots, x_{k-1}) \in [0, 1]^{k-1}$ is Lipschitz continuous. Then from the induction hypothesis at $n=k-1$, we have
	\begin{equation} \label{eqn:disc-error-inproof-3}
	\left| \E [g(X_1, \ldots, X_{k-1}, v_j)] - \E [g(\tilde X_1, \ldots, \tilde X_{k-1}, v_j)] \right| \le \frac{C(k-1)}{s} \qquad \forall j\in[s].
	\end{equation}
	From \eqref{eqn:disc-error-inproof-2} and \eqref{eqn:disc-error-inproof-3} we have
	\begin{align*}
	\left| \E [g(X)] - \E[g(\tilde X)] \right|
	&\le \frac Cs + \sum_{j\in[s], p_{k, j} > 0} p_{k, j} \cdot \frac{C(k-1)}{s}\\
	&= \frac Cs + \frac{C(k-1)}{s}\\
	&= \frac{Ck}{s}.
	\end{align*}
	This concludes the proof for $n=k$.
\end{proof}

Now we prove Lemma~\ref{lem:disc-error}. 

\begin{proof}[Proof of Lemma~\ref{lem:disc-error}]
	We have
	\[
	r_D(S) = \E_{X \sim D} [R(X, S)] = \E_{X \sim D} [R_S(X_S)] = \E_{X_S \sim D_S} [R_S(X_S)],
	\]
	where $X_S = (X_i)_{i\in S}$ and $D_S = (D_i)_{i\in S}$.
	Similarly, we have
	\[
	r_{\tilde D}(S) = \E_{\tilde X_S \sim \tilde D_S} [R_S(\tilde X_S)].
	\]
	According to Assumption~\ref{assum:lipschitz}, the function $R_S$ defined on $[0, 1]^{S}$ is Lipschitz continuous. Then from Lemma~\ref{lem:disc-error-general} we have
	\begin{align*}
	\left| r_D(S) - r_{\tilde D}(S) \right|
	= \left| \E_{X_S \sim D_S} [R_S(X_S)] - \E_{\tilde X_S \sim \tilde D_S} [R_S(\tilde X_S)] \right|
	\le \frac{C \cdot |S|}{s}
	\le \frac{C\cdot K}{s}.
	\end{align*}
	This completes the proof.
\end{proof}

\subsection{Proof of Theorem~\ref{thm:bound-know-T}}

\begin{proof}[Proof of Theorem~\ref{thm:bound-know-T}]

	Let ${S}^* = \argmax_{S\in \calF} \{r_{{D}}(S)\}$ and $\tilde{S}^* = \argmax_{S\in \calF} \{r_{\tilde{D}}(S)\}$ be the optimal super arms in problems $([m], \calF, {D}, R)$ and $([m], \calF, \tilde{D}, R)$, respectively.
	Suppose Algorithm~\ref{alg:Lazy-SDCB} selects super arm $S_t$ in the $t$-th round $(1\le t \le T)$. Then its $\alpha$-approximation regret is bounded as
	\begin{align*}
	&\reg_{D, \alpha}^{\text{Alg.~\ref{alg:Lazy-SDCB}}}(T) \\
	=\, &  T \cdot \alpha \cdot r_D(S^*) - \sum_{t=1}^T \E \left[r_D(S_t) \right]\\
	=\, & T \cdot \alpha \left( r_D(S^*) - r_{\tilde{D}}(\tilde S^*) \right)
	+ \sum_{t=1}^T \E \left[  r_{\tilde{D}}(S_t) - r_D(S_t) \right] + \left( T \cdot \alpha \cdot r_{\tilde{D}}(\tilde S^*) - \sum_{t=1}^T \E \left[r_{\tilde D}(S_t) \right] \right)\\
	\le\, & T \cdot \alpha \left( r_D(S^*) - r_{\tilde{D}}(S^*) \right) + \sum_{t=1}^T \E \left[  r_{\tilde{D}}(S_t) - r_D(S_t) \right] + \reg_{\tilde D, \alpha}^{\text{Alg.~\ref{alg:SDCB}}}(T).
	\end{align*}
	where the inequality is due to $r_{\tilde{D}}(\tilde S^*) \ge r_{\tilde{D}}( S^*)$.
	
	Then from Lemma~\ref{lem:disc-error} and the distribution-independent bound in Theorem~\ref{thm:regret-bound-discrete}	we have
	\begin{equation} \label{eqn:general-case-inproof-1}
	\begin{aligned}
	\reg_{D, \alpha}^{\text{Alg.~\ref{alg:Lazy-SDCB}}}(T) 
	&\le  T \cdot \alpha \cdot \frac{CK}{s} + T \cdot \frac{CK}{s} 
	+ 93 M  \sqrt{mK T \ln T} + \left( \frac{\pi^2}{3} + 1 \right) \alpha Mm \\
	&\le 2 \cdot \frac{CKT}{s} 
	+ 93 M  \sqrt{mK T \ln T} + \left( \frac{\pi^2}{3} + 1 \right) \alpha Mm \\
	&\le 93 M  \sqrt{mK T \ln T} + 2CK\sqrt T + \left( \frac{\pi^2}{3} + 1 \right) \alpha Mm.
	\end{aligned}
	\end{equation}
	Here in the last two inequalities we have used $\alpha\le 1$ and $s = \lceil \sqrt T \rceil \ge \sqrt T$.
	The proof is completed.



\end{proof}

\subsection{Proof of Theorem~\ref{thm:bound-not-know-T}}

\begin{proof}[Proof of Theorem~\ref{thm:bound-not-know-T}]
	Let $n = \lceil \log_2 T \rceil$. Then we have $2^{n-1} <T \le 2^n$.
	
	If $n \le q = \left\lceil \log_2 m  \right\rceil$, then $T \le 2m$ and the regret in $T$ rounds is at most $2m \cdot \alpha M$.
	The regret bound holds trivially.
	
	Now we assume $n \ge q+1$.
	Using Theorem~\ref{thm:bound-know-T}, we have
	\begin{align*}
	&\reg_{D, \alpha}^{\text{Alg.~\ref{alg:Lazy-SDCB-unknown-T}}}(T) \\
	\le\, & \reg_{D, \alpha}^{\text{Alg.~\ref{alg:Lazy-SDCB-unknown-T}}}(2^n)\\
	=\, & \reg_{D, \alpha}^{\text{Alg.~\ref{alg:Lazy-SDCB}}}(2^q) + \sum_{k=q}^{n-1} \reg_{D, \alpha}^{\text{Alg.~\ref{alg:Lazy-SDCB}}}(2^k)\\
	\le\, & \reg_{D, \alpha}^{\text{Alg.~\ref{alg:Lazy-SDCB}}}(2m) + \sum_{k=q}^{n-1} \reg_{D, \alpha}^{\text{Alg.~\ref{alg:Lazy-SDCB}}}(2^k) \\
	\le\, &  2m \cdot  \alpha M + \sum_{k=q}^{n-1} \left(  93 M  \sqrt{mK  \cdot 2^k \ln 2^k} + 2CK\sqrt{2^k} + \left( \frac{\pi^2}{3} + 1 \right) \alpha Mm \right) \\
	\le\, &  2  \alpha Mm  + \left( 93M \sqrt{ mK  \ln 2^{n-1}} + 2CK \right) \cdot \sum_{k=1}^{n-1} \sqrt{2^k} + (n-1) \cdot \left( \frac{\pi^2}{3} + 1 \right)  \alpha Mm  \\
	\le\, &   \left( 93M \sqrt{ mK  \ln 2^{n-1}} + 2CK \right) \cdot \frac{\sqrt{2^n}}{\sqrt2 - 1} + \left( \frac{\pi^2}{3} + 3 \right) (n-1) \cdot  \alpha Mm  \\
	\le\, &   \left( 93M \sqrt{ mK  \ln T} + 2CK \right)  \cdot \frac{\sqrt{2T}}{\sqrt2 - 1} + \left( \frac{\pi^2}{3} + 3 \right)  \log_2T \cdot  \alpha Mm \\
	\le\, & 318M \sqrt{mKT  \ln T} + 7 CK \sqrt{T} + 10 \alpha Mm \ln T . \qedhere
	\end{align*}
\end{proof}

\section{The Offline $K$-MAX Problem} \label{appdx:offline-kmax}

In this section, we consider the offline $K$-MAX problem.
Recall that we have $m$ independent random variables 
$\{X_i\}_{i\in [m]}$.
$X_i$ follows the discrete distribution $D_i$ with support $\{v_{i,1}, \ldots, v_{i, s_i}\} \subset [0, 1]$, and $D = D_1 \times \cdots \times  D_m$ is the joint distribution of $X = (X_1, \ldots, X_m)$.
Let $p_{i,j} = \Pr [X_i = v_{i,j}]$.
Define $r_D(S) = \E_{X\sim D}[\max_{i\in S} X_i]$
and $\OPT=\max_{S: |S|=K} r_D(S)$.
Our goal is to find (in polynomial time) a subset $S \subseteq[m]$ of cardinality $K$ 
such that $r_D(S)\geq \alpha \cdot \OPT$ (for certain constant $\alpha$).

First, we show that $r_D(S)$ can be calculated in polynomial time given any $S \subseteq [m]$.
Let $S = \{i_1, i_2, \ldots, i_n\}$.
	Note that for $X\sim D$, $\max_{i\in S} X_i$ can only take values in the set $V(S) = \bigcup_{i\in S} \supp(D_{i})$. For any $v\in V(S)$, we have
	\begin{equation} \label{eqn:max-value-prob}
	\begin{aligned}
	&\Pr_{X\sim D} \left[\max_{i\in S} X_i = v \right]\\
	=\,& \Pr_{X\sim D} \left[X_{i_1} = v, X_{i_2} \le v, \ldots, X_{i_n} \le v \right]\\
	& + \Pr_{X\sim D}[X_{i_1} < v, X_{i_2} = v, X_{i_3}\le v, \ldots, X_{i_n} \le v]\\
	& + \cdots \\
	& + \Pr_{X\sim D}[X_{i_1} < v, \ldots, X_{i_{n-1}} < v, X_{i_n} = v].
	\end{aligned}
	\end{equation}
	Since $X_{i_1}, \ldots, X_{i_n}$ are mutually independent,
	each probability appearing in \eqref{eqn:max-value-prob} can be calculated in polynomial time. Hence for any $v\in V(S)$, $\Pr_{X\sim D} \left[\max_{i\in S} X_i = v \right]$ can be calculated in polynomial time using \eqref{eqn:max-value-prob}.
	Then $r_D(S)$ can be calculated by
	\[
	r_D(S) = \sum_{v\in V(S)} v\cdot \Pr_{X\sim D} \left[\max_{i\in S} X_i = v \right]
	\]
	in polynomial time.

\subsection{$(1-1/e)$-Approximation}

We now show that a simple greedy algorithm (Algorithm~\ref{alg:greedy-kmax}) can find a $(1-1/e)$-approximate solution, by proving the submodularity of $r_D(S)$.
In fact, this is implied by a slightly more general result
\cite[Lemma 3.2]{goel2006asking}. We provide a simple and direct 
proof for completeness.

\begin{algorithm}[t]
	\caption{\texttt{Greedy-K-MAX}}
	\label{alg:greedy-kmax}
	\begin{algorithmic}[1]
		\STATE $S \leftarrow \emptyset$
		\FOR{$i=1$ \TO $K$}
		\STATE $k \leftarrow \argmax_{j \in [m] \setminus S } r_D(S \cup \{j\})$
		\STATE $S \leftarrow S \cup \{k\}$
		\ENDFOR
		\ENSURE $S$
	\end{algorithmic}
\end{algorithm}

\begin{lemma}
	\label{lem:max_const}
	Algorithm~\ref{alg:greedy-kmax} can output a subset $S $ such that $r_D(S) \ge (1-1/e) \cdot \OPT$.
\end{lemma}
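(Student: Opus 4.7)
The plan is to invoke the classical result of Nemhauser, Wolsey and Fisher: for maximizing a monotone submodular set function subject to a cardinality constraint $|S|\le K$, the natural greedy algorithm achieves a $(1-1/e)$-approximation. Thus it suffices to prove that $S \mapsto r_D(S) = \E_{X\sim D}[\max_{i\in S} X_i]$ is both monotone and submodular.

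Monotonicity is immediate: for $S\subseteq T$ and any realization $x\in[0,1]^m$ we have $\max_{i\in S} x_i \le \max_{i\in T} x_i$ (note that $x_i\ge 0$), and taking expectations over $X\sim D$ preserves this inequality. So I would dispose of monotonicity in one sentence.

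The main work is submodularity, and I would reduce it to a pointwise statement. Fix any realization $x\in[0,1]^m$ and define $f_x(S) = \max_{i\in S} x_i$ with the convention $f_x(\emptyset) = 0$. For $S\subseteq T\subseteq[m]$ and $j\notin T$, the marginal gains satisfy
\[
f_x(S\cup\{j\}) - f_x(S) = \bigl(x_j - \max_{i\in S} x_i\bigr)^+ \ge \bigl(x_j - \max_{i\in T} x_i\bigr)^+ = f_x(T\cup\{j\}) - f_x(T),
\]
where the inequality uses $\max_{i\in S} x_i\le \max_{i\in T} x_i$ and the monotonicity of $z\mapsto (x_j - z)^+$. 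Hence $f_x$ is submodular as a set function for every realization $x$. Taking expectations, $r_D(S) = \E_{X\sim D}[f_X(S)]$ is a nonnegative linear combination (in fact an average) of submodular functions and is therefore itself submodular.

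Having established that $r_D$ is a nonnegative, monotone, submodular function on $2^{[m]}$, Algorithm~\ref{alg:greedy-kmax} is exactly the standard greedy procedure for monotone submodular maximization under the cardinality constraint $|S|\le K$. Two small points need to be verified to make the invocation rigorous: (i) each greedy step is implementable in polynomial time, which follows from the explicit polynomial-time formula for $r_D(S)$ given right above the lemma statement via \eqref{eqn:max-value-prob}; and (ii) $r_D(\emptyset) = 0$, which holds by our convention. Then the Nemhauser--Wolsey--Fisher theorem yields $r_D(S) \ge (1 - (1-1/K)^K)\cdot \OPT \ge (1-1/e)\cdot \OPT$, completing the proof. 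I do not anticipate a real obstacle here; the only subtlety worth flagging in the write-up is the pointwise submodularity inequality, which relies crucially on $X_i\ge 0$ so that the maximum is monotone in the index set.
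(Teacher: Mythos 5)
Your proposal is correct and follows essentially the same route as the paper's proof: reduce to the pointwise set function $f_x(S)=\max_{i\in S}x_i$, verify its monotonicity and submodularity per realization (the paper does this by a three-case analysis where you use the cleaner marginal-gain identity $\bigl(x_j-\max_{i\in S}x_i\bigr)^+$), express $r_D$ as a nonnegative combination of these, and invoke the Nemhauser--Wolsey--Fisher guarantee for greedy under a cardinality constraint. No gaps.
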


\begin{proof}
	For any $x\in[0, 1]^m$, let $f_x(S) = \max_{i\in S}x_i$ be a set function defined on $2^{[m]}$. (Define $f_x(\emptyset) = 0$.) We can verify that $f_x(S)$ is monotone and submodular:
	\begin{itemize}
		\item \emph{Monotonicity.}
		For any $A \subseteq B \subseteq [m]$, we have $f_x(A) = \max_{i\in A}x_i \le \max_{i\in B}x_i = f_x(B)$.
		\item \emph{Submodularity.}
		For any $A \subseteq B \subseteq [m]$ and any $k\in [m]\setminus B$, there are three cases (note that $\max_{i\in A}x_i \le \max_{i\in B}x_i$):
		\begin{enumerate}[(i)]
			\item If $x_k \le \max_{i\in A} x_i $, then $f_x(A\cup\{k\}) - f_x(A) = 0 = f_x(B\cup\{k\}) - f_x(B)$.
			\item If $\max_{i\in A} x_i < x_k \le \max_{i\in B} x_i$, then $f_x(A\cup\{k\}) - f_x(A) = x_k - \max_{i\in A} x_i > 0 = f_x(B\cup\{k\}) - f_x(B)$.
			\item If $x_k > \max_{i\in B} x_i $, then $f_x(A\cup\{k\}) - f_x(A) = x_k - \max_{i\in A} x_i \ge x_k - \max_{i\in B} x_i  = f_x(B\cup\{k\}) - f_x(B)$.
		\end{enumerate}
		Therefore, we always have $f_x(A\cup\{k\}) - f_x(A) \ge f_x(B\cup\{i\}) - f_x(B)$. The function $f_x(S)$ is submodular.
	\end{itemize}
	
	For any $S \subseteq [m]$ we have
	\begin{equation*}
	r_D(S) = \sum_{j_1 = 1}^{s_1} \sum_{j_2 = 1}^{s_2} \cdots \sum_{j_m = 1}^{s_m} f_{(v_{1, j_1}, \ldots, v_{m, j_m})}(S) \prod_{i=1}^{m} p_{i, j_i}.
	\end{equation*}
	Since each set function $f_{(v_{1, j_1}, \ldots, v_{m, j_m})}(S)$ is monotone and submodular,
	$r_D(S)$ is a convex combination of monotone submodular functions on $2^{[m]}$. Therefore, $r_D(S)$ is also a monotone submodular function.
	According to the classical result on submodular maximization \cite{nemhauser1978submod}, the greedy algorithm can find a $(1-1/e)$-approximate solution to $\max_{S \subseteq [m], |S|\le K}\{r_D(S)  \}$.
\end{proof}


\newcommand{\tX}{\tilde{X}}
\newcommand{\tY}{\tilde{Y}}
\newcommand{\tZ}{\tilde{Z}}

\newcommand{\Sg}{\mathsf{Sig}}
\newcommand{\sg}{\mathsf{sg}}
\newcommand{\SG}{\mathsf{SG}}

\newcommand{\support}{\mathsf{supp}}
\newcommand{\copt}{\mathsf{W}}
\newcommand{\IDX}{\mathsf{IDX}}
\newcommand{\DS}{\mathsf{DS}}
\newcommand{\PV}{\mathsf{Val}}

\newcommand{\eat}[1]{}

\subsection{PTAS}
Now we provide a PTAS for the $K$-MAX problem.
In other words, we give an algorithm which,
given any fixed constant $0<\varepsilon<1/2$,
can find a solution $S$ of cardinality $|K|$
such that $r_D(S)\geq (1-\varepsilon) \cdot \OPT$
in polynomial time.

We first provide an overview of our approach, and then spell out the details later.
\begin{enumerate}
	\item (Discretization) We first transform each $X_i$ to another
	discrete distribution $\tX_i$, such that all $\tX_i$'s are 
	supported on a set of size $O(1/\varepsilon^2)$.
	\item (Computing signatures)
	For each $X_i$, we can compute from $\tX_i$ 
	a signature $\Sg(X_i)$ which is a vector of size $O(1/\varepsilon^2)$.
	For a set $S$, we define its signature $\Sg(S)$ to be $\sum_{i\in S} \Sg(X_i)$.
	We show that if two sets $S_1$ and $S_2$ have the same signature,
	their objective values are close (Lemma~\ref{lm:sig}).
	\item (Enumerating signatures)
	We enumerate all possible signatures (there are polynomial number of them when treating $\varepsilon$
	as a constant)
	and try to find the one 
	which is the signature of a set of size $K$,
	and the objective value is maximized.
\end{enumerate}

\subsubsection{Discretization}

We first describe the discretization step.
We say that a random variable $X$ follows the Bernoulli distribution
$B(v, q)$ if $X$ takes value $v$ with probability $q$ and value $0$ with probability $1 - q$.
For any discrete distribution, we can rewrite it as the maximum of a set of Bernoulli 
distributions.

\begin{definition}
	Let $X$ be a discrete random variable with support $\{v_1, v_2, \ldots, v_s\} (v_1 < v_2 < \cdots < v_s)$ and $\Pr[X = v_{j}] = p_j$.
We define a set of independent Bernoulli random variables $\{Z_j\}_{j\in [s]}$ as
\[
Z_j \sim B\left(v_j\,\, , \frac{p_j}{\sum_{j' \leq j} p_{j'}}\right).
\]
We call  $\{Z_j\}$ the Bernoulli decomposition of $X_i$. 
\end{definition}

\begin{lemma}
For a discrete distribution $X$ and its Bernoulli decomposition $\{Z_j\}$, 
$\max_{j} \{Z_j\}$ has the same distribution with $X$.
\end{lemma}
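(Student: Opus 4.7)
The plan is to compute the probability mass function of $\max_{j} Z_{j}$ directly and show it agrees with that of $X$ by a simple telescoping argument.

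First I would observe that each $Z_{j}$ takes only two values, $0$ and $v_{j}$, and that the values $v_{1} < v_{2} < \cdots < v_{s}$ are strictly increasing. Consequently, for a fixed $k \in [s]$, the event $\{\max_{j} Z_{j} = v_{k}\}$ is exactly the event $\{Z_{k} = v_{k}\} \cap \bigcap_{j > k} \{Z_{j} = 0\}$: indeed if any $Z_{j}$ with $j > k$ were nonzero it would equal $v_{j} > v_{k}$, contradicting the maximum being $v_{k}$; and we need at least one $Z_{j}$ to attain $v_{k}$, which (given the constraint on the indices $>k$) must be $Z_{k}$ itself. The values of $Z_{j}$ for $j < k$ are irrelevant because $v_{j} < v_{k}$.

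Using independence of the $Z_{j}$'s and the definition of the decomposition, I would then write
\[
\Pr[\max_{j} Z_{j} = v_{k}] = \Pr[Z_{k} = v_{k}] \cdot \prod_{j > k} \Pr[Z_{j} = 0] = \frac{p_{k}}{P_{k}} \cdot \prod_{j > k} \left(1 - \frac{p_{j}}{P_{j}}\right),
\]
where $P_{j} := \sum_{j' \le j} p_{j'}$. Since $1 - p_{j}/P_{j} = P_{j-1}/P_{j}$, the product telescopes to $P_{k}/P_{s} = P_{k}$ (as $P_{s} = 1$), giving $\Pr[\max_{j} Z_{j} = v_{k}] = p_{k} = \Pr[X = v_{k}]$.

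Finally I would handle the corner case $\max_{j} Z_{j} = 0$: if $v_{1} > 0$ this event has probability $\prod_{j=1}^{s}(P_{j-1}/P_{j}) = P_{0}/P_{s} = 0$, matching $\Pr[X = 0]$; if $v_{1} = 0$ then $Z_{1} \equiv 0$ and the same telescoping on $j \ge 2$ yields $P_{1} = p_{1} = \Pr[X = 0]$. So the distributions agree on every atom. There is no real obstacle here — the argument is a one-line telescoping after unwinding the definitions — so the main care is only in writing out the two cases ($v_{1} > 0$ vs.\ $v_{1} = 0$) cleanly.
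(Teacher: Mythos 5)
Your proof is correct and follows essentially the same route as the paper's: identify the event $\{\max_j Z_j = v_k\}$ with $\{Z_k = v_k\}\cap\bigcap_{j>k}\{Z_j=0\}$, use independence, and telescope the product $\prod_{j>k}(1-p_j/P_j)=\prod_{j>k}P_{j-1}/P_j$ down to $p_k$. The only difference is your explicit treatment of the $v_1=0$ corner case, which the paper leaves implicit but which its formula already covers.
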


\begin{proof}
	We can easily see the following:
\begin{align*}
\Pr[\max_j \{Z_j\} = v_i] & = \Pr[Z_i = v_i]\prod_{i' > i} \Pr[Z_{i'} = 0] \\
&= 
\frac{p_i}{\sum_{i'\leq i} p_{i'}}\prod_{h > i} \left(1-\frac{p_h}{\sum_{h'\leq h} p_{h'}}\right) \\
& = \frac{p_i}{\sum_{i'\leq i} p_{i'}}\prod_{h > i} \frac{\sum_{h'\leq h - 1} p_{h'}}{\sum_{h'\leq h} p_{h'}} 
= p_i.
\end{align*}
Hence, $\Pr[\max_j \{Z_j\} = v_i]=\Pr[X=v_i]$ for all $i \in [s]$. 
\end{proof}

\begin{algorithm}[t]
	\caption{Discretization}
	\label{Alg:Disc}
	\begin{algorithmic}[1]
		\STATE We first run \texttt{Greedy-K-MAX} to obtain a solution $S_G$ and let $\copt = r_D(S_G)$.
		\FOR {$i=1$ \TO $m$}
		\STATE Compute the Bernoulli decomposition $\{Z_{i,j}\}_j$ of $X_i$.
		 \FORALL{$Z_{i, j}$}
		\STATE 
		Create another Bernoulli variable $\tilde Z_{i,j}$
		as follows:
		\IF {$v_{i,j} > \copt/\varepsilon$}
		\STATE  
		Let $\tilde Z_{i,j} \sim B\left(\frac{\copt}{\varepsilon},  \E[Z_{i,j}]\frac{\varepsilon}{\copt}\right)$ (Case 1)
		\ELSE 
		\STATE  
		Let $\tilde Z_{i,j} = \lfloor\frac{Z_{i,j}}{\varepsilon \copt}\rfloor \varepsilon \copt$ (Case 2)
		\ENDIF
		\ENDFOR
		\STATE Let $\tX_i=\max_j \{\tilde Z_{ij}\}$
		\ENDFOR
	\end{algorithmic}
\end{algorithm}

Now, we describe how to construct the discretization $\tX_i$ of $X_i$ for all $i\in [m]$.
The pseudocode can be found in Algorithm~\ref{Alg:Disc}.
We first run \texttt{Greedy-K-MAX} to obtain a solution $S_G$. 
Let $\copt = r_D(S_G)$.
By Lemma~\ref{lem:max_const}, we know that $\copt\geq (1-1/e)\OPT$.
Then we compute the Bernoulli decomposition $\{Z_{i,j}\}_j$ of $X_i$.
For each $Z_{i,j}$, we create another Bernoulli variable $\tilde Z_{i,j}$
as follows:
Recall that $v_{i,j}$ is the nonzero possible value of $Z_{ij}$. 
We distinguish two cases.
Case 1: 
If $v_{i,j} > \copt/\varepsilon$, then
we let $\tilde Z_{i,j} \sim B\left(\frac{\copt}{\varepsilon}, \E[Z_{i,j}]\frac{\varepsilon}{\copt}\right)$.
It is easy to see that $\E[\tZ_{ij}]=\E[Z_{ij}]$.
Case 2:
If $v_{i,j} \leq \copt/\varepsilon$,  then
we let $\tilde Z_{i,j} = \lfloor\frac{Z_{i,j}}{\varepsilon \copt}\rfloor \varepsilon \copt$.
We note that 
more than one $\tZ_{ij}$'s may have the same support, and 
all $\tZ_{ij}$'s are supported on 
$\DS=\{0,\varepsilon\copt, 2\varepsilon\copt,\ldots, \copt/\varepsilon\}$.
Finally, we let $\tX_i=\max_j \{\tilde Z_{ij}\}$,
which is the discretization of $X_i$.
Since $\tX_i$ is the maximum of a set of Bernoulli distributions,
it is also a discrete distribution supported on $\DS$. 
We can easily compute $\Pr[\tX_i=v]$ for any $v\in \DS$.

Now, we show that the discretization only incurs a small loss in 
the objective value.
The key is to show that we do not lose much in 
the transformation from $Z_{i, j}$'s to $\tZ_{i, j}$'s. 
We prove a slightly more general lemma as follows.

\begin{lemma}
\label{bound}
Consider any set of Bernoulli variables $\{Z_i \sim B(a_i, p_i)\}_{1\le i \le n}$.
Assume that $\E[\max_{i\in [n]} Z_i] < c\copt$, where $c$ is a constant
such that $c\varepsilon<1/2$.
For each $Z_i$, we create a Bernoulli variable $\tZ_i$ in the same way as 
Algorithm~\ref{Alg:Disc}.
Then the following holds:
\[
\E[\max Z_i] \geq \E[\max \tilde Z_i] \geq \E[\max Z_i]-(2c+1)\varepsilon\copt.
\]
\end{lemma}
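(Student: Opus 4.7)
The plan is to handle the two inequalities separately. For the upper bound $\E[\max Z_i] \ge \E[\max \tilde Z_i]$, I would argue by replacing the $Z_i$'s with $\tilde Z_i$'s one at a time and show that each swap weakly decreases the expected maximum. In Case 2 the deterministic coupling $\tilde Z_i = \lfloor Z_i/(\varepsilon\copt)\rfloor \varepsilon\copt \le Z_i$ makes the claim immediate. In Case 1, conditioning on the maximum $Y$ of the remaining variables (independent of $Z_i, \tilde Z_i$), a direct computation gives
\[
\E[\max(Z_i,Y)] - \E[\max(\tilde Z_i,Y)] = p_i\,\E[(a_i-Y)^+] - \tfrac{p_ia_i\varepsilon}{\copt}\,\E[(\copt/\varepsilon - Y)^+],
\]
and the desired non-negativity reduces to the pointwise inequality $(a_i-y)^+ \ge (a_i\varepsilon/\copt)(\copt/\varepsilon - y)^+$ whenever $a_i \ge \copt/\varepsilon$ and $y \ge 0$, which I would verify by splitting into $y \le \copt/\varepsilon$, $\copt/\varepsilon < y \le a_i$, and $y > a_i$.

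For the lower bound, I would split the indices into the Case-1 set $A$ and the Case-2 set $B$, write $M_Z = \max(M^A_Z, M^B_Z)$ and $M_{\tZ} = \max(M^A_{\tZ}, M^B_{\tZ})$, and apply the elementary inequality $\max(a,b) - \max(a',b') \le (a-a')^+ + (b-b')^+$ to reduce the problem to bounding the two pieces in isolation. The Case-B contribution is handled by the same coupling used above, which gives $M^B_{\tZ} \ge M^B_Z - \varepsilon\copt$ pointwise and contributes the ``$+1$'' part of the $(2c+1)$ coefficient.

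The crux of the argument is the Case-A bound. The hypothesis $\E[\max_i Z_i] < c\copt$, combined with the fact that every nonzero outcome of $Z_i$ for $i \in A$ exceeds $\copt/\varepsilon$, forces $\Pr[M^A_Z > 0] \le c\varepsilon$. Taking logarithms and using $c\varepsilon < 1/2$ then yields $\sum_{i\in A} p_i \le -\ln(1-c\varepsilon) \le 2c\varepsilon$, and a first-moment argument shows that $\mu := \sum_{i\in A} a_i p_i$ satisfies $(1-2c\varepsilon)\mu \le \E[M^A_Z] \le \mu$, so $\mu = O(c\copt)$. I would then expand $\E[M^A_Z] = \sum_i a_ip_i \prod_{j > i}(1-p_j)$ and $\E[M^A_{\tZ}] = (\copt/\varepsilon)\bigl(1 - \prod_{i\in A}(1 - p_ia_i\varepsilon/\copt)\bigr)$ via inclusion--exclusion and show that both agree with $\mu$ up to the second-order term, giving $\E[M^A_Z] - \E[M^A_{\tZ}] \le 2c\varepsilon\copt$.

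The main obstacle will be tracking the constants in this Case-A comparison: the Case-1 discretization moves mass from a value $a_i > \copt/\varepsilon$ down to $\copt/\varepsilon$ while inflating the success probability to $p_ia_i\varepsilon/\copt$ in order to preserve the first moment, so the sought-for cancellation only becomes visible once both expectations are expressed as $\mu$ minus a quadratic ``collision'' correction, with the derived constraints $\sum_{i\in A} p_i \le 2c\varepsilon$ and $\mu = O(c\copt)$ controlling that correction uniformly.
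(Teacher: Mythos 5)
Your first inequality, $\E[\max Z_i]\ge\E[\max \tZ_i]$, is handled correctly: the one-at-a-time swap, with Case 2 dominated pointwise and Case 1 reduced to the pointwise inequality $(a_i-y)^+\ge(a_i\varepsilon/\copt)(\copt/\varepsilon-y)^+$ for $a_i\ge\copt/\varepsilon$, is a clean alternative to the paper's induction. Your derivation of $\sum_{i\in A}p_i\le 2c\varepsilon$ and $\mu=O(c\copt)$ via Markov's inequality is also essentially what the paper does.

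The lower bound, however, has a genuine gap at the reduction step. The inequality $\max(a,b)-\max(a',b')\le(a-a')^++(b-b')^+$ leaves you needing to bound $\E[(M^A_Z-M^A_{\tZ})^+]$ under some coupling, which is \emph{not} the quantity $\E[M^A_Z]-\E[M^A_{\tZ}]$ that your inclusion--exclusion computation controls: the positive part does not commute with expectation, and here the two differ by an unacceptable amount. Concretely, take $A=\{1\}$ with $Z_1\sim B(\copt/\varepsilon^2,\,c\varepsilon^2)$, so that $\tZ_1\sim B(\copt/\varepsilon,\,c\varepsilon)$ and $\E[Z_1]=\E[\tZ_1]=c\copt$; under \emph{every} coupling, $\E[(Z_1-\tZ_1)^+]\ge c\varepsilon^2\left(\copt/\varepsilon^2-\copt/\varepsilon\right)=(1-\varepsilon)c\copt$, which is $\Theta(c\copt)$ rather than $O(\varepsilon\copt)$. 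So the Case-$A$ piece cannot be bounded ``in isolation'' the way you intend, and the $(2c+1)$ accounting does not go through. The lemma survives because a heavy value $a_i>\copt/\varepsilon$ only beats the rest of the maximum by $a_i-Y$ with $Y\le\copt/\varepsilon$, and in expectation that excess is governed by first moments, which the Case-1 transformation preserves; this cancellation is invisible once you discard $Y$. To repair your route, keep $Y$ in the comparison: since $Y\le\copt/\varepsilon\le\min_{i\in A}a_i$, $M^A_{\tZ}\in\{0,\copt/\varepsilon\}$, and everything is independent, one has the exact identity
\begin{equation*}
\E[\max(M^A_Z,Y)]-\E[\max(M^A_{\tZ},Y)]=\bigl(\E[M^A_Z]-\E[M^A_{\tZ}]\bigr)+\E[Y]\bigl(\Pr[M^A_{\tZ}>0]-\Pr[M^A_Z>0]\bigr),
\end{equation*}
and both terms are then $O(c^2\varepsilon\copt)$ by your estimates (the second because $\Pr[M^A_{\tZ}>0]-\Pr[M^A_Z>0]\le\sum_{i\in A}(\tilde p_i-p_i)\le\varepsilon\mu/\copt$ and $\E[Y]\le c\copt$). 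The paper avoids the issue differently: it inducts over the variables in decreasing order of $a_i$ and, for each Case-1 variable, charges an error $c\varepsilon a_ip_i$ measured against $\E[\max_{i\ge2}Z_i]\le c\copt$ rather than against the variable in isolation, summing to the $c\sum_{i\in L}\varepsilon a_ip_i$ term in its strengthened induction hypothesis.
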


\begin{proof}
Assume $a_1$ is the largest among all $a_i$'s.

If $a_1 < \copt/\varepsilon$, all $\tilde Z_i$ are created in Case 2.
In this case, it is obvious to have that
\[
\E[\max Z_i] \geq \E[\max \tilde Z_i] 
\geq \E[\max Z_i]-\varepsilon\copt.
\]

If $a_1\geq \copt/\varepsilon$, the proof is slightly more complicated.
Let $L=\{i \mid a_i\geq \copt/\varepsilon \}$.
We prove by induction on $n$
(i.e., the number of the variables) the following more general claim:
\begin{align}
\label{eq:claim}
\E[\max Z_i] \geq 
\E[\max \tilde Z_i] \geq \E[\max Z_i]-\varepsilon\copt - c\sum_{i\in L} \varepsilon a_ip_i.
\end{align}

Consider the base case $n=1$.
The lemma holds immediately in Case 1 as $\E[Z_1] = \E[\tilde{Z_1}]$.

Assuming the lemma is true for $n = k$,
we show it also holds for $n = k + 1$.
Recall 
we have $\tilde Z_1 \sim  B(\frac{\copt}{\varepsilon}, \varepsilon \E[Z_1]/\copt)$. 
Thus
\begin{align*}
\E[\max_{i\geq 1} Z_i] - \E[\max_{i\geq 1} \tilde Z_i] 
=& a_1p_1 + (1 - p_1)\E[\max_{i\geq 2} Z_i] - a_1p_1 - (1 - \varepsilon \E[Z_1]/\copt)\E[\max_{i\geq 2} \tilde Z_i]\\
\geq& (1 - p_1)\E[\max_{i\geq 2} \tZ_i] - (1 - \varepsilon \E[Z_1]/\copt)\E[\max_{i\geq 2} \tZ_i]\\
=& (\varepsilon a_1p_1/\copt - p_1)\E[\max_{i\geq 2} \tZ_i]
\geq 0,
\end{align*}
where the first inequality follows from the induction hypothesis and 
the last from $a_1\geq \copt/\varepsilon$.
The other direction can be seen as follows:
\begin{align*}
\E[\max_{i\ge1} \tilde Z_i] - \E[\max_{i\ge1} Z_i]
= & a_1p_1 + (1 - \varepsilon \E[Z_1]/\copt)\E[\max_{i\geq 2} \tilde Z_i] - 
(a_1p_1 + (1 - p_1)\E[\max_{i\geq 2} Z_i])\\
\geq &  (1 - \varepsilon \E[Z_1]/\copt)\E[\max_{i\geq 2} Z_i] - (1 - p_1)\E[\max_{i\geq 2}  Z_i] -\varepsilon \copt- c\sum_{i\in L\setminus\{1\}} \varepsilon a_ip_i\\
\geq & (- \varepsilon \E[Z_1]/\copt)\E[\max_{i\geq 2}  Z_i]
-\varepsilon \copt- c\sum_{i\in L\setminus\{1\}} \varepsilon a_ip_i\\
\geq & -\varepsilon\copt - c\sum_{i\in L} \varepsilon a_ip_i,
\end{align*}
where the last inequality holds since $\E[\max_{i\geq 2}  Z_i]\leq c\copt$. 
This finishes the proof of \eqref{eq:claim}.

Now, we show that $\sum_{i\in L}  a_ip_i\leq 2\copt$.
This can be seen as follows.
First, we can see from Markov inequality that 
\[
\Pr[\max Z_i > \copt/\varepsilon] \leq c\varepsilon. 
\]
Equivalently, we have $\prod_{i\in L}(1-p_i) \geq 1-c\varepsilon$.
Then, we can see that
\begin{align*}
\copt \geq \sum_{i\in L} a_i \prod_{j<i} (1-p_j) p_i
\geq (1-c\varepsilon) \sum_{i\in L}a_i p_i \geq \frac{1}{2} \sum_{i\in L}a_i p_i.
\end{align*}
Plugging this into \eqref{eq:claim}, we prove the lemma.
\end{proof}

\eat{
\begin{lemma}
After the discretization, \[\E[\max_i X_i] \geq \E[\max_{i,k} Y_{i,k}] \geq \E[\max_i X_i] - O(\varepsilon)\copt.\]
\end{lemma}
\begin{proof}
First, for any $\tilde Z_{i,j}$, $\frac{s(\tilde Z_{i,j})}{\varepsilon \copt} \leq \frac{\copt /\varepsilon}{\varepsilon\copt} < z + 1$, $j$ must be in some index set $\IDX_k$. Hence $\E[\max_{i,j} \lfloor \frac{\tilde Z_{i,j}}{\varepsilon \copt}\rfloor \varepsilon \copt]= \E[\max_{i,k} Y_{i,k}]$.

We have $\forall S$, $\E[\max_{i\in S} X_i] \leq OPT \leq \copt/\alpha$, where $\alpha = 1 - 1/e$. Now we let $c = 1/\alpha$ and by Lemma~\ref{bound} we have:
\[\E[\max_i X_i] = \E[\max_{i,j} Z_{i,j}]\geq \E[\max_{i,j} \tilde Z_{i,j}] \geq \E[\max_{i,j} \lfloor \frac{\tilde Z_{i,j}}{\varepsilon \copt}\rfloor \varepsilon \copt]= \E[\max_{i,k} Y_{i,k}],\]
and
\[\E[\max_{i,j} \tilde Z_{i,j}] \geq (1 - c\varepsilon)\E[\max_{i,j} Z_{i,j}] = (1 - c\varepsilon)\E[\max_i X_i].\]

On the other hand, say $A = \{\tilde Z_{i,j} = s_{i,j}\}$ is a realization of $\{\tilde Z_{i,j}\}$. Define $V[A] = \max_{i,j} s_{i,j}$ and $V'[A] = \max_{i,j}\{\lfloor \frac{s_{i,j}}{\varepsilon \copt}\rfloor\varepsilon \copt\}$. Thus we have
\begin{align*}
&\E[\max_{i,j} \tilde Z_{i,j}] - \E[\max_{i,k} Y_{i,k}]\\
=& \E[\max_{i,j} \tilde Z_{i,j}] - \E[\max_{i,j} \lfloor \frac{\tilde Z_{i,j}}{\varepsilon \copt}\rfloor \varepsilon \copt]\\
=& \sum_{A}V[A]\Pr[A] - \sum_{A}V'[A]\Pr[A]\\
=&\sum_{A}(V[A] - V'[A])\Pr[A]\\
\leq& \max_{A} |V[A] - V'[A]|\\
\leq& \varepsilon \copt
\end{align*}
Therefore 
\[\E[\max_{i,k} Y_{i,k}] \geq \E[\max_{i,j} \tilde Z_{i,j}] - \varepsilon \copt \geq (1 - c\varepsilon)\E[\max_i X_i] -\varepsilon \copt= \E[\max_i X_i] - O(\varepsilon)\copt.\]
\end{proof}
}

\begin{corollary}
	\label{lm:bound2}
	For any set $S \subseteq [m]$, suppose $\E[\max_{i\in S} X_i] < c\copt$, 
	where $c$ is a constant
	such that $c\varepsilon<1/2$.
	Then the following holds:
	\[
	\E[\max_{i\in S} X_i] \geq \E[\max_{i\in S} \tilde X_i] \geq 
	\E[\max_{i\in S} X_i]-(2c+1)\varepsilon\copt.
	\]
\end{corollary}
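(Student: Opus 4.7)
The plan is to reduce the corollary to a direct application of Lemma~\ref{bound}. The key observation is that each $X_i$ has the same distribution as $\max_j Z_{i,j}$ where $\{Z_{i,j}\}_j$ is its Bernoulli decomposition, and $\tilde X_i$ is defined to be $\max_j \tilde Z_{i,j}$. Since the $X_i$'s (for $i \in S$) are independent and the Bernoulli decompositions are constructed separately for each $i$, the entire collection $\{Z_{i,j}\}_{i\in S,\,j}$ consists of mutually independent Bernoulli variables.

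From this, I would first argue that
\[
\max_{i\in S} X_i \;\stackrel{d}{=}\; \max_{i\in S,\,j} Z_{i,j}, \qquad \max_{i\in S} \tilde X_i \;=\; \max_{i\in S,\,j} \tilde Z_{i,j},
\]
so in particular $\E[\max_{i\in S,\,j} Z_{i,j}] = \E[\max_{i\in S} X_i] < c\copt$. Thus the collection $\{Z_{i,j}\}_{i \in S,\,j}$ satisfies the hypothesis of Lemma~\ref{bound} (with the constant $c$ and with the same threshold $\copt$ used in Algorithm~\ref{Alg:Disc}), and each $\tilde Z_{i,j}$ is produced from $Z_{i,j}$ by precisely the Case~1 / Case~2 rule described in that lemma.

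Applying Lemma~\ref{bound} to this collection yields
\[
\E[\max_{i\in S,\,j} Z_{i,j}] \;\ge\; \E[\max_{i\in S,\,j} \tilde Z_{i,j}] \;\ge\; \E[\max_{i\in S,\,j} Z_{i,j}] - (2c+1)\varepsilon\copt,
\]
and rewriting the maxima as $\max_{i\in S} X_i$ and $\max_{i\in S} \tilde X_i$ gives exactly the claimed inequality.

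The proof is essentially bookkeeping, so there is no serious obstacle; the only small point to verify carefully is the independence claim needed to identify $\max_{i\in S} X_i$ with $\max_{i\in S,j} Z_{i,j}$ in distribution. Independence of $\{Z_{i,j}\}_{j}$ within a single $i$ is built into the definition of the Bernoulli decomposition, and independence across $i \in S$ follows from Assumption~\ref{assum:indep} together with the fact that the decomposition of each $X_i$ is a deterministic construction from its marginal distribution $D_i$. With that in hand, the corollary is immediate from Lemma~\ref{bound}.
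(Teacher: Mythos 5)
Your proposal is correct and matches the paper's (implicit) argument: the corollary is stated in the paper as an immediate consequence of Lemma~\ref{bound}, obtained exactly as you describe by applying that lemma to the pooled collection $\{Z_{i,j}\}_{i\in S,\,j}$ of Bernoulli decompositions, using that $\max_{i\in S} X_i$ is distributed as $\max_{i\in S,\,j} Z_{i,j}$ and that $\max_{i\in S}\tilde X_i = \max_{i\in S,\,j}\tilde Z_{i,j}$. Your explicit check of mutual independence across $i\in S$ (from Assumption~\ref{assum:indep}) is the right detail to verify and is the only point the paper leaves unstated.
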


\subsubsection{Signatures}
\eat{
Now for each $X_i$ we have an approximate Bernoulli decomposition $\{Y_{i,k}\}_{k=1}^z$ with $Y_{i,k} = B(k\varepsilon \copt, q_{i,k})$. we define the signature for variable $X_i$ and set $S$ respectively:
\[
Sg(X_i) = (\lfloor \frac{-\log{(1 - q_{i,1})}}{\varepsilon^4/n}\rfloor \frac{\varepsilon^4}{n}, \ldots, \lfloor \frac{-\log{(1 - q_{i,z - 1})}}{\varepsilon^4/n}\rfloor \frac{\varepsilon^4}{n}),
\]
and
\[
Sg(S) = \sum_{i\in S} Sg(X_i).
\]
}
For each $X_i$, we have created its discretization 
$\tX_i=\max_j \{\tilde Z_{ij}\}$.
Since $\tX_i$ is a discrete distribution, we can define its
Bernoulli decomposition $\{Y_{ij}\}_{j\in [h]}$ where $h=|\DS|$.
Suppose $Y_{ij} \sim B(j\varepsilon\copt, q_{ij})$.
Now, we define the signature of $X_i$ to be the vector
$
\Sg(X_i) = (\Sg(X_i)_1,\ldots, \Sg(X_i)_h)
$ 
where
\[
\Sg(X_i)_j=
\min \left(\left\lfloor \frac{-\ln{(1 - q_{ij})}}{\varepsilon^4/m}\right\rfloor, 
\,\,
\left\lfloor \frac {\ln(1/\varepsilon^4)}{\varepsilon^4/m}\right\rfloor\right)
\cdot\frac{\varepsilon^4}{m}
\qquad
 j\in [h].
\]
For any set $S$, define its signature to be 
\[
\Sg(S)=\sum_{i\in S} \Sg(X_i).
\]

Define the set $\SG$ of {\em signature vectors} to be all nonnegative $h$-dimensional vectors, where each coordinate is an integer multiple of 
$\varepsilon^4/m$ and at most $m\ln(1/\varepsilon^4)$.
Clearly, the size of $\SG$ is $O\left(
\left(m \varepsilon^{-4}\log(h/\varepsilon^2)\right)^{h-1}
\right)=\tilde O(m^{O(1/\varepsilon^2)})$, which is  polynomial for any fixed constant $\varepsilon>0$ (recall $h=|\DS|=O(1/\varepsilon^2)$).

Now, we prove the following crucial lemma.
\begin{lemma}
\label{lm:sig}
Consider two sets $S_1$ and $S_2$.
If $\Sg(S_1) = \Sg(S_2)$, the following holds:
\[
\left|\E[\max_{i\in S_1} \tX_i] - \E[\max_{i\in S_2} \tX_i]\right| \leq O(\varepsilon)\copt.
\]
\end{lemma}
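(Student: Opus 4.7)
My plan is to compare the two distributions $\max_{i\in S_1}\tX_i$ and $\max_{i\in S_2}\tX_i$ through their CDFs on the grid $\DS = \{0, \varepsilon\copt, 2\varepsilon\copt, \ldots, \copt/\varepsilon\}$, and then recover the difference in expectations via the tail-sum identity
\[
\E\!\left[\max_{i\in S_l}\tX_i\right] \;=\; \varepsilon\copt \sum_{k=0}^{h-1}\bigl(1 - A_l(k)\bigr), \qquad A_l(k) := \Pr\!\left[\max_{i\in S_l}\tX_i \le k\varepsilon\copt\right],
\]
where $h = 1/\varepsilon^2$. Since there are only $h = O(1/\varepsilon^2)$ grid points and the prefactor is $\varepsilon\copt$, it suffices to establish the per-grid bound $|A_1(k)-A_2(k)| = O(\varepsilon^2)$ for each $k$.

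First I would rewrite $A_l(k)$ using the Bernoulli decomposition $\{Y_{ij}\}$ of $\tX_i$. Independence gives
\[
A_l(k) \;=\; \prod_{i \in S_l} \prod_{j=k+1}^{h}(1-q_{ij}) \;=\; \exp\!\bigl(-B_l(k)\bigr), \qquad B_l(k) := \sum_{i\in S_l}\sum_{j>k} b_{ij},
\]
with $b_{ij} := -\ln(1-q_{ij}) \ge 0$. Next, set $\tilde b_{ij} := \min\{b_{ij},\,\ln(1/\varepsilon^4)\}$ and $\tilde B_l(k) := \sum_{i\in S_l}\sum_{j>k}\tilde b_{ij}$. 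By construction $\Sg(X_i)_j$ differs from $\tilde b_{ij}$ by at most $\varepsilon^4/m$, so $|\Sg(S_l)_j - \sum_{i\in S_l}\tilde b_{ij}| \le K\varepsilon^4/m \le \varepsilon^4$. The hypothesis $\Sg(S_1) = \Sg(S_2)$ therefore yields, after summing over $j > k$,
\[
|\tilde B_1(k) - \tilde B_2(k)| \;\le\; 2h\varepsilon^4 \;=\; O(\varepsilon^2).
\]

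The hard part is that this closeness is only for the \emph{capped} sums $\tilde B_l(k)$, while the CDF $A_l(k)=e^{-B_l(k)}$ involves the uncapped $B_l(k)$, and the gap $B_l(k)-\tilde B_l(k)$ can be arbitrarily large when some $q_{ij}$ is very close to $1$. I would resolve this through a dichotomy on the threshold $\tau := \ln(1/\varepsilon^2)$. In the case $\min(\tilde B_1(k),\tilde B_2(k)) \ge \tau$, then since $B_l(k)\ge\tilde B_l(k)$, both $A_l(k) \le e^{-\tau} = \varepsilon^2$, giving $|A_1(k)-A_2(k)| \le 2\varepsilon^2$. In the opposite case $\min(\tilde B_1(k),\tilde B_2(k)) < \tau$, the signature bound forces $\max(\tilde B_1(k),\tilde B_2(k)) < \tau + O(\varepsilon^2) < \ln(1/\varepsilon^4)$ for $\varepsilon<1/2$; the key observation is that if any $b_{ij}$ with $i\in S_l$, $j>k$ exceeded $\ln(1/\varepsilon^4)$, then that single term would already push $\tilde B_l(k)$ to at least $\ln(1/\varepsilon^4)$, a contradiction. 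Thus no truncation occurs in the relevant range, so $B_l(k) = \tilde B_l(k)$ for $l=1,2$, and using the $1$-Lipschitz property of $e^{-x}$ on $[0,\infty)$,
\[
|A_1(k)-A_2(k)| \;=\; \bigl|e^{-\tilde B_1(k)}-e^{-\tilde B_2(k)}\bigr| \;\le\; |\tilde B_1(k)-\tilde B_2(k)| \;=\; O(\varepsilon^2).
\]

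Combining the two cases gives $|A_1(k)-A_2(k)| = O(\varepsilon^2)$ uniformly in $k$, and plugging into the tail-sum identity yields
\[
\bigl|\E[\max_{i\in S_1}\tX_i] - \E[\max_{i\in S_2}\tX_i]\bigr| \;\le\; \varepsilon\copt \cdot h \cdot O(\varepsilon^2) \;=\; O(\varepsilon)\copt,
\]
as desired. The main conceptual obstacle is precisely the truncation in the definition of $\Sg(X_i)_j$: the dichotomy above is what makes the loss of information from truncation benign, because truncation is only significant when the corresponding CDF value is already negligible.
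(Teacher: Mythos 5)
Your proof is correct, but it takes a genuinely different route from the paper's. The paper decomposes $\max_{i\in S}\tX_i$ as $\max_k Y_k(S)$, where $Y_k(S)=\max_{i\in S}Y_{ik}$ is a Bernoulli variable $B(k\varepsilon\copt,\,p_k(S))$ with $p_k(S)=1-\prod_{i\in S}(1-q_{ik})$; the signature equality gives the per-level bound $|p_k(S_1)-p_k(S_2)|\le 2\varepsilon^4$, and the conversion to expectations goes through the $L_1$-distance lemma for product distributions (Lemma~\ref{lem:l1-product-distribuction}) together with the bound $\max_k Y_k(S)\le\copt/\varepsilon$, yielding $4h\varepsilon^3\copt$. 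You instead compare the full CDFs $A_l(k)=\prod_{i\in S_l}\prod_{j>k}(1-q_{ij})$ at each grid point and recover the expectation gap through the tail-sum identity; the accounting is different ($h$ thresholds, each off by $O(h\varepsilon^4)$, times the grid spacing $\varepsilon\copt$, versus $h$ levels, each off by $O(\varepsilon^4)$, times the range $\copt/\varepsilon$), but it lands on the same $O(\varepsilon)\copt$. One thing your version buys: you explicitly address the truncation at $\ln(1/\varepsilon^4)$ in the definition of $\Sg(X_i)_j$ via the dichotomy at threshold $\ln(1/\varepsilon^2)$, whereas the paper's step $|p_k(S_1)-p_k(S_2)|\le 2\varepsilon^4$ silently relies on the analogous observation (if truncation bites at level $k$ for one set, then both products $\prod_{i\in S_l}(1-q_{ik})$ are already below roughly $\varepsilon^4$, so their difference is still $O(\varepsilon^4)$); your write-up is more careful on that point.
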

\begin{proof}
Suppose $\{Y_{ij}\}_{j\in [h]}$ is the Bernoulli decomposition of $\tX_i$.
For any set $S$,
we define $Y_k(S) = \max_{i\in S} Y_{ik}$
(it is the max of a set of Bernoulli distributions). 
It is not hard to see that $Y_k(S)$
has a Bernoulli distribution $B(k\varepsilon \copt, p_k(S))$ 
with $p_k(S) = 1 - \prod_{i\in S}(1 - q_{ik})$. 
As $\Sg(S_1) = \Sg(S_2)$, we have that
\begin{align*}
|p_k(S_1) - p_k(S_2)| 
&=|\prod_{i\in S_1}(1-q_{ik})-\prod_{i\in S_2}(1-q_{ik})| \\
&=
\left|\exp\left(\sum_{i\in S_1}\ln(1-q_{ik})\right)
-\exp\left(\sum_{i\in S_2}\ln(1-q_{ik})\right)\right| \\
&\leq 2\varepsilon^4 \qquad\forall k\in [h].
\end{align*}
Noticing $\max_{i\in S}\tX_i=\max_k Y_k(S)$, we have that
\begin{align*}
\left|\E[\max_{i\in S_1} \tX_i] - \E[\max_{i\in S_2} \tX_i]\right| 
=& \left|\E[\max_k Y_k(S_1)] - \E[\max_k Y_k(S_2)]\right| \\
\leq & \frac\copt\varepsilon \left(\sum_k |p_k(S_1) - p_k(S_2)| \right)\\
\leq & 4h\varepsilon^{3} \copt = O(\varepsilon)\copt
\end{align*}
where the first inequality follows from 
Lemma~\ref{lem:l1-product-distribuction}.
\end{proof}

For any signature vector $\sg$, we associate to it a set of random variables
$\{B_k \sim B(k\varepsilon \copt, 1 - e^{-\sg_k})\}_{k = 1}^h$.\footnote{
	It is not hard to see the signature of $\max_{k \in[h] } B_k$ is exactly $\sg$.
}  
Define the value of $\sg$ to be $\PV(\sg) = \E[\max_{k \in [h]} B_k]$.

\begin{corollary}
\label{bound:PV}
For any feasible set $S$ with $\Sg(S) = \sg$, 
$|\E[\max_{i\in S} \tX_i] - \PV(\sg)| \leq O(\varepsilon)\copt$.
Moreover, combining with Corollary~\ref{lm:bound2},
we have that $|\E[\max_{i\in S} X_i] - \PV(\sg)| 
\leq O(\varepsilon)\copt.$
\end{corollary}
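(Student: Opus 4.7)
The plan is to essentially re-run the argument of Lemma~\ref{lm:sig}, but comparing the actual set $S$ against the ``virtual set'' encoded by the signature $\sg$, and then invoke Corollary~\ref{lm:bound2} for the second claim.

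First, I would fix $S$ with $\Sg(S) = \sg$ and expand both quantities in the same form. Writing $\{Y_{ij}\}_{j \in [h]}$ for the Bernoulli decomposition of $\tX_i$ and setting $Y_k(S) = \max_{i \in S} Y_{ik} \sim B(k\varepsilon\copt, p_k(S))$ with $p_k(S) = 1 - \prod_{i\in S}(1-q_{ik})$, we have $\max_{i\in S}\tX_i = \max_k Y_k(S)$. On the other hand, $\PV(\sg) = \E[\max_k B_k]$ with $B_k \sim B(k\varepsilon\copt, 1 - e^{-\sg_k})$. The $Y_k(S)$'s across $k$ are independent (since Bernoulli decompositions produce independent Bernoullis per index), as are the $B_k$'s. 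So the task reduces to bounding the expected-max difference between two product distributions on $\DS^h$.

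Second, I would show the key estimate $|p_k(S) - (1-e^{-\sg_k})| = O(\varepsilon^4)$ for every $k$. This is exactly the computation inside the proof of Lemma~\ref{lm:sig}: rewrite $p_k(S) = 1 - \exp(\sum_{i\in S}\ln(1-q_{ik}))$, and compare the exponent with $-\sg_k = -\sum_{i\in S}\Sg(X_i)_k$. For each $i$ whose $-\ln(1-q_{ik})$ lies below the cap $\ln(1/\varepsilon^4)$, the rounding gives $|\Sg(X_i)_k + \ln(1-q_{ik})| \le \varepsilon^4/m$, and summing over $|S| \le K \le m$ such entries yields an aggregate discrepancy at most $\varepsilon^4$; the exponential is $1$-Lipschitz on $(-\infty,0]$, which converts this into $O(\varepsilon^4)$. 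For indices where some $i$ is capped, both $\prod_{i\in S}(1-q_{ik})$ and $e^{-\sg_k}$ are at most $\varepsilon^4$ (since $\sg_k \geq \ln(1/\varepsilon^4)$ and one factor in the product is $< \varepsilon^4$), so their difference is again $O(\varepsilon^4)$.

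Third, I would apply Lemma~\ref{lem:l1-product-distribuction} to the joint distributions of $(Y_k(S))_k$ and $(B_k)_k$: the $L_1$ distance is bounded by $\sum_k 2|p_k(S) - (1-e^{-\sg_k})| = O(h \cdot \varepsilon^4) = O(\varepsilon^2)$ since $h = |\DS| = O(1/\varepsilon^2)$. The function $(y_1,\ldots,y_h) \mapsto \max_k y_k$ is bounded by $\copt/\varepsilon$, so the expected values differ by at most $\frac{\copt}{\varepsilon} \cdot O(\varepsilon^2) = O(\varepsilon)\copt$, giving the first inequality.

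For the ``moreover'' part, I would apply Corollary~\ref{lm:bound2} to $S$. Any feasible $S$ satisfies $\E[\max_{i\in S} X_i] \le \OPT \le \copt/(1 - 1/e) = O(\copt)$ by Lemma~\ref{lem:max_const}, so the constant $c$ in the hypothesis of Corollary~\ref{lm:bound2} is an absolute constant, and we may assume $\varepsilon$ is small enough that $c\varepsilon < 1/2$. Hence $|\E[\max_{i\in S} X_i] - \E[\max_{i\in S}\tX_i]| \le (2c+1)\varepsilon\copt = O(\varepsilon)\copt$, and the triangle inequality with the first bound finishes the proof. The main (mild) obstacle is the bookkeeping around the cap $\ln(1/\varepsilon^4)$ in the definition of $\Sg(X_i)_k$, handled by the case split above.
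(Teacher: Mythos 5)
Your proposal is correct and follows exactly the route the paper intends: the paper leaves this corollary unproved except for the footnote observing that the virtual variables $B_k$ realize the signature $\sg$, so the claim follows by re-running the argument of Lemma~\ref{lm:sig} against $(B_k)_k$ and then applying Corollary~\ref{lm:bound2} with the triangle inequality, which is precisely what you do. Your explicit case split around the cap $\ln(1/\varepsilon^4)$ is a welcome addition, since the footnote's claim that $\Sg(\max_k B_k)=\sg$ is only literally true below the cap, and your direct bound on $|p_k(S)-(1-e^{-\sg_k})|$ handles the capped case cleanly.
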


\eat{
\begin{corollary}
\label{mono}
For any set $S$ with $\Sg(S) \geq \sg$ (entrywise), $\E[\max_{i\in S} \tX_i] \geq \PV(\sg) - O(\varepsilon)\copt$.
Hence,
combining with Corollary~\ref{lm:bound2}, $\E[\max_{i\in S} X_i] \geq \PV(\sg) - O(\varepsilon)\copt$.
\end{corollary}
\begin{proof}
It is easy to verify that
if $\sg_1\geq \sg_2$ (entrywise),
$\PV(\sg_2)\geq \PV(\sg_2)$.
Using Corollary~\ref{bound:PV}, we know that 
$\E[\max_{i\in S} \tX_i]\geq \PV(\Sg(S))-O(\varepsilon)$.
The lemma follows immediately.
\end{proof}
}

\subsubsection{Enumerating Signatures}

Our algorithm enumerates all signature vectors $\sg$ in $\SG$.
For each $\sg$, we check if we can find a set $S$ of size $K$
such that $\Sg(S)= \sg$. 
This can be done by a standard dynamic program in 
$\tilde O(m^{O(1/\varepsilon^2)})$ time as follows:
We use Boolean variable $R[i][j][\sg']$ to represent whether signature vector
$\sg'\in \SG$ can be dominated by $i$ variables 
in set $\{X_1, \ldots, X_j\}$.
The dynamic programming recursion is 
\[
R[i][j][\sg']= R[i][j - 1][\sg'] \wedge R[i-1][j - 1][\sg'-\Sg(X_j)].
\]

If the answer is yes (i.e., we can find such $S$),
we say $\sg$ is a feasible signature vector and $S$ is a candidate set.
Finally, we pick the candidate set with maximum $r_D(S)$
and output the set.
The pseudocode can be found in Algorithm~\ref{Alg:PTAS}.

\begin{algorithm}[t]
	\caption{\texttt{PTAS-K-MAX}}
	\label{Alg:PTAS}
	\begin{algorithmic}[1]
		\STATE $U \leftarrow \emptyset$
	\FORALL {signature vector $\sg\in \SG$}
		\STATE Find a set $S$ such that $|S|=K$ and 
		$\Sg(S) = \sg$
		\IF {$r_D(S) > r_D(U)$} 
		\STATE  $U \leftarrow S$
		\ENDIF
		\ENDFOR
		\ENSURE $U$
	\end{algorithmic}
\end{algorithm}

Now, we are ready to prove Theorem~\ref{thm:kmax} by showing Algorithm~\ref{Alg:PTAS} is a PTAS for the $K$-MAX problem.

\begin{proof}[Proof of Theorem~\ref{thm:kmax}]
Suppose $S^*$ is the optimal solution 
and $\sg^*$ is the signature of $S^*$. 
By Corollary~\ref{bound:PV}, we have that
$|\OPT-\PV(\sg^*)|\leq O(\varepsilon)\copt$. 

When Algorithm~\ref{Alg:PTAS} is enumerating $\sg^*$,
it can find a set $S$ such that $\Sg(S)=\sg^*$ (there exists at 
least one such set since $S^*$ is one). 
Therefore, we can see that 
\[
|\E[\max_{i\in S} X_i] - \E[\max_{i\in S^*} X_i]| 
\leq |\PV(\sg^*) - \max_{i\in S} X_i| + |\PV(\sg^*) - \E[\max_{i\in S^*} X_i]| \leq O(\varepsilon) \copt.
\] 
Let $U$ be the output of  Algorithm~\ref{Alg:PTAS}.
Since $\copt\geq (1-1/e)\OPT$,
we have $r_D(U)\geq r_D(S) = \E[\max_{i\in S} X_i] \geq (1 - O(\varepsilon))\OPT.$

The running time of the algorithm is 
polynomial for a fixed constant $\varepsilon>0$, since the number of signature vectors is polynomial 
and the dynamic program in each iteration also runs in 
polynomial time. Hence, we have a PTAS for the $K$-MAX
problem. 
\end{proof}

\paragraph{Remark.}
In fact, Theorem~\ref{thm:kmax} can be generalized in the following way:
instead of the cardinality constraint $|S|\leq K$, 
we can have more general combinatorial constraint on the feasible set $S$.
As long as we can execute line 3 in Algorithm~\ref{Alg:PTAS} in polynomial time,
the analysis wound be the same.
Using the same trick as in~\cite{LiD11},
we can extend the dynamic program to a more general class of combinatorial 
constraints
where there is a pseudo-polynomial time for the exact version\footnote{
	In the exact version of a problem, we ask for a feasible set $S$
	such that total weight of $S$ is exactly a given target value $B$.
	For example, in the exact spanning tree problem where each edge has
	an integer weight, we would like to find
	a spanning tree of weight exactly $B$.
	}
of the deterministic version of the corresponding problem.
The class of constraints includes $s$-$t$ simple paths, knapsacks, spanning trees, matchings, etc.

\section{Empirical Comparison between the SDCB Algorithm and Online Submodular Maximization on the $K$-MAX Problem} \label{appdx:experiment}

We perform experiments to compare the SDCB algorithm with the online submodular maximization algorithm in \cite{Streeter2008}, on the $K$-MAX problem.

\paragraph{Online Submodular Maximization.}
First we briefly describe the online submodular maximization problem considered in \cite{Streeter2008} and the algorithm therein.
At the beginning, an oblivious adversary sets a sequence of submodular functions $f_1, f_2, \ldots, f_T$ on $2^{[m]}$, where $f_t$ will be used to determine the reward in the $t$-th round.
In the $t$-th round, if the player selects a feasible super arm $S_t$, the reward will be $f_t(S_t)$.
This model covers the $K$-MAX problem as an instance: suppose $X^{(t)} = (X^{(t)}_1, \ldots, X^{(t)}_m) \sim D$ is the outcome vector sampled in the $t$-th round, then the function $f_t(S) = \max_{i\in S} X^{(t)}_i$ is submodular and will determine the reward in the $t$-th round.
We summarize the algorithm in Algorithm~\ref{alg:online-submod-max}. It uses $K$ copies of the \texttt{Exp3} algorithm (see \cite{auer2002nonstochastic} for an introduction).
For the $K$-MAX problem, Algorithm~\ref{alg:online-submod-max} achieves an $O(K \sqrt{mT \log m})$ upper bound on the $(1-1/e)$-approximation regret.

\begin{algorithm}[t]
	\caption{Online Submodular Maximization \cite{Streeter2008}}  
	\label{alg:online-submod-max}
	\begin{algorithmic}[1]
		\STATE Let $\A_1, \A_2, \ldots, \A_K$ be $K$ instances of \texttt{Exp3}
		\vspace{3pt}
		
		\FOR{$t=1, 2, \ldots$}
		\STATE // Action in the $t$-th round
		\FOR{$i=1$ \TO $K$}
		\STATE Use $\A_i$ to select an arm $a_{t, i} \in [m]$
		\ENDFOR
		\STATE Play the super arm $S_t \leftarrow \bigcup_{i=1}^K \{a_{t, i} \}$
		\FOR{$i=1$ \TO $K$}
		\STATE Feed back $f_t (\bigcup_{j=1}^i \{a_{t, j} \} ) - f_t (\bigcup_{j=1}^{i-1} \{a_{t, j} \} )$ as the payoff $\A_i$ receives for choosing $a_{t, i}$
		\ENDFOR 
		\ENDFOR
	\end{algorithmic}
\end{algorithm}

\paragraph{Setup.}
We set $m=9$ and $K=3$, i.e., there are $9$ arms in total and it is allowed to select at most $3$ arms in each round.
We compare the performance of \texttt{SDCB}/\texttt{Lazy-SDCB} and the online submodular maximization algorithm on four different distributions.
Here we use the greedy algorithm \texttt{Greedy-K-MAX} (Algorithm~\ref{alg:greedy-kmax}) as the offline oracle.

Let $X_i \sim D_i$ ($i=1, \ldots, 9$). We consider the following distributions.
For all of them, the optimal super arm is $S^* = \{1, 2, 3\}$.
\begin{itemize}
	\item Distribution 1:
	All $D_i$'s have the same support $\{0, 0.2, 0.4, 0.6, 0.8, 1\}$.
	
	For $i\in\{1, 2, 3\}$, $\Pr[X_i = 0] = \Pr[X_i = 0.2] = \Pr[X_i = 0.4] = \Pr[X_i = 0.6] = \Pr[X_i = 0.8] = 0.1$ and $\Pr[X_i = 1] = 0.5$.
	
	For $i\in\{4, 5, 6, \ldots, 9\}$, $\Pr[X_i = 0] = 0.5$ and $ \Pr[X_i = 0.2] = \Pr[X_i = 0.4] = \Pr[X_i = 0.6] = \Pr[X_i = 0.8] = \Pr[X_i = 1] = 0.1$.
	
	\item Distribution 2:
	All $D_i$'s have the same support $\{0, 0.2, 0.4, 0.6, 0.8, 1\}$.
	
	For $i\in\{1, 2, 3\}$, $\Pr[X_i = 0] = \Pr[X_i = 0.2] = \Pr[X_i = 0.4] = \Pr[X_i = 0.6] = \Pr[X_i = 0.8] = 0.1$ and $\Pr[X_i = 1] = 0.5$.
	
	For $i\in\{4, 5, 6, \ldots, 9\}$, $\Pr[X_i = 0] = \Pr[X_i = 0.2] = \Pr[X_i = 0.4] = \Pr[X_i = 0.6] = \Pr[X_i = 0.8] = 0.12$ and $\Pr[X_i = 1] = 0.4$.
	
	\item Distribution 3:
	All $D_i$'s have the same support $\{0, 0.2, 0.4, 0.6, 0.8, 1\}$.
	
	For $i\in\{1, 2, 3\}$, $\Pr[X_i = 0] = \Pr[X_i = 0.2] = \Pr[X_i = 0.4] = \Pr[X_i = 0.6] = \Pr[X_i = 0.8] = 0.1$ and $\Pr[X_i = 1] = 0.5$.
	
	For $i\in\{4, 5, 6\}$, $\Pr[X_i = 0] = \Pr[X_i = 0.2] = \Pr[X_i = 0.4] = \Pr[X_i = 0.6] = \Pr[X_i = 0.8] = 0.12$ and $\Pr[X_i = 1] = 0.4$.
	
	For $i\in\{7, 8, 9\}$, $\Pr[X_i = 0] = \Pr[X_i = 0.2] = \Pr[X_i = 0.4] = \Pr[X_i = 0.6] = \Pr[X_i = 0.8] = 0.16$ and $\Pr[X_i = 1] = 0.2$.
	
	\item Distribution 4:
	All $D_i$'s are continuous distributions on $[0, 1]$.
	
	For $i\in\{1, 2, 3\}$, $D_i$ is the uniform distribution on $[0, 1]$.
	
	For $i\in\{4, 5, 6, \ldots, 9\}$, the probability density function (PDF) of $X_i$ is
	\[
	f(x) = \begin{cases}
	1.2 & \quad x \in [0, 0.5],\\
	0.8 & \quad x \in (0.5, 1].
	\end{cases}
	\]
\end{itemize}

These distributions represent several different scenarios.
Distribution 1 is relatively ``easy'' because the suboptimal arms 4-9's distribution is far away from arms 1-3's distribution,
 whereas distribution 2 is ``hard'' since the distribution of arms 4-9 is close to the distribution of arms 1-3.
In distribution 3, the distribution of arms 4-6 is close to the distribution of arms 1-3's, while arms 7-9's distribution is further away.
Distribution 4 is an example of a group of continuous distributions for which \texttt{Lazy-SDCB} is more efficient than \texttt{SDCB}.

We use \texttt{SDCB} for distributions 1-3, and \texttt{Lazy-SDCB} (with known time horizon) for distribution 4.
Figure \ref{fig} shows the regrets of both \texttt{SDCB} and the online submodular maximization algorithm.
We plot 
 the $1$-approximation regrets instead of the $(1-1/e)$-approximation regrets, since the greedy oracle usually performs much better than its $(1-1/e)$-approximation guarantee.
We can see from Figure~\ref{fig} that our algorithms achieve much lower regrets in all examples.

\begin{figure}[t] 
	\centering
	\subfigure[Distribution 1] { \label{fig:dist1}     
		\includegraphics[width=0.48\textwidth]{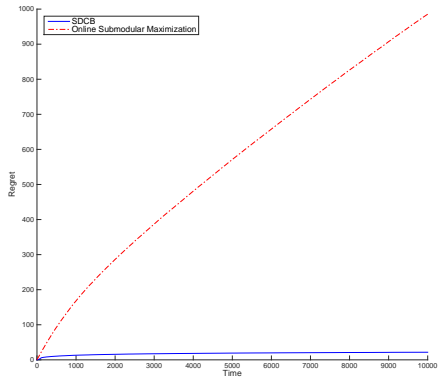}  
		}
		\subfigure[Distribution 2] { \label{fig:dist2}     
			\includegraphics[width=0.48\textwidth]{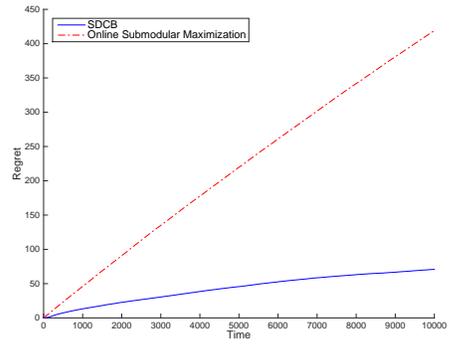} 
			}
			\subfigure[Distribution 3] { \label{fig:dist3}     
				\includegraphics[width=0.48\textwidth]{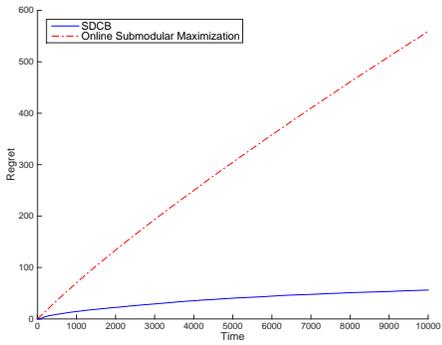}  
				}
				\subfigure[Distribution 4] { \label{fig:dist4}     
					\includegraphics[width=0.48\textwidth]{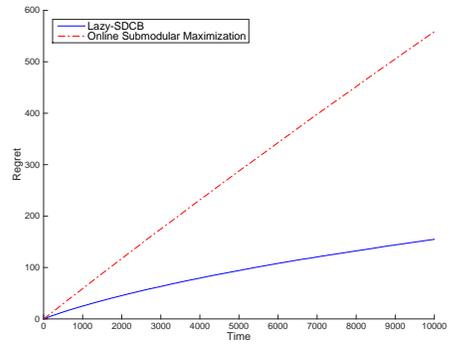}  
				}
				\caption{Regrets of \texttt{SDCB}/\texttt{Lazy-SDCB} and Algorithm~\ref{alg:online-submod-max} on the $K$-MAX problem, for  distributions 1-4. The regrets are averaged over $20$ independent runs.}
				\label{fig}
				\end{figure}

\end{document}